\documentclass[twoside]{article}

\usepackage{amsthm}

\usepackage{amssymb,graphicx,color}
\usepackage{amsfonts}
\usepackage{latexsym}
\usepackage{xcolor}
\usepackage[colorlinks=true]{hyperref}
\definecolor{linkblue}{RGB}{0,0,128}
\hypersetup{
    linkcolor=linkblue,
    citecolor=linkblue,
    urlcolor=linkblue
}
\usepackage{mathtools}
\mathtoolsset{showonlyrefs}
\usepackage{amsmath}
\usepackage[disable]{todonotes}
\usepackage{algorithm}
\usepackage{algpseudocode}
\usepackage{graphicx}
\usepackage{placeins}
\usepackage{enumitem}
\usepackage{thmtools}
\usepackage{thm-restate}
\usepackage{booktabs,tabularx}
\usepackage{url}

\newtheorem{theorem}{Theorem}

\newtheorem{proposition}[theorem]{Proposition}

\newtheorem{definition}[theorem]{Definition}

\DeclareMathOperator{\MMD}{MMD}
\DeclareMathOperator{\supp}{supp}
\DeclareMathOperator{\erfc}{erfc}
\DeclareMathOperator{\erfi}{erfi}

\usepackage[preprint]{aistats2027}

\usepackage[round]{natbib}

\begin{document}

\twocolumn[

\aistatstitle{On the Optimisation Landscape of Overparameterised Weighted MMD}

\aistatsauthor{Tharrshann Jayan Logarajah \And Jeremias Knoblauch \And  Louis Sharrock}

\aistatsaddress{University College London \And University College London \And University College London}

]

\begin{abstract}
   We study the optimisation landscape of the maximum mean discrepancy (MMD) over weighted atomic measures. For the one-dimensional Gaussian kernel, we show that, for any compactly supported target distribution, sufficiently many particles ensure that every local minimum has arbitrarily small MMD. Moreover, any stationary point whose MMD remains bounded away from zero is unstable under an unweighted gradient flow over the particle locations. These results hold both when the weights are constrained to the simplex, and when they are unconstrained but optimised at each step. When the target is itself finitely supported, we further strengthen this result: beyond an explicit particle threshold, there are no spurious local minima, and every spurious stationary point is unstable. To our knowledge, these are the first finite-particle overparameterisation guarantees of their kind for weighted MMD optimisation.
\end{abstract}

\section{Introduction}

A common problem in statistics, machine learning, and numerical analysis is approximating a probability measure using a finite collection of representative points.
This task, commonly referred to as quantisation \citep{graf2000foundations}, arises in various applications, including experimental design \citep{kiefer1959optimum}, numerical integration and quadrature \citep{mahsereci2026bayesian}, and the compression of Monte Carlo output \citep{riabiz2022optimal}.

A natural way to formalise this problem is to choose a discrepancy between probability measures and minimise it over a suitable approximating family.
We focus here on the maximum mean discrepancy (MMD), a kernel-based integral probability metric widely used for comparing probability distributions \citep{gretton2007kernel,JMLR:v13:gretton12a}.
Given a reproducing kernel Hilbert space (RKHS) $\mathcal H_k$ with kernel $k$, the MMD between probability measures $P$ and $Q$ is defined by
\begin{equation}
    \label{eq:intro-mmd}
    \MMD(P,Q)=\sup_{\|f\|_{\mathcal H_k}\leq 1}\left|\int f\,\mathrm dP-\int f\,\mathrm dQ\right|.
\end{equation}
For a characteristic kernel, the $\MMD$ defines a metric over probability measures.
Moreover, for functions in the RKHS, a small $\MMD$ approximation error implies a low integration error when expectations under the target $P^*$ are replaced by expectations under $P$.
This, in addition to its generality, practicality, and favourable sample complexity, motivates the use of the MMD over alternative discrepancies in numerical integration and related problems \citep[e.g.,][]{chen2026stationarymmdpoints}.

At the level of probability measures, MMD minimisation has a particularly simple optimisation structure.
For a convex class $\mathcal Q\subseteq\mathcal P(\mathcal X)$, consider
\begin{equation}
    \label{eq:intro-measure-mmd}
    \min_{P\in\mathcal Q}\MMD^2(P,P^*).
\end{equation}
The objective is convex in $P$, since the kernel mean embedding is linear in the measure and the squared RKHS norm is convex.
In particular, if $\mathcal Q=\mathcal P(\mathcal X)$ and $k$ is characteristic, then the unique minimiser is simply $P=P^*$.

In practice, exactly representing an arbitrary probability measure is rarely feasible, and one instead seeks an approximate representation involving finitely many particles.
Let $\boldsymbol{x}=(x_1,\ldots,x_N)\in\mathcal X^N$ denote the locations of the particles, and $\mathbf w=(w_1,\ldots,w_N)\in\Delta^{N-1}$ their weights, where $\smash{\Delta^{N-1}=\{\mathbf w\in[0,1]^N:\sum_{i=1}^N w_i=1\}}$.
In place of \eqref{eq:intro-measure-mmd}, we are then interested in solving
\begin{equation}
    \label{eq:mmdquant}
    \min_{\mathbf w\in\Delta^{N-1},\,\boldsymbol{x}\in\mathcal X^N}\MMD^2\big(\sum_{i=1}^N w_i\delta_{x_i},P^*\big).
\end{equation}
This problem is commonly referred to as weighted quantisation  \citep[e.g.,][]{belhadji2026weighted}. 
To simplify the analysis, it is common also to consider a relaxed formulation in which the weights are unconstrained and optimised at each particle configuration \citep{belhadji2026weighted}.
The passage from \eqref{eq:intro-measure-mmd} to \eqref{eq:mmdquant} fundamentally changes the nature and the difficulty of the optimisation problem.
Indeed, while the objective in \eqref{eq:intro-measure-mmd} is convex as a function of the measure, the restriction to measures supported on at most $N$ points yields a nonconvex feasible class, and the resulting objective \eqref{eq:mmdquant} is generally nonconvex in the particle locations.

The difficulty of this problem is very well recognised.
In kernel quadrature and MMD quantisation, \citet{belhadji2019kernel}, \citet{teymur2021optimal}, and \citet{belhadji2026weighted} all emphasise the nonconvexity of optimising particle locations, even when the weights can be optimised analytically or the points are selected sequentially.
Reflecting this difficulty, \citet{chen2026stationarymmdpoints} relax the goal of global minimisation and instead analyse stationary points, proving a super-convergence property for numerical integration and a finite-particle guarantee for computing them by MMD gradient flow.
More generally, much of the existing literature either develops algorithms designed to mitigate the nonconvex optimisation problem, or seeks useful guarantees without requiring a global minimiser.
Much less well understood is the optimisation landscape itself: namely, whether increasing the number of particles can eliminate poor local minima for finite $N$.

A complementary line of work studies MMD optimisation through measure-valued dynamics and their particle approximations, often obtaining guarantees at the population level \citep{arbel2019maximummeandiscrepancygradient,gladin2024interactionforcetransportgradientflows,kang2026gradientflowperspectiveminimum}.
Such results provide insight into the behaviour of particle methods as the number of particles becomes large and, in several cases, establish convergence of finite-particle systems towards underlying measure-valued dynamics with stronger convergence guarantees.
They do not, however, directly describe the optimisation landscape at any fixed and finite number of particles $N$.
In particular, they do not rule out poor local minima of the finite-particle objective, nor provide an explicit particle threshold beyond which every local minimum approximates the target accurately.

This leads to the central question of the present work:
\begin{quote}
    \textit{Does the finite-particle optimisation landscape of MMD quantisation become benign once sufficiently many particles are used?}
\end{quote}
More specifically, how large must the number of particles be to guarantee that every local minimum is accurate?
And, beyond local minima, are stationary points with large MMD values necessarily unstable under the associated gradient flow?

\begin{table*}[!ht]
\caption{Summary of the one-dimensional landscape and instability guarantees.
Instability refers to the \emph{unweighted particle flow} \(\dot\theta_i=-\Delta'(\theta_i)\). 
Relaxed weights are unrestricted and optimised at each particle configuration; the simplex results concern the joint objective in the weights and positions. 
The rates are asymptotic as \(\varepsilon \to 0\), with \(D\) and the Gaussian bandwidth fixed.
}
\label{tab:mmd-landscape-results}
%
\begin{center}
\renewcommand{\arraystretch}{1.1}
\begin{tabular}{
    p{0.32\textwidth}
    p{0.29\textwidth}
    p{0.33\textwidth}
}
\textbf{Setting and result}
&
\textbf{Particle threshold}
&
\textbf{Guarantees} \\
\hline
\textbf{Relaxed weights}
\(K\)-atom target, ETP kernel
(Theorem~\ref{thm: relaxbound})
&
\(N\geq K\)
&
No spurious local minima.
Every spurious stationary point is unstable.
\\[1mm]
\textbf{Relaxed weights}
Target with compact support of diameter at most \(D\), Gaussian kernel
(Theorem~\ref{thm: relaxepsbound})
&
\(N>N_\varepsilon(D)
=O_D\!\left(
\frac{\log(1/\varepsilon)}
{\log\log(1/\varepsilon)}
\right)\)
&
Every local minimum has \(\MMD<\varepsilon\).
Stationary points with \(\MMD\geq\varepsilon\) are unstable.
\\[1mm]
\textbf{Simplex weights}
Target with compact support of diameter at most \(D\), Gaussian kernel
(Theorem~\ref{thm: SimpEpsBound})
&
\(N>N_\varepsilon(D)=O_D\!\left(\varepsilon^{-4}\log(1/\varepsilon)\right)\)
&
Every local minimum has \(\MMD<\varepsilon\).
Stationary points with \(\MMD\geq\varepsilon\) are unstable.
\\[1mm]
\textbf{Simplex weights}
Finite atomic target, Gaussian kernel
(Theorem~\ref{thm: SimpBound})
&
\(N>N(P^*)\)
&
No spurious local minima.
Every spurious stationary point is unstable. 
\\
\hline
\end{tabular}
\end{center}
\end{table*}

Our main results (cf. Table~\ref{tab:mmd-landscape-results}) give affirmative answers to these questions for the one-dimensional Gaussian kernel.
To be specific, our main contributions can be summarised as follows.
\begin{enumerate}
    \item For the one-dimensional Gaussian kernel, we show that for every $D>0$ and $\varepsilon>0$ there exists a finite threshold $N_\varepsilon(D)$ such that, whenever $N>N_\varepsilon(D)$ and $P^*$ has support diameter at most $D$, every local minimum of \eqref{eq:mmdquant} satisfies $\MMD<\varepsilon$.
    We also derive computable bounds for $N_{\varepsilon}(D)$. 
    We further show that every stationary point with $\MMD\geq\varepsilon$ is unstable for the \emph{unweighted particle gradient flow};
    see Section~\ref{sec:problem-setup} for a precise definition.
    These results hold both for simplex-constrained weights and the relaxed problem.
    To our knowledge, this is the first finite-particle threshold for MMD quantisation guaranteeing that every local minimum lies below a prescribed error for sufficiently large $N$.
    \item 
    In the well-specified setting, where the target admits an atomic representation $P^*=\sum_{j=1}^K w_j^*\delta_{x_j^*}$ for some finite $K$, we show that there exists a finite threshold $N(P^*)$ such that, whenever $N>N(P^*)$, there are no spurious local minima.
    That is, every local minimum exactly represents $P^*$.
    We also provide computable bounds on this threshold; in the relaxed case, the threshold is simply $N\geq K$.
    Moreover, this threshold is sharp.
    We also show that all spurious stationary points are unstable under unweighted particle gradient flow. 
    Once again, our results hold both for simplex-constrained weights and for the relaxed problem with unconstrained weights.
    \item 
    We establish structural properties of stationary points.
    In particular, we prove a maximum principle implying that every distinct particle location is \emph{active}; that is, it has non-zero total weight.
    We also show that any \emph{collision}, where $x_i=x_j$ for some $i\neq j$, is either globally optimal or, if spurious, is neither a local minimum nor stable under the unweighted particle gradient flow.
    Several of these results extend beyond the Gaussian kernel: for unconstrained weights they hold for extended totally positive (ETP) kernels, while in the simplex-constrained setting they hold additionally under translation invariance and normalisability, together with positivity of the associated Karush--Kuhn--Tucker (KKT) constant, which we verify for the Gaussian kernel.
\end{enumerate}

\section{Related work}
\label{sec:related-work}

The MMD has been widely studied since the foundational work of \citet{gretton2007kernel,JMLR:v13:gretton12a}, including for quantisation of probability measures.
\citet{chen2010supersamples} introduce kernel herding for constructing point sets with fast RKHS integration rates, while \citet{bach2012equivalence} interpret kernel herding as conditional-gradient optimisation of the corresponding convex optimisation problem over measures.
\citet{teymur2021optimal} study sequential MMD quantisation, \citet{xu2022accurate} derive finite-particle MMD quantisation bounds, and \citet{pronzato2023performance} analyses rates for kernel herding and sequential Bayesian quadrature.

These works primarily address how accurately a target can be approximated using particular finite-particle constructions.
A separate question is how to optimise the resulting finite-particle MMD objective.
Amongst recent work, \citet{belhadji2026weighted} is particularly closely related to our relaxed formulation.
They consider the case where the weights are unconstrained and optimised for each particle configuration, and introduce the \emph{mean shift interacting particles} algorithm.
This corresponds to preconditioned gradient descent on the resulting weight-optimised objective.
Related finite-particle optimisation problems are studied by \citet{belhadji2019kernel,belhadji2026discretize}.

These papers are primarily algorithmic: they do not characterise the landscape of the underlying objective itself.
Closer to this question, \citet{chen2026stationarymmdpoints} analyse stationary MMD points, proving a super-convergence property for numerical integration, together with a non-asymptotic guarantee for computing such points via MMD gradient flow.
Our focus is instead on the local minima of the finite-particle objective, together with the stability of its stationary points.
The closest work from an optimisation landscape perspective is \citet{alon2024optimizationlandscapemaximummean}, which establishes benign MMD landscapes for several structured Gaussian parametric models.
Our setting instead permits, in the well-specified case, an arbitrary finite number of target atoms with arbitrary locations and unequal weights, while our approximate guarantees require only compact support.

A complementary line of work studies MMD optimisation through measure-valued dynamics and their particle approximations.
\citet{arbel2019maximummeandiscrepancygradient} analyse the gradient flow of the MMD and the corresponding particle approximation.
More recently, several works obtain stronger guarantees by modifying the dynamics or discrepancy \citep{kang2026gradientflowperspectiveminimum,gladin2024interactionforcetransportgradientflows,chen2025deregularized,tian2026sobolev}.
Meanwhile, \citet{rosenzweig2026wassersteingradientflowsmaximum} establish convergence to collision-free critical sets for MMDs induced by a class of nonsmooth energy kernels.
In contrast, we characterise the landscape of the MMD objective directly for a fixed number of particles, rather than the convergence of a particular flow or its approximation to a population limit.

Finally, several of our proof techniques originate in the super-resolution and approximation-theory literature.
\citet{schiebinger2015superresolutionseparation} and \citet{duval2017characterizationnondegeneratesourcecondition} develop interpolation and Chebyshev-system techniques for recovery of sparse measures, including Gaussian measurement kernels.
We adapt related tools to constrain stationary configurations of the MMD.
There is also a connection to critical points of Gaussian sums: \citet{wang2026finitegaussianmixturesfiniteness} recently proved finiteness of the critical set of finite isotropic Gaussian mixtures.
Our witness functions are signed Gaussian combinations, so results for positive Gaussian mixtures do not directly apply.

\section{Problem setup}
\label{sec:problem-setup}
We now introduce the basic problem setup, as well as some relevant definitions.
For particle locations $\smash{\boldsymbol{x}=(x_1,\ldots,x_N)}$ and weights $\smash{\mathbf w=(w_1,\ldots,w_N)}$, write $P_{\mathbf w,\boldsymbol{x}}=\sum_{i=1}^N w_i\delta_{x_i}$ for the corresponding weighted atomic measure.
Define
\begin{equation}
    \label{eq:mmd-objective}
    F(\mathbf{w},\boldsymbol{x})
    =
    \MMD^2(P_{\mathbf w,\boldsymbol{x}},P^*).
\end{equation}
We can write this function in the form $F(\mathbf w,\boldsymbol{x})=\|\Delta\|_{\mathcal H_k}^2$, where 
$\Delta\in\mathcal{H}_k$ is the function on $\mathcal{X}$ defined by
\begin{equation}
    \Delta(x)=\mu_{P_{\mathbf w,\boldsymbol{x}}}(x)-\mu_{P^*}(x), \qquad x\in\mathcal{X}.
\end{equation}
This is known as the \emph{unnormalised witness function} \citep[e.g.,][]{JMLR:v13:gretton12a}.
The terminology arises from the variational characterisation of the MMD in \eqref{eq:intro-mmd}: whenever $\Delta$ is non-zero, the normalised function $\Delta/\|\Delta\|_{\mathcal H_k}$ attains the supremum, thereby witnessing the discrepancy between the two distributions.
This function will allow us to translate stationarity of the objective into constraints on $\Delta$, to which we can apply interpolation and zero-counting arguments.

Whenever the relevant derivatives exist, it is straightforward to compute
\begin{equation*}
    \frac{\partial F}{\partial w_i}=2\Delta(x_i),
    \qquad
    \frac{\partial F}{\partial x_i}=2w_i\Delta'(x_i).
\end{equation*}
Below, we will frequently refer to the \emph{unweighted particle gradient flow}, by which we mean
\begin{equation*}
    \dot{x}_i=-\Delta'(x_i),
    \qquad
    i=1,\ldots,N.
\end{equation*}
We will refer to a stationary configuration as \emph{unstable} under this flow if there exists a neighbourhood of the configuration such that, for some arbitrarily small perturbations of the particle locations, the resulting trajectory eventually leaves this neighbourhood.

We consider two main regimes: arbitrary compactly supported targets, and the setting in which the target admits an atomic representation $P^*=\sum_{j=1}^K w_j^*\delta_{x_j^*}$ for some finite $K$.
In the latter case, we say that the problem is \emph{well specified}, since for $N\geq K$ the target can be represented exactly by the approximating atomic measure, yielding zero MMD.
Note that this setting covers fitting homoscedastic Gaussian mixtures with fixed variance.
Indeed, for the Gaussian kernel $k_\sigma$, the quantity
$\smash{\MMD^2_{k_\sigma}(\textstyle\sum_{i=1}^N w_i\mathcal N(x_i,\tau^2),\sum_{j=1}^K w_j^*\mathcal N(x_j^*,\tau^2))}$ is a positive scalar multiple of
\begin{equation*}
    \MMD^2_{k_{\sqrt{\sigma^2+2\tau^2}}}(\textstyle \sum_{i=1}^N w_i\delta_{x_i},\sum_{j=1}^K w_j^*\delta_{x_j^*}).
\end{equation*}
Thus, our guarantees in the well-specified case transfer directly to the Gaussian-mixture problem.

We will work on $\mathcal{X}=\mathbb R$ and, unless otherwise specified, use the Gaussian kernel $k(x,y)=\exp(-(x-y)^2)$, corresponding to the normalisation $2\sigma^2=1$.
This is without loss of generality: for a general bandwidth, the transformation $x\mapsto x/\sqrt{2\sigma^2}$ reduces the problem to this form, with the corresponding support-diameter bounds rescaled in the same way.

\section{Relaxed variant}
\label{sec:relaxed-variant}
We first consider the \emph{relaxed} problem in which the weights are unconstrained and optimised for each fixed particle configuration.
That is, the objective is
\begin{equation}
    \label{eq:relaxed-objective}
    \bar F(\boldsymbol{x})
    =
    \min_{\mathbf w\in\mathbb R^N}
    \MMD^2\big(\textstyle\sum_{i=1}^N w_i\delta_{x_i},P^*\big).
\end{equation}
This is the weight-optimised objective underlying, for example, the \emph{mean shift interacting particles} method introduced by \citet{belhadji2026weighted}.
We begin by establishing structural consequences of stationarity, before using these to obtain a sharp result in the well-specified setting.
We then turn to arbitrary compactly supported targets.

Our analysis hinges on the observation that optimality of the weights and stationarity of the locations give
\begin{equation}
    \label{eq:relaxed-stationarity}
    \Delta(x_j)=0,
    \qquad
    w_j\Delta'(x_j)=0,
    \qquad
    j=1,\ldots,N.
\end{equation}
Thus, at every particle location with non-zero weight, both $\Delta$ and its derivative vanish, yielding a zero of multiplicity at least two.
As we now show, sufficiently many distinct such locations impose more zeros, counted with multiplicity, than a non-zero witness function can support.

\subsection{Structural properties of stationary points}
We now establish a maximum principle for the witness.
Full proofs of this and subsequent results are given in App.~\ref{sec:proofs}.
\begin{restatable}{proposition}{relaxmaxprinciple}
\label{prop: relax max principle}
    Suppose $\boldsymbol{x}$ is a stationary point of $\bar F$ with $\bar F(\boldsymbol{x})>0$, and let $\mathbf w$ attain the minimum in \eqref{eq:relaxed-objective}.
    Suppose also that $k$ is ETP.
    Then $\Delta\leq0$, every distinct particle location is active, with positive total weight, and, for all $j=1,\dots,N$,
    \begin{equation*}
        \Delta'(x_j)=0,
        \qquad
        \Delta''(x_j)<0.
    \end{equation*}
    Further, all particles lie in the convex hull of $\supp(P^*)$.
\end{restatable}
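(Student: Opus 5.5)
The plan is to write the witness as a residual of an orthogonal projection and read off its sign from ETP determinants. First record the first-order conditions \eqref{eq:relaxed-stationarity}: optimality of $\mathbf w$ gives $\Delta(x_j)=0$ for all $j$, and an envelope argument gives $w_j\Delta'(x_j)=0$. Merge coincident particles into distinct locations $z_1<\dots<z_m$ with total weights $\bar w_l$. Call $z_l$ active if $\bar w_l\neq0$; for these, $\Delta(z_l)=\Delta'(z_l)=0$.

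Let $V$ be the span of $k(\cdot,z_l)$ over all $l$ together with $\partial_2k(\cdot,z_l)$ over the active $l$. The reproducing property gives $\Delta\perp V$, and $\mu_{P_{\mathbf w,\boldsymbol x}}\in V$. Hence $\Delta=-\Pi_V^\perp\mu_{P^*}$, that is,
\begin{equation*}
\Delta(x)=-\int r(x,y)\,\mathrm dP^*(y),\qquad r(x,y)=\langle \Pi_V^\perp k(\cdot,x),\Pi_V^\perp k(\cdot,y)\rangle_{\mathcal H_k}.
\end{equation*}
By the Schur complement (Sylvester's identity), $r(x,y)$ equals the ratio of two determinants. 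The denominator is the extended Gram determinant on the node multiset, with active nodes doubled and inactive ones simple. The numerator is the same determinant with an extra row indexed by $x$ and an extra column indexed by $y$.

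ETP makes such extended determinants positive when the row and column arguments are each listed in increasing order. Reordering $x$ and $y$ into place costs the sign $(-1)^{\mathrm{pos}(x)+\mathrm{pos}(y)}$. A doubled active node shifts a position by $2$, so it does not change the parity. Hence $\operatorname{sign} r(x,y)=(-1)^{s(x,y)}$, where $s(x,y)$ counts the inactive nodes between $x$ and $y$.

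Next I show that every node is active. Take the space $V'$ built from the active nodes only. The same identity still holds with a residual kernel $r'\geq0$, because $\mu_{P_{\mathbf w,\boldsymbol x}}$ only involves active nodes. At an inactive node $z$ we have $\Delta(z)=0$, which forces $r'(z,y)=0$ for $P^*$-a.e.\ $y$. Strict ETP positivity gives $r'(z,y)>0$ unless $y$ is an active node. So $P^*$ would be supported on active nodes, giving $\mu_{P^*}\in V'$, $\Delta=0$ and $\bar F(\boldsymbol x)=0$, a contradiction.

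With all nodes active, $r\geq0$ and therefore $\Delta\leq0$. Since $\bar w_l\neq0$, stationarity gives $\Delta'(x_j)=0$ for every $j$. For $\Delta''(z_l)<0$: each $r(\cdot,y)$ is nonnegative with a double zero at $z_l$, so $\partial_x^2r(z_l,y)\geq0$. Strict positivity comes from the determinant with $z_l$ tripled, which is strictly positive by ETP for $y\notin\{z_1,\dots,z_m\}$. Such $y$ carry positive $P^*$-mass, otherwise $\Delta=0$ as before.

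For the signs of the weights I would use Cramer's rule on the normal equations. This gives $\bar w_l$ as a ratio of extended determinants integrated against $P^*$, and I would read off its sign by the same parity bookkeeping, after replacing the column for $k(\cdot,z_l)$ by the $P^*$-average of $k(\cdot,y)$.

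The step I expect to be hardest is the convex-hull claim, where the determinant calculus alone does not obviously suffice. There I would use the variation-diminishing property of ETP kernels. The number of zeros of $\Delta=\int k(\cdot,y)\,\mathrm d\nu(y)$ on an interval, counted with multiplicity, is controlled by the sign changes of $\nu=P_{\mathbf w,\boldsymbol x}-P^*$. Suppose $z_m>b=\max\supp P^*$. On $(b,\infty)$ the measure $\nu$ consists only of positive atoms, so $\Delta$ is there a positive combination minus a $P^*$-average. Comparing $\Delta$ on $[z_{m-1},\infty)$ with its behaviour as $x\to\infty$ should contradict either $\Delta\leq0$ with a strict local maximum at $z_m$, or the zero count. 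The symmetric argument handles $z_1<\min\supp P^*$. If this route fails, my fallback is a direct perturbation argument: move $z_m$ towards $b$ and show that $\bar F$ strictly decreases, contradicting stationarity.
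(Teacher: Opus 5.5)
Your projection and Sylvester-identity route is genuinely different from the paper's, and it does establish part of the statement. Writing $\Delta=-\Pi_V^\perp\mu_{P^*}$ and tracking the parity of extended minors correctly shows that every distinct location is active, that $\Delta\le0$, and, via the minor with $z_l$ tripled, that $\Delta''(z_l)<0$.

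The gap is the positivity of the total weights. In Cramer's rule you replace the column $k(\cdot,z_l)$ by $k(\cdot,y)$ but keep the column $\partial_2k(\cdot,z_l)$. So $\det G_l(y)$ contains a derivative column at $z_l$ without its value column. It is not an extended ETP minor, and no parity rule applies to it.

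In fact $\det G_l(y)/\det G=\phi_l(y)$ is the Hermite fundamental function in $V$, with $\phi_l(z_j)=\delta_{jl}$ and $\phi_l'(z_j)=0$. The interpolation conditions force only $2m-2$ of the at most $2m-1$ zeros that ETP allows, and $\phi_l$ does change sign in general. For $k(x,y)=e^{-(x-y)^2}$ with active nodes $0,1$ one finds $\phi_1(y)=e^{-y^2}\bigl(a+by+(c+dy)e^{2y}\bigr)$, where $a\approx2.88$, $b\approx2.62$, $c\approx-1.88$ and $d\approx1.13$. Hence $\phi_1(-2)\approx-0.045$. Nothing prevents $P^*$ from charging the region where $\phi_l<0$, so the sign of $\bar w_l=\int\phi_l\,\mathrm dP^*$ cannot be read off pointwise.

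The convex-hull claim is therefore also unproved. Your variation-diminishing sketch assumes that $\nu$ has only positive atoms on $(b,\infty)$, which is exactly the missing step. The fallback cannot work either. At a stationary point $\partial\bar F/\partial z_m=2\bar w_m\Delta'(z_m)=0$, so a decrease of $\bar F$ along some path does not contradict stationarity.

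Both gaps close with one global zero count, which is the paper's proof. Apply the variation-diminishing property to $\Delta=\int k(t,\cdot)\,\mathrm d(P-P^*)(t)$. The positive part of $P-P^*$ lives on at most $p=\#\{l:\bar w_l>0\}$ points, while each of the $m$ active locations is a double zero of $\Delta$. This gives $2m\le Z(\Delta)\le V(P-P^*)\le 2p\le 2m$. Equality forces $p=m$ and the sign pattern $-+-\cdots+-$. The outer negative blocks can only come from $P^*$, so $P^*$ has mass strictly left of $z_1$ and strictly right of $z_m$. This yields positivity and containment together.

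The same count also gives activity, $\Delta\le0$ and the strict maxima directly. Your determinant calculus is thus a valid alternative for those parts, but it cannot replace the global count for the weight signs and the convex hull.
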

Prop.~\ref{prop: relax max principle} establishes that every particle location is a strict local maximum of $\Delta$, and that every distinct particle location has positive total weight.
The latter conclusion is related to \citet{Karvonen2019}, who establish positivity of optimally weighted quadrature rules under different assumptions.
In particular, they require the target measure to be compactly supported and to admit a Lebesgue density on a bounded interval, and the moving particles to already lie within that interval.
Here, positivity and containment in the convex hull both follow directly from stationarity, although our assumptions on the kernel are stronger.

The fact that every distinct particle location is active also clarifies the role of \emph{collisions}; that is, configurations with $x_i=x_j$ for some $i\neq j$.
Whilst colliding particles may have weights of different signs, the maximum principle ensures that their total weight at the common location is positive, so merging them cannot remove that location from the resulting measure.

We next show that collisions are \emph{unstable} under the unweighted particle gradient flow.
\begin{restatable}{proposition}{runstablecollisionpoints}
    \label{prop: runstablecollisionpoints}
    Under the assumptions of Prop. \ref{prop: relax max principle}, a collision is unstable under the unweighted particle gradient flow.
\end{restatable}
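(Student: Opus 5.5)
The plan is to exhibit an explicit arbitrarily small perturbation that separates two colliding particles, and to show that their separation then grows at a rate proportional to itself, so the trajectory leaves a fixed neighbourhood. Let $\boldsymbol{x}$ be a spurious stationary point of $\bar F$ with $x_i=x_j=a$ for some $i\neq j$. Write $V(\boldsymbol{y})=\mathrm{span}\{k(y_\ell,\cdot)\}_{\ell}$. For any configuration $\boldsymbol{y}$, the optimal weights are not unique at a collision, but the witness is. Indeed, $\Delta_{\boldsymbol{y}}=-(I-\Pi_{V(\boldsymbol{y})})\mu_{P^*}$, where $\Pi_{V}$ is the orthogonal projection in $\mathcal H_k$. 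The unweighted flow $\dot y_\ell=-\Delta_{\boldsymbol{y}}'(y_\ell)$ is therefore well defined. By Prop.~\ref{prop: relax max principle}, at the collision we have $\Delta_{\boldsymbol{x}}(a)=\Delta'_{\boldsymbol{x}}(a)=0$ and $\Delta''_{\boldsymbol{x}}(a)<0$.

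\textbf{Step 1 (continuity of the witness under splitting).} Consider $\boldsymbol{y}^h$, obtained from $\boldsymbol{x}$ by setting $y_i=a+h$ and $y_j=a-h$ and leaving the other particles fixed. I would replace the spanning pair by the rescaled pair $k(a+h,\cdot)+k(a-h,\cdot)$ and $(k(a+h,\cdot)-k(a-h,\cdot))/(2h)$. As $h\to0$ these converge in $\mathcal H_k$ to $2k(a,\cdot)$ and $\partial_1k(a,\cdot)$. The ETP property guarantees linear independence of $k(a,\cdot)$, $\partial_1k(a,\cdot)$ and the kernels at the other distinct locations (if $a$ already had a derivative in the span, this step only becomes easier). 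Hence $V(\boldsymbol{y}^h)\to V_1:=V(\boldsymbol{x})+\mathrm{span}\{\partial_1 k(a,\cdot)\}$, and the projections converge. Since $\langle \Delta_{\boldsymbol{x}},\partial_1k(a,\cdot)\rangle=\Delta_{\boldsymbol{x}}'(a)=0$ and $\Delta_{\boldsymbol{x}}\perp V(\boldsymbol{x})$, the residual of $\mu_{P^*}$ on $V_1$ is still $\Delta_{\boldsymbol{x}}$. So $\Delta_{\boldsymbol{y}^h}\to\Delta_{\boldsymbol{x}}$ in $\mathcal H_k$. For the Gaussian/ETP kernels considered, RKHS convergence implies uniform $C^2$ convergence on compacts, because $\sup_x\|\partial^m_1k(x,\cdot)\|_{\mathcal H_k}<\infty$.

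\textbf{Step 2 (growth of the separation).} Let $s=y_i-y_j$. Along the flow,
\begin{equation*}
\dot s=-\big(\Delta_{\boldsymbol{y}}'(y_i)-\Delta_{\boldsymbol{y}}'(y_j)\big)=-\int_{y_j}^{y_i}\Delta_{\boldsymbol{y}}''(t)\,\mathrm dt .
\end{equation*}
If $\Delta''_{\boldsymbol{y}}\le -c<0$ on a neighbourhood of $a$ for all $\boldsymbol{y}$ in the relevant region, then $\dot s\ge cs$. Consequently $s(t)\ge s(0)e^{ct}$ as long as the trajectory stays in the region, and so it must exit a fixed neighbourhood however small $h$ is. This is exactly instability in the sense of Section~\ref{sec:problem-setup}.

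\textbf{Main obstacle.} The difficulty is making Step 1 uniform over the region visited by the trajectory, not just along the one-parameter family $\boldsymbol{y}^h$. Nearby configurations may split other colliding groups, or split a group of three or more coincident particles. In the latter case the limiting subspace can contain $\partial_1^2k(a,\cdot)$, to which $\Delta_{\boldsymbol{x}}$ need not be orthogonal. To handle this, I would first reduce to the case where exactly two particles are perturbed: if more than two coincide at $a$, separate only one pair and keep the rest glued. Then I would argue by contradiction. Suppose the trajectory stayed in a $\delta$-neighbourhood for all time. Extract any limit configuration along which the witnesses converge; by the previous subspace argument its limiting span lies between $V(\boldsymbol{x})$ and $V(\boldsymbol{x})+\mathrm{span}\{\partial_1k(z,\cdot): z \text{ a particle location}\}$. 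All such limits leave $\Delta_{\boldsymbol{x}}$ invariant, because $\Delta_{\boldsymbol{x}}'$ vanishes at every particle by Prop.~\ref{prop: relax max principle}. This gives the uniform bound $\Delta''_{\boldsymbol{y}}\le\tfrac12\Delta''_{\boldsymbol{x}}(a)<0$ near $a$ for $\delta$ small. If the higher-order directions cannot be excluded this way, the fallback is to perturb only a pair and show directly, via the flow equation, that no third particle approaches them before $s$ has grown.
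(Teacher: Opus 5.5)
Your argument is correct and is essentially the paper's proof. The paper likewise replaces the colliding pair by a divided-difference basis converging to $\partial_1 k(\vartheta_1,\cdot)$. It then uses ETP invertibility together with $\Delta'(\vartheta_1)=0$ to show that the reoptimised witness converges to the original one with $\Delta''$ jointly continuous; it tracks this through the weight vector $\mathbf v^d=(K^d)^{-1}\mathbf b^d$ rather than through orthogonal projections. Finally it concludes via the mean value theorem that the separation grows at a rate proportional to itself.

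Your compactness argument for uniformity along the trajectory is a more explicit version of the paper's closing remark that glued particles stay glued. One caution: handle three or more coincident particles as the paper does, by moving some but not all of them to one side as a single glued block, so only two distinct locations ever appear near the collision. Moving a pair to $a\pm h$ while leaving the others at $a$ would create exactly the $\partial_1^2 k(a,\cdot)$ problem you identified.
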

\vspace{-4mm}
\begin{proof}[Proof sketch]
Following ideas used for collision singularities in Prony systems \citep{batenkov2013geometrysingularitiespronymapping} and exponential least-squares approximation \citep{1f9723a1-a04e-3d21-9c1b-61527608e570}, a pair of colliding kernel functions can be replaced by a divided-difference basis.
This remains non-degenerate at the collision and converges to a kernel derivative.
Smoothness of an ETP kernel gives the required continuity of $\Delta''$; Prop.~\ref{prop: relax max principle} then implies that separating the particles is unstable.
\end{proof}
\vspace{-2mm}
To rule out local minima containing redundant particles, we also need to exclude zero individual weights.
\vspace{-1mm}
\begin{restatable}{proposition}{posweightrelaxed}
\label{prop: posweightrelaxed}
Suppose $k$ is real, analytic, and universal, or that $k$ is ETP and the problem is well specified.
Then, at any nonzero local minimum of $\bar F$, every weight vector $\mathbf w$ attaining the minimum in \eqref{eq:relaxed-objective} satisfies $w_i\neq0$ for all $i=1,\ldots,N$.
Moreover, the particle locations are distinct.
\end{restatable}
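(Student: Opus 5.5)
The plan is to argue by contradiction. Suppose $\boldsymbol{x}$ is a local minimum of $\bar F$ with $\bar F(\boldsymbol{x})>0$, and some optimal $\mathbf w$ has $w_{i_0}=0$, or two particles coincide. In either case the configuration contains a \emph{free} particle: it can be moved to any location $z$ without changing the measure $P_{\mathbf w,\boldsymbol{x}}$. For a zero weight this is immediate. For a collision $x_i=x_j$, first replace $(w_i,w_j)$ by $(w_i+w_j,0)$; this is again an optimal weight vector with the same measure, reducing to the zero-weight case. Moving the free particle to $z$ and keeping the weights fixed shows $\bar F(\boldsymbol{x}^{z})\le \bar F(\boldsymbol{x})$. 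The key step is to show that the inequality is \emph{strict} for $z$ near $x_{i_0}$, at least for some direction; this contradicts local minimality.

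For strictness, let the free particle sit at $z$, and optimise its weight $t$ while keeping the others fixed. Then
\[
\|\Delta+t k(z,\cdot)\|^2=\|\Delta\|^2+2t\Delta(z)+t^2k(z,z),
\]
which is strictly smaller than $\|\Delta\|^2$ whenever $\Delta(z)\neq0$. Thus it suffices to show that $\Delta$ does not vanish identically on any neighbourhood of $x_{i_0}$. The free particle can be placed at any $z$ in such a neighbourhood, with an arbitrarily small displacement. Since $\bar F$ minimises over all weights, this gives $\bar F(\boldsymbol{x}^z)<\bar F(\boldsymbol{x})$.

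The main obstacle is therefore showing that $\Delta\not\equiv0$ near $x_{i_0}$, and this is where the two hypotheses enter.
\begin{enumerate}
\item Under analyticity, $\Delta$ is real analytic. If it vanished on an open interval, it would vanish identically, giving $\mu_{P_{\mathbf w,\boldsymbol{x}}}=\mu_{P^*}$. Universality (hence characteristic, extended to signed measures) then forces the signed measure $P_{\mathbf w,\boldsymbol{x}}-P^*$ to be zero, that is, $\bar F=0$, a contradiction.
\item In the ETP, well-specified case, $\Delta=\sum_i w_ik(x_i,\cdot)-\sum_j w_j^*k(x_j^*,\cdot)$ is a finite combination of kernel sections. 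ETP implies such combinations of distinct nodes have finitely many zeros unless all coefficients vanish. Vanishing on an interval would thus force the merged signed atomic measure to vanish, so again $\bar F=0$.
\end{enumerate}

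Finally, distinctness follows from the reduction above: a collision produces an optimal weight vector with a zero entry, which we have just excluded. Since the statement concerns every optimal $\mathbf w$, it is enough to note that each optimal weight vector is handled by the same argument.
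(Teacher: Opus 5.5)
Your proof is correct and follows essentially the same route as the paper. You pick a point $z$ arbitrarily close to the zero-weight particle with $\Delta(z)\neq0$ (by analyticity, or because a nontrivial finite ETP combination has finitely many zeros), move the particle there and optimise its weight to obtain the decrease $-\Delta(z)^2/k(z,z)$, and handle collisions by transferring weight to create a zero entry. The appeal to universality is unnecessary, since $\Delta\equiv0$ already gives $\bar F(\boldsymbol{x})=\|\Delta\|_{\mathcal H_k}^2=0$ directly.
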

\vspace{-4mm}
\subsection{Well-specified targets}
We first use these structural results to obtain an exact landscape characterisation in the well-specified setting.
Proposition~\ref{prop: posweightrelaxed} ensures that increasing $N$ genuinely increases the number of zero constraints imposed on the witness function $\Delta$.
In particular, a putative local minimum cannot simply absorb additional particles by assigning them zero weight.
We can therefore combine the stationarity conditions with the zero-counting property of ETP kernels.
\begin{theorem}
\label{thm: relaxbound}
Suppose the kernel is ETP and the problem is well specified with $P^*$ supported on $K$ points.
If $N\geq K$, then $\bar F$ has no spurious local minima, and every spurious stationary point is unstable under the unweighted particle gradient flow.
\end{theorem}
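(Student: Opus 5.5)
The argument is a zero count for the witness function. Let $\boldsymbol{x}$ be a spurious stationary point of $\bar F$, so $\bar F(\boldsymbol{x})>0$, and let $\mathbf w$ attain the minimum in \eqref{eq:relaxed-objective}. Let $y_1<\dots<y_M$ be the distinct particle locations. In the well-specified case
\begin{equation*}
\Delta(x)=\sum_{m=1}^M v_m k(x,y_m)-\sum_{j=1}^K w_j^* k(x,x_j^*),
\end{equation*}
where $v_m$ is the total weight at $y_m$. Merge every $y_m$ that coincides with some target atom $x_j^*$ into a single coefficient. Then $\Delta$ is a combination of at most $M+K$ distinct kernel translates. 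It is not identically zero: $\bar F>0$ forces $\Delta\neq0$ in $\mathcal H_k$.

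\textbf{Step 1: zeros of $\Delta$.} By Prop.~\ref{prop: relax max principle}, every $y_m$ is active with $v_m>0$. The conditions \eqref{eq:relaxed-stationarity} then give $\Delta(y_m)=\Delta'(y_m)=0$, so $\Delta$ has at least $2M$ zeros counted with multiplicity.

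\textbf{Step 2: a zero bound from ETP.} For an ETP kernel, a nonzero combination of $L$ distinct translates has at most $L-1$ zeros with multiplicity. The system $\{k(\cdot,t_\ell)\}$ is an extended Chebyshev system, and this holds even when the coefficients have mixed signs. Suppose $p$ of the $y_m$ coincide with target atoms. Then $L=M+K-p$, and the bound gives
\begin{equation*}
2M\le M+K-p-1, \qquad\text{so}\qquad M\le K-p-1.
\end{equation*}
I would sharpen this with sign information. At a coincident point $y_m=x_j^*$, $\Delta$ has a double zero and $y_m$ is a strict local maximum, since $\Delta''(y_m)<0$ and $\Delta\le0$. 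This uses the sign-pattern version of the ETP zero count, which bounds zeros by sign changes of the coefficient sequence. Ordering the $M+K-p$ nodes, the coefficients $v_m>0$ and $-w_j^*<0$ allow only a limited number of sign changes, which should tighten the bound.

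\textbf{Step 3: local minima.} Let $\boldsymbol{x}$ be a spurious local minimum with $N\ge K$. By Prop.~\ref{prop: posweightrelaxed} the particles are distinct with nonzero weights, so $M=N\ge K$. This contradicts $M\le K-p-1<K$, so no spurious local minimum exists.

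\textbf{Step 4: stationary points, the main obstacle.} Here only $M<K$ is available, so $N\ge K>M$ forces a collision among the $N$ particles. Prop.~\ref{prop: runstablecollisionpoints} then gives instability under the unweighted flow. A subtlety is that a non-colliding spurious stationary point might carry zero individual weights, which would break the collision argument. But Prop.~\ref{prop: relax max principle} asserts $\Delta''(x_j)<0$ for every $j$, including zero-weight particles. So every particle is a strict local maximum of $\Delta$ and, by Step 1, is a point of the set $\{y_m\}$ with $M<K\le N$; hence collisions are unavoidable.

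I expect the delicate points to be two. First, the ETP zero count with mixed-sign coefficients, including the merging of coincident nodes, which I would take from the extended Chebyshev theory in \citet{schiebinger2015superresolutionseparation}. Second, confirming that the instability in Prop.~\ref{prop: runstablecollisionpoints} applies to every collision pattern.
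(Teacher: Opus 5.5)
Your proposal is correct and follows essentially the paper's own proof. The paper's argument has the same two parts: the ETP zero count $2M\le M+K-1$ together with Prop.~\ref{prop: posweightrelaxed} rules out spurious local minima, and for spurious stationary points Prop.~\ref{prop: relax max principle} (every distinct location is active) forces a collision, which Prop.~\ref{prop: runstablecollisionpoints} shows is unstable. The sign-pattern sharpening you sketch in Step 2 is not needed and can be dropped.
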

\vspace{-4mm}
\begin{proof}
First consider a nonzero local minimum.
By Prop.~\ref{prop: posweightrelaxed}, all particle locations are distinct and every optimal weight is nonzero.
Hence stationarity forces a double zero of $\Delta$ at each of the $N$ particle locations.
Since $\Delta$ is a nonzero combination of at most $N+K$ distinct kernel sections, extended total positivity implies that it has at most $N+K-1$ zeros counting multiplicity.
Thus $2N\leq N+K-1$, which requires $N\leq K-1$.
Therefore, when $N\geq K$, every local minimum must satisfy $\bar F=0$.

Now consider a spurious stationary point.
If all particle locations are distinct, Prop.~\ref{prop: relax max principle} implies that every location is active, so the same zero-counting argument again gives a contradiction when $N\geq K$.
Hence every spurious stationary point must contain a collision, which is unstable by Prop.~\ref{prop: runstablecollisionpoints}.
\end{proof}
\vspace{-2mm}
Theorem~\ref{thm: relaxbound} shows that, in the relaxed well-specified problem, using enough particles to represent the target suffices to eliminate all spurious local minima.
For completeness, we provide a numerical illustration of this result, as well as our other main optimisation landscape results (cf. Theorems~\ref{thm: relaxepsbound}--\ref{thm: SimpBound}), in App.~\ref{app:numerics}.
The observed behaviour is consistent with our theory, recovering in particular the sharp \(N\geq K\) threshold established in Theorem~\ref{thm: relaxbound}.

\subsection{Arbitrary compactly supported targets}
We now turn to arbitrary compactly supported targets.
In this case, the same zero-counting principle yields a quantitative approximation guarantee without requiring the well-specified setting.

\begin{restatable}{theorem}{relaxepsbound}
\label{thm: relaxepsbound}
Let $D>0$ and $\varepsilon>0$.
There exists a finite threshold $N_\varepsilon(D)$ such that, uniformly over all target distributions with compact support of diameter at most $D$, whenever $N>N_\varepsilon(D)$ every local minimum of $\bar F$ satisfies
\begin{equation*}
    \MMD(P,P^*)<\varepsilon.
\end{equation*}
Moreover, every stationary point with $\MMD(P,P^*)\geq\varepsilon$ is unstable under the unweighted particle gradient flow.
\end{restatable}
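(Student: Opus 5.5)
The plan is to reuse the structural consequences of stationarity from Prop.~\ref{prop: relax max principle}, so that the analysis reduces to controlling how small the witness $\Delta$ can be when it has many double zeros on a bounded interval. Consider first a stationary point with distinct particle locations and $\bar F(\boldsymbol{x})\ge\varepsilon^2$. By Prop.~\ref{prop: relax max principle}, all particles lie in the convex hull $I$ of $\supp(P^*)$, an interval of length at most $D$. At each particle we have $\Delta\le 0$, $\Delta(x_j)=0$ and $\Delta'(x_j)=0$, so every particle is a double zero and a local maximum of $\Delta$.

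The key quantity is the weight-optimised MMD. Since the weights minimise a quadratic, $\Delta$ is orthogonal in $\mathcal H_k$ to every $k(\cdot,x_i)$. Hence
\begin{equation*}
\bar F=\|\Delta\|^2=-\langle \Delta,\mu_{P^*}\rangle=-\int\Delta\,\mathrm dP^*.
\end{equation*}
So $\bar F\ge\varepsilon^2$ forces $\sup_I|\Delta|\ge\varepsilon^2$.

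On the other hand, $\Delta$ is entire. We have $|\Delta(x)|\le\|\Delta\|_{\mathcal H_k}\sqrt{k(x,x)}$, and an analogous bound holds for complex $z$ via $|k(z,y)|\le e^{(\operatorname{Im} z)^2}$ after checking $\sum|w_i|$ is not needed. This works because $\Delta(z)=\langle\Delta,k(\cdot,\bar z)\rangle$ with $\|k(\cdot,\bar z)\|^2=e^{2(\operatorname{Im}z)^2}$. Thus $|\Delta(z)|\le \|\Delta\|\,e^{(\operatorname{Im} z)^2}$, and $\|\Delta\|\le \|\mu_{P^*}\|\le 1$.

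Now $\Delta$ has $2N$ zeros counting multiplicity in $I$. I would apply a Jensen-formula (or Hermite interpolation remainder) argument on a disc of radius $R$ about the centre of $I$. If $M_R\le e^{R^2}$ bounds $\Delta$ on that disc, then
\begin{equation*}
\sup_I|\Delta|\le M_R\,(D/R)^{2N}\cdot C.
\end{equation*}
Equivalently, the Hermite interpolation remainder gives $|\Delta(x)|\le \sup|\Delta^{(2N)}|\,D^{2N}/(2N)!$, with Cauchy estimates bounding $\Delta^{(2N)}$ on $I$ by $(2N)!\,e^{R^2}/R^{2N}$. Combining the two bounds yields
\begin{equation*}
\varepsilon^2\le \inf_R e^{R^2}(D/R)^{2N}.
\end{equation*}
Choosing $R\approx\sqrt N$ gives roughly $(eD^2/N)^N$. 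This fails once $N\log(N/(eD^2))>2\log(1/\varepsilon)$, which is exactly $N=O_D(\log(1/\varepsilon)/\log\log(1/\varepsilon))$. That defines $N_\varepsilon(D)$, and for $N>N_\varepsilon(D)$ no collision-free stationary point can have $\MMD\ge\varepsilon$.

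The remaining cases go as follows. A local minimum with $\bar F>0$ has distinct locations and nonzero weights by Prop.~\ref{prop: posweightrelaxed}, since the Gaussian is analytic and universal, so the argument above applies to it directly. A stationary point with $\MMD\ge\varepsilon$ that has a collision is unstable by Prop.~\ref{prop: runstablecollisionpoints}. For a distinct stationary configuration, some weights could be zero, but Prop.~\ref{prop: relax max principle} still gives $\Delta'(x_j)=0$ for all $j$, so the double-zero count holds regardless. The main obstacle I expect is making the complex-analytic bound rigorous: in particular, establishing the $e^{(\operatorname{Im}z)^2}$ growth through the RKHS evaluation, and obtaining a clean explicit constant so that the threshold is computable.
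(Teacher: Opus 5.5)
Your proposal is correct. Its skeleton matches the paper's proof:
\begin{itemize}
\item confinement of the particles to the convex hull $I$ via Prop.~\ref{prop: relax max principle};
\item the identity $\bar F=-\int\Delta\,\mathrm dP^*$, which gives a point of $I$ with $|\Delta|\geq\varepsilon^2$;
\item a growth bound on the entire extension of $\Delta$;
\item a zero count that forces a collision once $N$ is large;
\item Props.~\ref{prop: posweightrelaxed} and~\ref{prop: runstablecollisionpoints} to handle local minima and collisions.
\end{itemize}

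The analytic core differs in two places.
\begin{itemize}
\item \textbf{Growth bound.} The paper writes $|\Delta(x+iy)|\leq e^{y^2}\bigl(\mu_P(x)+\mu_{P^*}(x)\bigr)\leq 2e^{y^2}$. This uses the sign information from the maximum principle, namely $P\geq0$ and $\mu_P\leq\mu_{P^*}\leq1$. You instead use the reproducing property in the complexified RKHS together with $\|\Delta\|_{\mathcal H_k}\leq\|\mu_{P^*}\|_{\mathcal H_k}\leq1$, which needs only optimality of the weights.
\item \textbf{Counting step.} The paper applies Jensen's formula (Prop.~\ref{prop: jenzerocount}) on a disc centred at the point $x_0$ with $|\Delta(x_0)|\geq\varepsilon^2$. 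This bounds the number of distinct locations by $\bigl(\log(2/\varepsilon^2)+\tfrac12\delta^2D^2\bigr)/(2\log\delta)$. You run the implication in reverse: the Hermite remainder plus Cauchy estimates show that $2N$ zeros in $I$ force $\Delta$ to be uniformly small on $I$, with no need to choose a centre.
\end{itemize}

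Both routes give the $O_D(\log(1/\varepsilon)/\log\log(1/\varepsilon))$ threshold. Your route is more elementary. If you keep $\|\Delta\|_{\mathcal H_k}=\sqrt{\bar F}$ rather than bounding it by one, it even gives the explicit estimate $\MMD(P,P^*)\leq(2eD^2/N)^N$ at every collision-free stationary point. The Jensen formulation, on the other hand, is what the paper reuses for the simplex count in Prop.~\ref{prop: NUpperBound}. There the zeros of $\Delta-c$ are not confined to a fixed interval, and the discs are centred at auxiliary points where $|\Delta-c|$ is bounded below.

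One correction to your growth bound. In fact $\|k(\cdot,\bar z)\|_{\mathcal H_k}^2=e^{-(z-\bar z)^2}=e^{4(\operatorname{Im}z)^2}$, not $e^{2(\operatorname{Im}z)^2}$. So the RKHS argument gives $|\Delta(z)|\leq\|\Delta\|_{\mathcal H_k}e^{2(\operatorname{Im}z)^2}$; the bound with $e^{(\operatorname{Im}z)^2}$ is false for general real elements of $\mathcal H_k$.

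This changes only constants. Your final inequality becomes $\varepsilon^2\leq\min_R e^{2R^2}(D/R)^{2N}=(2eD^2/N)^N$, attained at $R^2=N/2$, and the rate is unchanged. Alternatively, since you have already invoked $P\geq0$ and $\Delta\leq0$, you can simply use the paper's pointwise bound $2e^{(\operatorname{Im}z)^2}$.
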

\vspace{-4mm}
\begin{proof}[Proof sketch]
This result is also proved via a zero counting argument.
First, the maximum principle (cf. Prop.~\ref{prop: relax max principle}) confines the active particles to the convex hull of the target support.
Meanwhile, compactness and analyticity give an upper bound on the number of distinct active locations that a stationary witness with $\MMD(P,P^*)\geq\varepsilon$ can support, depending only on $D$ and $\varepsilon$.
At a nonzero local minimum, all $N$ particle locations must be distinct (cf. Prop.~\ref{prop: posweightrelaxed}), yielding a contradiction once $N$ exceeds this bound.
Meanwhile, for a general stationary point, exceeding the bound forces a collision, which must be unstable (cf. Prop.~\ref{prop: runstablecollisionpoints}).
\end{proof}
\vspace{-2mm}

\section{Simplex variant}
\label{sec:simplex-variant}
We now return to the original problem, where $\mathbf w\in\Delta^{N-1}$.
The simplex constraint makes the analysis substantially more difficult: weight stationarity no longer forces $\Delta(x_i)=0$, but instead introduces a common KKT level $c$.

Our proof proceeds in two stages.
We first establish structural properties common to both results.
We next use an interpolation argument to show that the simplex KKT level $c$ is positive, after which a maximum principle forces every active particle location to be a strict local maximum of the witness function and provides control of collisions.
We then relate the MMD, the KKT level $c$, and the number of distinct active particle locations.
For arbitrary compactly supported targets, this yields the approximate guarantee.
In the well-specified setting, we additionally show that every spurious stationary point must miss some target atom by a fixed neighbourhood, which provides the positive MMD lower bound needed to obtain the exact result.

Our two main results can be stated as follows.

\begin{restatable}{theorem}{MMDEpsSimplexBound}
\label{thm: SimpEpsBound}
Let $D>0$ and $\varepsilon>0$.
There exists a finite threshold $N_\varepsilon(D)$ such that, uniformly over all target distributions with compact support of diameter at most $D$, whenever $N>N_\varepsilon(D)$, every local minimum of $F$ over $\mathbf w\in\Delta^{N-1}$ and $\boldsymbol{x}\in\mathbb R^N$ satisfies
\begin{equation*}
    \MMD(P,P^*)<\varepsilon.
\end{equation*}
Moreover, every stationary point with $\MMD(P,P^*)\geq\varepsilon$ is unstable under the unweighted particle gradient flow.
\end{restatable}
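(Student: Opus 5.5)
We argue in three stages: derive the simplex KKT conditions and a maximum principle, bound the number of distinct active locations a stationary witness with $\MMD\geq\varepsilon$ can support, and then dispose of collisions and zero weights as in the relaxed case.

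\textbf{KKT conditions.} At a stationary point $(\mathbf w,\boldsymbol x)$ of $F$ on $\Delta^{N-1}\times\mathbb R^N$ there is a common level $c$ with
\begin{equation*}
\Delta(x_i)=c\ \text{ if } w_i>0,\qquad \Delta(x_i)\geq c\ \text{ if } w_i=0,\qquad w_i\Delta'(x_i)=0 .
\end{equation*}
Pairing $\Delta$ with the signed measure $P-P^*$ gives
\begin{equation*}
F=\langle \Delta,\mu_P-\mu_{P^*}\rangle=\sum_i w_i\Delta(x_i)-\int\Delta\,\mathrm dP^*=c-\int\Delta\,\mathrm dP^* .
\end{equation*}

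\textbf{Positivity of $c$.} We show $c>0$ by interpolation. Suppose $c\leq 0$. The kernel $k$ is translation invariant and integrable, and $\int\Delta\,\mathrm dx=\sqrt\pi\,(\sum_i w_i-1)=0$. Combining this with $\Delta\leq c$ near the active atoms, we aim to construct a nonnegative test function that contradicts $F>0$. Once $c>0$ is known, we prove a maximum principle in the spirit of Prop.~\ref{prop: relax max principle}: $\Delta\leq c$ everywhere, every active location is a strict local maximum ($\Delta''(x_i)<0$), and all active particles lie in the convex hull of $\supp P^*$, so they sit in an interval of length at most $D$. The unstable direction under $\dot x_i=-\Delta'(x_i)$ comes from $\Delta''<0$ at a collision, by the same divided-difference argument as Prop.~\ref{prop: runstablecollisionpoints}.

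\textbf{Counting distinct active locations.} This is the main obstacle. The function $g=c-\Delta\geq 0$ has a double zero at each distinct active location $y_1,\dots,y_M$, all lying in an interval $I$ of length $D$. We want $M\leq N_\varepsilon(D)$ whenever $\MMD\geq\varepsilon$. The plan has three parts.
\begin{itemize}
\item[(a)] Bound $c$ below in terms of $\varepsilon$. From $F=c-\int\Delta\,\mathrm dP^*$ and $\Delta\leq c$ we only get $F\leq 2c$ after bounding $-\int\Delta\,\mathrm dP^*\leq c$, which is not yet available. Instead we bound $-\int\Delta\,\mathrm dP^*$ using $\Delta\geq -\mu_{P^*}\geq -1$ together with the pairing identity, aiming for $c\gtrsim \varepsilon^2$.
\item[(b)] Write $g=c-\mu_P+\mu_{P^*}$ as a function extending analytically to a complex strip, with derivatives bounded uniformly in $D$ through Gaussian Cauchy estimates.
\item[(c)] Apply a Jensen-type zero count for $g$ on a disc around $I$. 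With $2M$ zeros and $\sup|g|\leq 3$ on a disc of radius $R$, and $|g(z_0)|$ at least of order $c$ at some $z_0\in I$ (which exists since $g$ averages against $P^*$ to $c+\int(\mu_{P^*}-\mu_P)\,\mathrm dP^*$), Jensen gives $2M\log(R/D)\lesssim R^2+\log(1/c)$. Choosing $R$ well yields $M=O_D(\log(1/\varepsilon))$.
\end{itemize}
The weaker rate $\varepsilon^{-4}\log(1/\varepsilon)$ stated in Table~\ref{tab:mmd-landscape-results} suggests that a lower bound for $|g(z_0)|$ may only be available at order $\varepsilon^4$, or that a separation argument is needed; we would fall back on that.

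\textbf{Conclusion.} Given the bound $M\leq N_\varepsilon(D)$, a local minimum with $\MMD\geq\varepsilon$ and $N>N_\varepsilon(D)$ must contain either a collision or a zero-weight particle. A collision is excluded because it is not a local minimum. A zero-weight particle $x_i$ is excluded because $\Delta(x_i)\geq c=\max\Delta$ forces $x_i$ to sit at a maximum of $\Delta$, so it can be merged into a collision, which again fails to be a local minimum. For a general stationary point, pigeonhole on $N$ forces a collision, or a zero-weight particle whose flow is trivial; in the latter case we treat the active sub-configuration. In either case the collision instability gives the claim.
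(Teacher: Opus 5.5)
Your proposal has a genuine gap in the counting step. You assert that, once $c>0$, the simplex maximum principle also confines the active particles to the convex hull of $\supp P^*$. Step (c) then runs Jensen on a single disc around an interval of length $D$ containing every double zero of $g=c-\Delta$. In the simplex setting this containment does not follow, and it is false. The variation-diminishing argument there is applied to $P-P^*-(c/\mathcal Z)\lambda$. The Lebesgue term already supplies the negative signs at both ends of the sign pattern, so nothing forces $P^*$-mass beyond the extreme particles; that forcing is exactly what gave containment in the relaxed case.

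For a concrete counterexample, take $P^*=\tfrac12\delta_{-a}+\tfrac12\delta_a$ with $a$ large, and two particles of weight $\tfrac12$ at $a\pm s$, where $e^{3s^2}=2$. Then $\Delta'(a\pm s)=\pm s\,(e^{-s^2}-2e^{-4s^2})=0$ and $\Delta(a+s)=\Delta(a-s)$, up to terms exponentially small in $a$. The level is $c\approx\tfrac12(1+2^{-4/3}-2^{-1/3})\approx0.3$. The Jacobian of the stationarity system is nonsingular there, so an exact stationary point with a particle outside $[-a,a]$ exists nearby. This is why Prop.~\ref{prop: Lcbound} has to handle extreme particles at unbounded distance from the target.

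Without containment, your plan lacks the two ingredients the paper uses in its place.

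\emph{First}, you need a lower bound $c\geq\underline c(D,\varepsilon)>0$ that is uniform over configurations. Your step (a) cannot supply it: $\Delta\geq-1$ combined with the pairing identity only gives $F\leq c+1$. The paper combines $\|\Delta\|_1=2\int[\Delta]_+$ (from $\int\Delta\,\mathrm dx=0$) with $\Delta\leq c$ and Gaussian tails. It uses stationarity at the extreme particles to bound the $P$-mass between them and the target, which yields $\|\Delta\|_1=O(c\sqrt{-\log c})$. It then applies the low/high-frequency split of Prop.~\ref{prop: MMDFourierSplit}.

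\emph{Second}, you need a substitute for localisation. Since $\mu_P(x_i)\geq c$, each particle has $P$-mass at least $(c-e^{-r^2})/(1-e^{-r^2})$ within distance $r$. This bounds the number and the diameter of particle clusters. Jensen is then applied cluster by cluster, at a base point $z_0$ at distance $r$ from the cluster where $|c-\Delta(z_0)|\geq c-e^{-r^2}$ (Prop.~\ref{prop: NUpperBound}). Your base point, obtained from $\int(c-\Delta)\,\mathrm dP^*=F$, only controls zeros near the target. This is also the source of the weaker $\varepsilon^{-4}\log(1/\varepsilon)$ rate you noticed: one gets $c=\Omega(\varepsilon^2)$, and then $A=O(c^{-2}\log(1/c))$.

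Finally, positivity of $c$ is only announced, not proved. What is needed is the certificate $H$ of Prop.~\ref{prop: Hinterp}: it equals $1$ with zero derivative at the active locations and is $<1$ elsewhere. It gives $c\sum_i\alpha_i=\int H\,\mathrm d(P-P^*)>0$ with $\sum_i\alpha_i=\|H\|_{\mathcal H_k}^2>0$. Without $c\geq0$, the Lebesgue term has the wrong sign, and the maximum principle itself is unavailable.
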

\vspace{-1mm}
\begin{restatable}{theorem}{MMDSimplexBound}
\label{thm: SimpBound}
Suppose the problem is well specified.
There exists a finite threshold $N(P^*)$ such that, whenever $N>N(P^*)$, $F$ has no spurious local minima over $\mathbf w\in\Delta^{N-1}$ and $\boldsymbol{x}\in\mathbb R^N$.
Moreover, every spurious stationary point is unstable under the unweighted particle gradient flow.
\end{restatable}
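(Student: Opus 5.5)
The plan follows the two-stage outline at the start of this section, but exploits the finite atomic structure of $P^*=\sum_{j=1}^K w_j^*\delta_{x_j^*}$.

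\textbf{Stationarity conditions.} At a stationary point with $\mathbf w\in\Delta^{N-1}$, the KKT conditions for the weights give a common level $c$. Every active particle satisfies $\Delta(x_i)=c$, and every particle satisfies $\Delta(x_i)\ge c$. Location stationarity gives $w_i\Delta'(x_i)=0$.

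\textbf{Positivity of $c$ and the maximum principle.} I would first show $c>0$ at any spurious stationary point. Since $\int\Delta\,\mathrm d(P-P^*)=\|\Delta\|^2_{\mathcal H_k}>0$, we have $c=\int\Delta\,\mathrm dP>\int\Delta\,\mathrm dP^*$. I would then combine this with an interpolation argument: use a Gaussian-combination interpolant to pair $\Delta$ against a suitable normalised measure (translation invariance and normalisability of the kernel), giving a representation $c\int k = \ldots$ whose sign is forced. This is the ``positive KKT constant'' the introduction says is verified for the Gaussian kernel.

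With $c>0$, a maximum-principle argument parallel to Prop.~\ref{prop: relax max principle} gives three facts:
\begin{enumerate}[label=(\roman*)]
\item $\Delta\le c$ everywhere.
\item Every distinct active location is a strict local maximum, with $\Delta'=0$ and $\Delta''<0$.
\item Active particles lie in the convex hull of $\supp P^*$.
\end{enumerate}
The collision-instability proof of Prop.~\ref{prop: runstablecollisionpoints} then transfers, via the divided-difference basis, because it only uses $\Delta''<0$ at the common location. Similarly, at a local minimum, zero-weight particles can be moved to a point where $\Delta<c$ to strictly decrease $F$. This mirrors Prop.~\ref{prop: posweightrelaxed} and yields distinct locations with all weights positive.

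\textbf{Zero counting.} At a spurious local minimum, the function $\Delta-c$ has a double zero at each of the $N$ distinct locations. Since $\Delta-c$ is not a finite kernel combination, I would not use ETP zero counting directly. Instead I would take the following route.
\begin{enumerate}[label=(\roman*)]
\item Use an MMD lower bound to show that $c$ cannot be small. If $c\le\eta$, then $\Delta\le\eta$ everywhere. Evaluating $\Delta$ at each target atom and using $\Delta(x_j^*)=\sum_i w_ik(x_i,x_j^*)-\sum_l w_l^*k(x_l^*,x_j^*)\le\eta$ forces every atom $x_j^*$ to be approached by particle mass. This should give $\MMD\le C(P^*)\eta^{1/2}$, or something comparable.
\item Conversely, show that a spurious stationary point must miss some atom by a fixed neighbourhood. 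Near a covered atom, the strict-local-maximum structure, together with the finitely many Gaussian bumps, forbids small-but-nonzero discrepancy. This should give a uniform lower bound $\MMD\ge\varepsilon_0(P^*)>0$.
\item Apply Theorem~\ref{thm: SimpEpsBound} with $\varepsilon=\varepsilon_0(P^*)$ and $D=\operatorname{diam}\supp P^*$, and set $N(P^*)=N_{\varepsilon_0}(D)$.
\end{enumerate}
Once $N>N(P^*)$, every local minimum has $\MMD<\varepsilon_0$ and is therefore not spurious. Every stationary point with $\MMD\ge\varepsilon_0$, which includes every spurious one, is unstable. This reduces the theorem to establishing the gap $\varepsilon_0$.

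\textbf{Main obstacle: the gap $\varepsilon_0$.} I need to rule out a sequence of spurious stationary configurations, with possibly growing $N$, whose MMD tends to $0$. I would argue by contradiction. Along such a sequence, $P_n\to P^*$ weakly, with all particles confined to the convex hull. Localise near each atom $x_j^*$: the particle mass there tends to $w_j^*$. Expand $\Delta$ around $x_j^*$ using its moments. The level $c_n$ is squeezed between $\Delta(x_j^*)$ and the strict-maximum conditions. Show that $\Delta''<0$ at the active particles, combined with $\Delta=c_n$ at all of them and $\Delta\le c_n$ everywhere, forces each local cluster to collapse to a single point at the atom. That gives $\Delta\equiv0$, contradicting spuriousness.

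The analytic Gaussian structure should be what makes this local rigidity argument work. I expect it to be the hardest and most delicate step.
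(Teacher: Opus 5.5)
\paragraph{Verdict.} Your reduction is the right one, but the step that carries the whole theorem, the $N$-independent MMD gap, is not established by what you propose.

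\paragraph{Where the proposal breaks.} The structure matches the paper: prove an $N$-independent lower bound $\varepsilon_0(P^*)$ on the MMD of every spurious stationary point, then invoke Theorem~\ref{thm: SimpEpsBound} with $\varepsilon=\varepsilon_0$. The gap is everything, though, and your route does not deliver it.

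Your compactness argument ends with ``each cluster collapses to a single point at the atom, so $\Delta\equiv0$, contradicting spuriousness.'' That is a statement about the limit. Along any sequence with $\MMD\to0$ one has $\Delta_n\to0$ anyway, while every $\Delta_n\neq0$, so no contradiction arises. What you need is that \emph{no} spurious stationary point exists in a fixed neighbourhood of the optimum, uniformly in $N$ and in how many particles sit near each atom.

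The local facts you list do not give this by themselves: $\Delta_n=c_n$ and $\Delta_n''<0$ at active points, and $\Delta_n\le c_n$ everywhere. Since $\Delta_n-c_n\to0$ uniformly, nothing in them excludes several strict maxima at level $c_n$ separated by shallow dips. Zero-counting bounds also degenerate in exactly this regime.

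Two supporting claims are also unjustified.
\begin{itemize}
\item Containment of active particles in the convex hull of $\supp P^*$ does not transfer to the simplex problem. Once $c>0$, the negative stretches in the sign pattern of the variation-diminishing argument can be supplied by the Lebesgue term $-\frac{c}{\mathcal Z}\lambda$, so nothing ties particles to $\supp P^*$. The proof of Prop.~\ref{prop: Lcbound} explicitly allows particles outside $[A,B]$.
\item In your step (i), $\Delta(x_j^*)\le\eta$ bounds $\mu_P(x_j^*)$ from above, so it cannot force mass towards the atoms. That step is redundant anyway once you invoke Theorem~\ref{thm: SimpEpsBound}.
\end{itemize}

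\paragraph{The missing idea.} Use Prop.~\ref{prop: Hinterp} with kernel centres different from the interpolation points.

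Suppose every atom has a particle within $r<\frac12\min_{i\neq j}|x_i^*-x_j^*|$, and let $\bar x_j$ be a nearest particle to $x_j^*$. These are distinct, and by Prop.~\ref{prop: simp max principle} they are active, so $\Delta(\bar x_j)=c$ and $\Delta'(\bar x_j)=0$. Take $H=\sum_j[\alpha_jk(\cdot,\bar x_j)+\beta_j\partial_2k(\cdot,\bar x_j)]$ with $H(x_j^*)=1$, $H'(x_j^*)=0$, and $H<1$ elsewhere.

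Then $\int H\,\mathrm dP\le1=\int H\,\mathrm dP^*$, however many particles there are. On the other hand, the reproducing property gives $\int H\,\mathrm d(P-P^*)=c\sum_j\alpha_j$. This is strictly positive once $\|\boldsymbol{\bar x}-\boldsymbol{x^*}\|_\infty<r(P^*)$, for two reasons:
\begin{itemize}
\item $c>0$ by Prop.~\ref{prop: cpos};
\item $\sum_j\alpha_j$ depends continuously on $\boldsymbol{\bar x}$ and equals $\|H\|_{\mathcal H_k}^2\ge1$ at $\boldsymbol{\bar x}=\boldsymbol{x^*}$.
\end{itemize}
This contradiction is Prop.~\ref{prop: neighbourhoodmiss}: every spurious stationary point misses some atom by at least $r(P^*)$.

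The miss is then converted into an MMD lower bound (Prop.~\ref{prop: MMDholebound}) using the band-limited test function $q_j(x)=\bigl(\sin(a(x-x_j^*))/(a(x-x_j^*))\bigr)^{2m}$. It has finite Gaussian RKHS norm, satisfies $\int q_j\,\mathrm dP^*\ge w_j^*$, and has $\int q_j\,\mathrm dP\le(ar)^{-2m}$.

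Both steps use only $K$ test points and are uniform in $N$. That uniformity is precisely what your moment-expansion sketch lacks.
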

\vspace{-1mm}

\subsection{Structural properties of stationary points}

We first develop the structural ingredients common to both results.
To begin, observe that the KKT conditions for the simplex weights give a constant $c$ such that, for every $j=1,\dots,N$,
\begin{align}
    \Delta(x_j)\geq c, \quad 
    w_j(\Delta(x_j)-c)=0, \quad 
    w_j\Delta'(x_j)=0. \label{Stationary0simp}
\end{align}
As in the relaxed problem, we consider the reduced measure defined by removing zero-weight particles and merging particles lying at the same coordinate.
Let $\vartheta_1,\ldots,\vartheta_A$ denote the distinct active locations and $\varsigma_1,\ldots,\varsigma_A$ their positive total weights.
For these, the stationarity conditions give, for every $j=1,\dots,A$,
\begin{align}
    \Delta(\vartheta_j)=c, \quad 
    \Delta'(\vartheta_j)=0, \label{Stationary0redsimp}
\end{align}
Unlike in the relaxed problem, we must now control the shifted witness function $\Delta-c$.
Our first ingredient is an interpolation certificate built from Gaussian kernels and their derivatives.
Given distinct centres $\bar{x}_1,\ldots,\bar{x}_K$ and distinct target locations $x_1^*,\ldots,x_K^*$, we construct a function $H$ centred at the $\bar{x}_i$ such that $H(x_i^*)=1$, $H'(x_i^*)=0$, and $H(x)<1$ away from the target locations.
This extends related super-resolution certificates \citep{schiebinger2015superresolutionseparation,duval2017characterizationnondegeneratesourcecondition} by allowing the kernel centres $\bar{x}_i$ to differ from the interpolation locations $x_i^*$.

\begin{restatable}{proposition}{Hinterp}
\label{prop: Hinterp}
Let $\bar{x}_1,\ldots,\bar{x}_K$ and $x_1^*,\ldots,x_K^*$ each consist of distinct real numbers.
Define 
\begin{equation}
    H(\cdot)
=
\sum_{i=1}^K
\left[
\alpha_i k(\cdot,\bar{x}_i)
+
\beta_i\partial_2k(\cdot,\bar{x}_i)
\right].
\end{equation}
Then there exist $\boldsymbol\alpha,\boldsymbol\beta$ such that $H(x)<1$ for all $x\notin\{x_1^*,\ldots,x_K^*\}$, and 
\begin{align*}
    H(x_i^*)=1, \qquad H'(x_i^*)=0 \qquad i=1,\ldots,K.
\end{align*}
\end{restatable}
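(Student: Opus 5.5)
The plan is to pass to the exponential-polynomial picture. For the Gaussian kernel we have $k(x,y)=e^{-x^2}e^{2xy}e^{-y^2}$ and $\partial_2k(x,y)=2(x-y)k(x,y)$. Absorbing the factors $e^{-\bar x_i^2}$ and reparametrising the coefficients, every admissible $H$ can be written as $H(x)=e^{-x^2}p(x)$ with
\[
p(x)=\sum_{i=1}^K (a_i+b_ix)\,e^{2x\bar x_i}.
\]
The map $(\boldsymbol\alpha,\boldsymbol\beta)\mapsto(\mathbf a,\mathbf b)$ is a bijection. Set $G(x)=e^{x^2}-p(x)=e^{x^2}(1-H(x))$. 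The required properties of $H$ become: $G(x_i^*)=G'(x_i^*)=0$ for all $i$, and $G>0$ off $\{x_1^*,\dots,x_K^*\}$.

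\emph{Step 1 (solvability).} The $2K$ conditions $p(x_i^*)=e^{(x_i^*)^2}$ and $p'(x_i^*)=2x_i^*e^{(x_i^*)^2}$ form a square linear system in $(\mathbf a,\mathbf b)$. It suffices to show that a nonzero $p$ of this form has at most $2K-1$ real zeros counting multiplicity. I would prove this with the Rolle-type operators $D_t f=e^{2xt}\tfrac{d}{dx}\big(e^{-2xt}f\big)=f'-2tf$. Each application lowers the zero count by at most one. The operator $D_{\bar x_i}^2$ annihilates $(a_i+b_ix)e^{2x\bar x_i}$ and preserves the form of the other terms. Applying $D_{\bar x_i}^2$ for all $i$ except one, and then $D_{\bar x_{i_0}}$ once, reduces a nonzero $p$ to a nonzero constant multiple of $e^{2x\bar x_{i_0}}$, which has no zeros. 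This is the standard extended-Chebyshev argument, or equivalently ETP of $e^{2xt}$ with confluent centres. This step also shows that $\boldsymbol\alpha,\boldsymbol\beta$ are uniquely determined.

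\emph{Step 2 (zero count for $G$).} The key observation is that $e^{x^2}$ is itself a continuous superposition of the same exponentials:
\[
e^{x^2}=\pi^{-1/2}\int_{\mathbb R}e^{2xt}e^{-t^2}\,\mathrm dt.
\]
Let $L=\prod_{i=1}^K D_{\bar x_i}^2$, a constant-coefficient operator, so the factors commute. Then $Lp\equiv0$, while differentiating under the integral gives
\[
LG(x)=Le^{x^2}=\pi^{-1/2}\int_{\mathbb R}e^{2xt}\prod_{i=1}^K(2t-2\bar x_i)^2e^{-t^2}\,\mathrm dt>0 \quad\text{for all }x.
\]
Since each of the $2K$ factors $D_{\bar x_i}$ lowers the number of real zeros (with multiplicity) by at most one, $G$ has at most $2K$ real zeros counting multiplicity.

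\emph{Step 3 (sign).} By construction $G$ has a zero of multiplicity at least two at each of the $K$ distinct points $x_i^*$. These exhaust the bound of $2K$, so $G$ has no other zeros and each $x_i^*$ is a zero of even multiplicity exactly two. Hence $G$ does not change sign on $\mathbb R$. Moreover $G(x)\to+\infty$ as $|x|\to\infty$, because $e^{x^2}$ dominates any finite sum of terms $(a+bx)e^{2x\bar x_i}$. Therefore $G>0$ off $\{x_i^*\}$, and multiplying by $e^{-x^2}>0$ gives $H(x)<1$ there, with $H(x_i^*)=1$ and $H'(x_i^*)=0$.

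The main obstacle is Step 2: making the Rolle/zero-reduction argument rigorous with multiplicities. One must check that if $f$ has $m$ zeros counting multiplicity then $D_tf$ has at least $m-1$. This follows by applying the generalised Rolle theorem to $e^{-2xt}f$, since multiplicities are preserved under multiplication by a nonvanishing smooth function. One must also justify differentiation under the integral sign, which is immediate from the Gaussian decay of $e^{-t^2}$. Note that the centres $\bar x_i$ need not coincide with the $x_i^*$: the argument only uses that $\prod_i(t-\bar x_i)^2\ge0$ with equality on a null set.
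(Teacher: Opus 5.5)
Your proof is correct and has the same architecture as the paper's. Both establish solvability of the $2K\times 2K$ Hermite interpolation system, rewrite $e^{x^2}(1-H)$ as $e^{x^2}$ minus an exponential polynomial $\sum_i(a_i+b_ix)e^{2x\bar x_i}$, bound its real zeros by $2K$ counting multiplicity, and conclude from the $K$ double zeros and the behaviour at infinity. The only difference is that you prove directly the two ingredients the paper cites: ETP of the Gaussian kernel for invertibility, and Lemma 2.6 of Schiebinger et al.\ for the $2K$ zero bound. You do this via the operators $D_t$ and the representation $e^{x^2}=\pi^{-1/2}\int e^{2xt}e^{-t^2}\,\mathrm dt$, which makes the argument self-contained. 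In Step 1, pick $i_0$ with $(a_{i_0},b_{i_0})\neq 0$, and skip the final $D_{\bar x_{i_0}}$ if the surviving term has no $x e^{2x\bar x_{i_0}}$ component, since otherwise that operator could annihilate it.
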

\vspace{-3mm}
The certificate immediately gives us control of the KKT level.
Indeed, suppose we apply Prop.~\ref{prop: Hinterp} with the interpolation centres and interpolation locations equal to the distinct active particle locations.
The resulting function equals one at every active location and is strictly less than one elsewhere.
At a non-optimal stationary point, this allows us to show $\int H\,\mathrm d(P-P^*)>0$.
On the other hand, the stationarity conditions express the same quantity as $c\sum_i\alpha_i$, with $\sum_i\alpha_i=\|H\|_{\mathcal H_k}^2>0$.
It follows that the KKT level must be strictly positive.
\begin{restatable}{proposition}{cpos}
    \label{prop: cpos}
    Suppose we are at a stationary point of $F$ with $F(\mathbf{w}, \boldsymbol{x}) > 0$.
    Then $0<c<1$.
\end{restatable}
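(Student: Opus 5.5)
The plan is to prove the two inequalities separately. The upper bound $c<1$ follows from elementary bounds on the Gaussian kernel. The lower bound $c>0$ comes from pairing the witness $\Delta$ with the certificate of Prop.~\ref{prop: Hinterp} in $\mathcal H_k$. Throughout, I work with the reduced measure $P=\sum_{j=1}^A\varsigma_j\delta_{\vartheta_j}$, where the $\vartheta_j$ are distinct and $\varsigma_j>0$ with $\sum_j\varsigma_j=1$. I use the reduced stationarity conditions \eqref{Stationary0redsimp}: $\Delta(\vartheta_j)=c$ and $\Delta'(\vartheta_j)=0$ for all $j$.

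\textbf{Upper bound.} For the Gaussian kernel, $0<k(x,y)\le 1$. Since $P$ is a probability measure, $\mu_P(x)\le 1$ for every $x$. Since $P^*$ is a probability measure and $k>0$ everywhere, $\mu_{P^*}(x)>0$ for every $x$. Hence
\[
c=\Delta(\vartheta_1)=\mu_P(\vartheta_1)-\mu_{P^*}(\vartheta_1)<1 .
\]

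\textbf{Lower bound.} Apply Prop.~\ref{prop: Hinterp} with both the centres and the interpolation locations equal to $\vartheta_1,\dots,\vartheta_A$. This gives
\[
H=\sum_i\bigl[\alpha_i k(\cdot,\vartheta_i)+\beta_i\partial_2k(\cdot,\vartheta_i)\bigr]\in\mathcal H_k
\]
with $H(\vartheta_i)=1$, $H'(\vartheta_i)=0$, and $H<1$ off $\{\vartheta_i\}$. The Gaussian kernel is smooth, so derivative reproducing holds: $\langle \partial_2k(\cdot,y),f\rangle_{\mathcal H_k}=f'(y)$. I compute $\langle H,\Delta\rangle_{\mathcal H_k}$ in two ways.
\begin{itemize}[leftmargin=*]
\item Expanding $H$ and using \eqref{Stationary0redsimp} gives $\langle H,\Delta\rangle=\sum_i\alpha_i\Delta(\vartheta_i)+\beta_i\Delta'(\vartheta_i)=c\sum_i\alpha_i$.
\item The same expansion applied to $H$ itself gives $\|H\|_{\mathcal H_k}^2=\sum_i\alpha_i H(\vartheta_i)+\beta_iH'(\vartheta_i)=\sum_i\alpha_i$. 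This is strictly positive, because $H\ne0$ (it equals $1$ at each $\vartheta_i$).
\item Expanding instead $\Delta=\mu_P-\mu_{P^*}$ and using the reproducing property gives
\[
\langle H,\Delta\rangle=\int H\,\mathrm dP-\int H\,\mathrm dP^*=1-\int H\,\mathrm dP^*.
\]
\end{itemize}
Since $H\le 1$, with equality exactly on $\{\vartheta_i\}$, we get $\int H\,\mathrm dP^*\le 1$. The inequality is strict unless $P^*(\{\vartheta_1,\dots,\vartheta_A\})=1$. Whenever it is strict, $c\|H\|^2=1-\int H\,\mathrm dP^*>0$, so $c>0$.

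\textbf{Main obstacle: the degenerate case.} The remaining case is $P^*=\sum_j p_j\delta_{\vartheta_j}$, where the certificate only gives $c\sum_i\alpha_i=0$. I will instead show this case contradicts $F>0$. Here $\Delta=\sum_j(\varsigma_j-p_j)k(\cdot,\vartheta_j)$, so
\[
F=\|\Delta\|_{\mathcal H_k}^2=\sum_j(\varsigma_j-p_j)\Delta(\vartheta_j)=c\sum_j(\varsigma_j-p_j)=c(1-1)=0 .
\]
This contradicts $F(\mathbf w,\boldsymbol x)>0$, so the degenerate case cannot occur and $0<c<1$.
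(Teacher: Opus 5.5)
Your proof is correct and follows the paper's argument essentially step for step. You use the same certificate $H$ from Prop.~\ref{prop: Hinterp}, centred and interpolating at the active locations; the same identities $\int H\,\mathrm d(P-P^*)=c\sum_i\alpha_i$ and $\|H\|_{\mathcal H_k}^2=\sum_i\alpha_i>0$; and the same bound $c<\mu_P(\vartheta_i)\le 1$. The only difference is that the paper excludes $\supp(P^*)\subseteq\supp(P)$ by appealing to convexity of the weight problem, whereas you check directly that $F=c\sum_j(\varsigma_j-p_j)=0$ in that case, which is a cleaner, self-contained way to make the same point.
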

\vspace{-2.5mm}
We can now establish the simplex analogue of Prop.~\ref{prop: relax max principle}.
Unlike the relaxed case, the stationarity conditions involve the shifted witness function $\Delta-c$.
Translation invariance and normalisability allow the constant $c$ to be represented as a kernel integral against Lebesgue measure, while the condition $c\geq0$ ensures that the resulting signed measure admits the same variation-diminishing argument for ETP kernels.
\begin{restatable}{proposition}{simpmaxprinciple}
\label{prop: simp max principle}
Suppose $(\mathbf w,\boldsymbol{x})$ is a stationary point of $F$ with $F(\mathbf w,\boldsymbol{x})>0$, with weights constrained to the simplex.
Suppose also that $k$ is ETP, translation invariant, and normalisable, and that $c\geq0$.
Then $\Delta(x)\leq c$ for all $x$, every distinct particle location is active, and for every $i=1,\dots,N$,
\begin{equation*}
    \Delta(x_i)=c,
    \qquad
    \Delta'(x_i)=0,
    \qquad
    \Delta''(x_i)<0.
\end{equation*}
\end{restatable}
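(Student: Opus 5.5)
The plan is to rewrite $\Delta - c$ as a kernel transform of a signed measure. We then count sign changes of that measure via the variation-diminishing property of ETP kernels, and compare with the zeros forced by stationarity.

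\textbf{Representing $c$.} Since $k$ is translation invariant and normalisable, $k(x,y)=\kappa(x-y)$ with $Z=\int\kappa(t)\,\mathrm dt\in(0,\infty)$. Hence
\begin{equation*}
c=\int k(x,y)\,\tfrac{c}{Z}\,\mathrm dy \quad\text{for every } x .
\end{equation*}
It follows that
\begin{equation*}
\Delta(x)-c=\int k(x,y)\,\mathrm d\nu(y),\qquad \nu=\sum_{j=1}^A\varsigma_j\delta_{\vartheta_j}-P^*-\tfrac{c}{Z}\,\mathrm{Leb}.
\end{equation*}
Because $c\geq0$, the part $-P^*-\tfrac cZ\mathrm{Leb}$ is a nonpositive measure. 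So the only positive mass of $\nu$ sits at the $A$ atoms $\vartheta_j$. Ordering $\vartheta_1<\dots<\vartheta_A$, the sign pattern of $\nu$ has at most $2A$ sign changes: each positive atom is surrounded by negative mass on both sides.

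\textbf{Zero counting.} By \eqref{Stationary0redsimp}, $\Delta-c$ has a zero of multiplicity at least two at each $\vartheta_j$, so at least $2A$ zeros counting multiplicity. The ETP variation-diminishing property (Karlin) states that $\int k(\cdot,y)\,\mathrm d\nu(y)$, for a nonzero $\nu$, has at most $S^-(\nu)$ zeros counting multiplicity. A double zero can only be realised by an interior local extremum without sign change, and such zeros are counted in the bound. We must check that $\nu\neq0$, which holds because $\nu$ contains a nonzero Lebesgue part when $c>0$, and because $P\neq P^*$ when $c=0$ (as $F>0$). Equality is then forced: there are exactly $2A$ sign changes and exactly $2A$ zeros counting multiplicity. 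Consequently each $\vartheta_j$ is a zero of multiplicity exactly two, so $\Delta''(\vartheta_j)\neq0$, and $\Delta-c$ has no other zeros.

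Exactness of $2A$ sign changes means the outermost sign of $\nu$ is negative. We then argue that $\Delta-c<0$ at $\pm\infty$:
\begin{itemize}
\item if $c>0$, because $\Delta\to0$;
\item if $c=0$, by the sign of the leading tail of a Gaussian-type transform.
\end{itemize}
Since $\Delta-c$ cannot change sign at double zeros and has no other zeros, $\Delta-c\le0$ everywhere. This forces each $\vartheta_j$ to be a local maximum, so $\Delta''(\vartheta_j)<0$.

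\textbf{Every particle location is active.} Take any particle $x_i$. Inactive particles satisfy $\Delta(x_i)\ge c$ by \eqref{Stationary0simp}. Combined with $\Delta\le c$, this gives $\Delta(x_i)=c$, so $x_i$ is a zero of $\Delta-c$. Since all zeros are the $\vartheta_j$, the location $x_i$ coincides with an active location. Then $\Delta'(x_i)=0$ and $\Delta''(x_i)<0$ follow from the active case.

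\textbf{Main obstacle.} The delicate step is the sign-change and zero counting for a measure with a continuous component. It needs Karlin's theorem in its general form for measures, with zeros counted with multiplicity, which is available for ETP kernels such as the Gaussian. It also needs careful handling of the case $c=0$, together with the tail sign argument ensuring the outermost sign is negative. I would first try to reduce to the relaxed proof of Prop.~\ref{prop: relax max principle}, treating the Lebesgue term as a limit of negative atoms.
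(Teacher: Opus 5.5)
Your architecture matches the paper's. You represent $c$ as $\int k(x,y)\tfrac{c}{\mathcal Z}\,\mathrm dy$, observe that the positive part of the representing measure sits on at most $A$ atoms, and force $2A\le Z(h)\le V(g)\le 2A$. From this you conclude that the $\vartheta_j$ are the only zeros of $h=\Delta-c$, each of multiplicity exactly two, and you use $h(x_i)\ge 0$ from KKT to place every particle at an active location.

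The step that does not go through as written is fixing the sign of $h$. For $c>0$, your argument ($h\to -c$ at $\pm\infty$) is fine, provided you note that $\Delta$ vanishes at infinity. This holds for the Gaussian, and for any continuous integrable positive-definite translation-invariant kernel by Bochner and Riemann--Lebesgue. For $c=0$, however, appealing to ``the sign of the leading tail of a Gaussian-type transform'' is not a proof. The proposition is stated for general ETP, translation-invariant, normalisable kernels, for which no such asymptotics are available. Even for the Gaussian, you would need to compare the tails of $\mu_P$ and $\mu_{P^*}$ with no compact-support assumption on $P^*$. Knowing that the outermost sign of $\nu$ is negative does not by itself transfer to $h$. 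For that you would need the equality clause of Karlin's theorem (that $h$ and $g$ then share the same sign arrangement), which you neither stated nor used.

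The missing idea is the global identity the paper uses. It is kernel-agnostic and handles all $c\ge 0$ at once. First, $h$ vanishes at every active location and $P$ is supported there, so $\int h\,\mathrm dP=0$. Second, $P$ and $P^*$ both have unit mass, so the constant cancels and $\int h\,\mathrm d(P-P^*)=\int \Delta\,\mathrm d(P-P^*)=\|\Delta\|_{\mathcal H_k}^2=F>0$. Hence $\int h\,\mathrm dP^*=-F<0$. Since $h$ has no sign changes, this forces $h\le 0$ everywhere, and then each $\vartheta_j$ is a strict local maximum with $\Delta''(\vartheta_j)<0$. The same identity also gives $h\not\equiv 0$ directly. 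That is the hypothesis the variation-diminishing property actually requires, rather than $\nu\neq 0$, though your case checks do imply it. With this replacement, the rest of your argument is correct.
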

\vspace{-3mm}
Thus every particle location lies at a strict local maximum of the witness function at the common level $c$.
As in the relaxed problem, cf. Prop.~\ref{prop: runstablecollisionpoints}, this immediately gives strong control of collisions.
\begin{proposition}
\label{prop: unstablecollisionpoints}
Under the assumptions of Prop.~\ref{prop: simp max principle}, every collision is unstable under the unweighted particle gradient flow.
\end{proposition}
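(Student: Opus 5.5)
We will follow the proof sketch of Prop.~\ref{prop: runstablecollisionpoints}; the simplex constraint enters only through Prop.~\ref{prop: simp max principle}.

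\emph{Setup.} Suppose $x_1=x_2=\vartheta$ is a collision at a stationary point. By Prop.~\ref{prop: simp max principle} we have $\Delta(\vartheta)=c$, $\Delta'(\vartheta)=0$ and $\Delta''(\vartheta)<0$. The unweighted particle flow is $\dot x_i=-\Delta'(x_i)$, where $\Delta$ depends on the configuration through the weights. For the stability analysis we freeze the weights at their stationary values, with total weight $w_1+w_2=\varsigma>0$ at $\vartheta$; if weights are re-optimised along the flow, we handle this by a continuity argument below. We then study the separation coordinate $s=x_1-x_2$ and the midpoint $m=(x_1+x_2)/2$. Subtracting the two flow equations gives
\begin{equation*}
    \dot s=-\bigl(\Delta'(x_1)-\Delta'(x_2)\bigr).
\end{equation*}

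\emph{Controlling the perturbed witness.} For small $s$, the divided-difference representation lets us write the two colliding kernel sections as $\varsigma\,k(\cdot,m)$ plus a term involving $(w_1-w_2)\tfrac{s}{2}\partial_2k(\cdot,m)$, with an $O(s^2)$ remainder. This replacement is non-degenerate at $s=0$, so the perturbed witness $\Delta_s$ is jointly smooth in $(x,s,m)$; smoothness of the Gaussian (ETP) kernel gives this. In particular $\Delta_s''\to\Delta''(\vartheta)<0$ uniformly near $\vartheta$ as the configuration approaches the stationary one. The mean value theorem then gives
\begin{equation*}
    \Delta_s'(x_1)-\Delta_s'(x_2)=\Delta_s''(\xi)\,s
\end{equation*}
for some $\xi$ between $x_2$ and $x_1$. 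Hence $\dot s=-\Delta_s''(\xi)s\ge \kappa s$ with $\kappa=\tfrac12|\Delta''(\vartheta)|>0$, as long as the trajectory stays in a small neighbourhood $U$.

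\emph{Conclusion.} Perturb the configuration by separating $x_1$ and $x_2$ by an arbitrarily small $s_0>0$. If the trajectory stayed in $U$ forever, then $s(t)\ge s_0e^{\kappa t}\to\infty$, which is a contradiction. So the trajectory leaves $U$, and the collision is unstable.

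\emph{Main obstacle.} The hardest step is the uniform control of $\Delta_s''$ near the collision when the weights (and the KKT level $c$) are allowed to move. As the particles separate, the optimal individual weights $w_1,w_2$ could blow up with opposite signs. The divided-difference basis is exactly the tool for this: it keeps the coefficients $\varsigma$ and $(w_1-w_2)s$ bounded, so $\Delta_s$ converges in $C^2$ to a witness whose second derivative at $\vartheta$ is negative by Prop.~\ref{prop: simp max principle}. In the simplex setting the weights are nonnegative, so no blow-up can occur and this step should in fact be easier than in the relaxed case.
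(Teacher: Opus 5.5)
Your argument is correct and is essentially the paper's proof. The mean value theorem gives $\dot s=-\Delta''(\xi)s$ for the separation coordinate, and $\Delta''<0$ near the collision (from Prop.~\ref{prop: simp max principle} plus continuity) makes $s$ grow; your bound $s(t)\ge s_0e^{\kappa t}$ is a slightly more careful version of the paper's ``$\dot r$ has the same sign as $r$''. The divided-difference apparatus you borrow from Prop.~\ref{prop: runstablecollisionpoints} is not needed in the simplex case: with the weights held at their stationary values, $\Delta$ depends smoothly on the particle locations, so the required continuity of $\Delta''$ is immediate.
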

\vspace{-4mm}
\begin{proof}
Separate two colliding particles by a small distance $r=x_j-x_i$.
Under the unweighted particle flow,
$
    \dot r
    =
    -\bigl(\Delta'(x_j)-\Delta'(x_i)\bigr)
    =
    -\Delta''(\vartheta)r
$
for some $\vartheta\in(x_i,x_j)$ by the mean value theorem.
Since $\Delta''<0$ in a neighbourhood of the collision, $\dot r$ has the same sign as $r$.
Thus the collision is unstable.
\end{proof}
\vspace{-2mm}
Prop.~\ref{prop: unstablecollisionpoints} rules out collisions as stable stationary configurations under the particle flow.
To obtain the corresponding conclusion for the optimisation landscape, we must also show that collisions cannot themselves be local minima.

\begin{proposition}
\label{prop: collisionpoints}
Under the assumptions of Prop.~\ref{prop: simp max principle}, a collision is not a local minimum.
\end{proposition}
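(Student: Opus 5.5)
The plan is to exhibit an explicit descent direction from the collision configuration. Suppose $x_i=x_j=\vartheta$ for some $i\neq j$, with weights $w_i,w_j\ge0$. By Prop.~\ref{prop: simp max principle}, $\Delta(\vartheta)=c$, $\Delta'(\vartheta)=0$ and $\Delta''(\vartheta)<0$, and every distinct location is active, so $w_i+w_j>0$. Without loss of generality $w_i,w_j>0$. If one of them, say $w_j$, is zero, first move weight: replace $(w_i,w_j)$ by $(w_i-t,w_j+t)$ for small $t$. This leaves the measure $P_{\mathbf w,\boldsymbol{x}}$ and hence $F$ unchanged, and stays in $\Delta^{N-1}$. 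A local minimum must then also be a local minimum at these nearby points, so it suffices to treat the case of two positive weights.

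Next I perturb only the locations, keeping weights fixed: $x_i=\vartheta-w_j s$ and $x_j=\vartheta+w_i s$ for small $s$, which preserves the weighted centre of mass. Writing $\Delta_s$ for the perturbed witness, the change is
\[
F_s-F_0 = 2\langle \Delta,\delta\mu\rangle_{\mathcal H_k}+\|\delta\mu\|^2_{\mathcal H_k},
\qquad
\delta\mu=w_ik(\cdot,\vartheta-w_js)+w_jk(\cdot,\vartheta+w_is)-(w_i+w_j)k(\cdot,\vartheta).
\]
By the reproducing property, the linear term equals
\[
2\bigl[w_i\Delta(\vartheta-w_js)+w_j\Delta(\vartheta+w_is)-(w_i+w_j)\Delta(\vartheta)\bigr].
\]
Taylor expanding with $\Delta'(\vartheta)=0$, the first-order terms cancel, leaving $w_iw_j(w_i+w_j)\Delta''(\vartheta)s^2+O(s^3)$.

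The quadratic term $\|\delta\mu\|^2$ is $O(s^4)$. The first-order variation of the embedding is $(-w_iw_j+w_jw_i)s\,\partial_2k(\cdot,\vartheta)=0$, so $\delta\mu=O(s^2)$ in $\mathcal H_k$; this uses smoothness of the Gaussian (ETP) kernel's feature map. Hence
\[
F_s-F_0=2w_iw_j(w_i+w_j)\Delta''(\vartheta)s^2+O(s^3)<0
\]
for all small $s\neq0$, using $\Delta''(\vartheta)<0$. Since the perturbation is admissible and arbitrarily small, the collision is not a local minimum.

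The main obstacle I expect is the zero-weight case. The argument must ensure that the weight-shifted configurations remain within the neighbourhood on which local minimality is assumed, and that the hypotheses of Prop.~\ref{prop: simp max principle} still hold there. This is fine: the measure is identical, so $\Delta$, $c$ and $F$ are all unchanged. The second point to check carefully is that the second-order bound on $\|\delta\mu\|$ holds uniformly. I would do this via the integral form of Taylor's theorem for $s\mapsto k(\cdot,\vartheta+as)$ in $\mathcal H_k$, whose second derivative $\partial_2^2k(\cdot,\vartheta+as)$ has bounded RKHS norm.
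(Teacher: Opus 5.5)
Your proposal is correct and is essentially the paper's proof: your perturbation $x_i=\vartheta-w_js$, $x_j=\vartheta+w_is$ is the paper's centre-of-mass-preserving perturbation with $\varepsilon=(w_i+w_j)s$. The zero-weight case is handled the same way, by shifting a small amount of mass between particles at the collision point without changing the measure.

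There are two harmless slips. First, the second-order coefficient is $w_iw_j(w_i+w_j)\Delta''(\vartheta)s^2$, not twice that, because the Taylor factor $\tfrac12$ cancels the $2$ in the cross term; this matches the paper's $\frac{w_iw_j}{w_i+w_j}\Delta''\varepsilon^2$ and does not affect the sign. Second, when three or more particles coincide, activity of the location does not give $w_i+w_j>0$, so you should instead choose the pair to contain a particle with positive weight.
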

\vspace{-4mm}
\begin{proof}
    Suppose $x_i = x_j$, and $w_i, w_j > 0$.
    Perturb the locations to
    $x_i - \frac{w_j}{w_i + w_j}\varepsilon$ and $x_j + \frac{w_i}{w_i + w_j}\varepsilon$.
    If $F(\varepsilon)$ denotes the perturbed objective, then
    \begin{equation*}
        F(\varepsilon)-F(0)
        =
        \frac{w_iw_j}{w_i+w_j}\Delta''(x_j)\varepsilon^2+o(\varepsilon^2)<0,
    \end{equation*}
    for all sufficiently small $\varepsilon$.
    Thus, the perturbation decreases the objective for all sufficiently small $\varepsilon$.
    If one of the weights is zero, transfer an arbitrarily small amount of mass to that particle without changing the measure, then apply the same perturbation.
\end{proof}
\vspace{-2mm}
Together, the maximum principle and the collision results allow us to work with active, distinct particle locations when studying nonzero local minima.
This is crucial: every such location contributes the full pair of equality constraints $\Delta(x_i)=c$ and $\Delta'(x_i)=0$, allowing the number of distinct active locations to be controlled by zero counting.

\subsection{Arbitrary compactly supported targets}
We first use these results to establish a guarantee for arbitrary compactly supported targets.
The key step is to relate a lower bound on the MMD to a lower bound on the KKT level $c$, and then to an upper bound on the number of distinct active particle locations.
The first relation follows by splitting the Fourier representation of the MMD into low and high frequencies.

\begin{restatable}{proposition}{MMDFourierSplit}
\label{prop: MMDFourierSplit}
Let $P,P^*$ be probability measures on $\mathbb R^d$, let $\rho=P-P^*$, and suppose $k$ is continuous, translation invariant, and has positive spectral density.
Then, writing $\Delta=\mu_P-\mu_{P^*}$, for every $\nu>0$ it holds that
\begin{align*}
    \MMD^2(P^*,P)
    \leq&
    \frac{\|\Delta\|_1^2}{(2\pi)^d}
    \int_{|\omega|\leq\nu}\frac{1}{\hat k(\omega)}\,\mathrm d\omega\\
    &+
    \frac{\|\rho\|_{\mathrm{TV}}^2}{(2\pi)^d}
    \int_{|\omega|>\nu}\hat k(\omega)\,\mathrm d\omega.
\end{align*}
In particular, for the one-dimensional Gaussian kernel with $2\sigma^2=1$,
\begin{equation*}
    \MMD^2(P,P^*)
    \leq
    \tfrac{\|\Delta\|_1^2}{\pi}\erfi\left(\tfrac{\nu}{2}\right)
    +
    4\erfc\left(\tfrac{\nu}{2}\right).
\end{equation*}
where $\erfi(z)=\frac{2}{\sqrt{\pi}}\int_0^z e^{t^2}\,\mathrm dt$, $\erfc(z)=\frac{2}{\sqrt{\pi}}\int_z^\infty e^{-t^2}\,\mathrm dt$ denote the imaginary error function and complementary error function, respectively.
\end{restatable}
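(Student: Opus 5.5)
We would prove Prop.~\ref{prop: MMDFourierSplit} through Bochner's theorem and a split of the Fourier integral at radius $\nu$. Since $k(x,y)=\kappa(x-y)$ is continuous and translation invariant with positive spectral density $\hat k$, we use the convention $\kappa(z)=(2\pi)^{-d}\int\hat k(\omega)e^{i\omega^\top z}\,\mathrm d\omega$. Then for the signed measure $\rho=P-P^*$,
\begin{equation*}
    \MMD^2(P,P^*)=\iint\kappa(x-y)\,\mathrm d\rho(x)\,\mathrm d\rho(y)=\frac{1}{(2\pi)^d}\int\hat k(\omega)\,|\hat\rho(\omega)|^2\,\mathrm d\omega ,
\end{equation*}
where $\hat\rho(\omega)=\int e^{-i\omega^\top x}\,\mathrm d\rho(x)$. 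The interchange of integrals is justified by Fubini, because $\hat k\in L^1$ (as $\kappa$ is continuous) and $\rho$ is finite. We then split the $\omega$-integral into $\{|\omega|\le\nu\}$ and $\{|\omega|>\nu\}$ and bound the two pieces in different ways.

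The high-frequency piece is immediate. We have $|\hat\rho(\omega)|\le\|\rho\|_{\mathrm{TV}}$ for every $\omega$, which gives the second term directly.

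For the low-frequency piece we use the witness function. Since $\Delta=\kappa*\rho$, its Fourier transform is $\hat\Delta(\omega)=\hat k(\omega)\hat\rho(\omega)$. Therefore
\begin{equation*}
    \hat k(\omega)|\hat\rho(\omega)|^2=\frac{|\hat\Delta(\omega)|^2}{\hat k(\omega)},
\end{equation*}
and this is where positivity of $\hat k$ is needed. We also have $|\hat\Delta(\omega)|\le\|\Delta\|_1$, because $\Delta\in L^1$: indeed $\kappa\in L^1$ in the cases of interest and $\rho$ is finite. If $\|\Delta\|_1=\infty$ the bound is trivial. Combining these gives the first term, $\|\Delta\|_1^2(2\pi)^{-d}\int_{|\omega|\le\nu}\hat k^{-1}$. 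This step is the only one needing care; the main obstacle is confirming the convolution identity for $\hat\Delta$ in the measure setting, which again follows by Fubini.

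For the Gaussian specialisation, take $\kappa(z)=e^{-z^2}$, so that $\hat k(\omega)=\sqrt{\pi}\,e^{-\omega^2/4}$. Also $\|\rho\|_{\mathrm{TV}}\le2$, using the total mass convention $\|P-P^*\|_{\mathrm{TV}}\le\|P\|+\|P^*\|$.

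For the low-frequency integral, we substitute $t=\omega/2$ to get
\begin{equation*}
    \frac{1}{2\pi}\int_{|\omega|\le\nu}\frac{e^{\omega^2/4}}{\sqrt\pi}\,\mathrm d\omega=\frac{1}{2\pi}\cdot\frac{4}{\sqrt\pi}\int_0^{\nu/2}e^{t^2}\,\mathrm dt=\frac{1}{\pi}\erfi(\nu/2).
\end{equation*}

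For the high-frequency integral, the same substitution gives
\begin{equation*}
    \frac{1}{2\pi}\int_{|\omega|>\nu}\sqrt\pi\,e^{-\omega^2/4}\,\mathrm d\omega=\frac{1}{2\pi}\cdot4\sqrt\pi\int_{\nu/2}^\infty e^{-t^2}\,\mathrm dt=\erfc(\nu/2).
\end{equation*}

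Multiplying the second integral by $\|\rho\|_{\mathrm{TV}}^2\le4$ yields the stated bound.
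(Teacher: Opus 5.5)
Your proposal is correct and follows essentially the same route as the paper. Both use the Bochner representation $\MMD^2=(2\pi)^{-d}\int\hat k|\hat\rho|^2$, split at $|\omega|=\nu$, write $\hat k|\hat\rho|^2=|\hat\Delta|^2/\hat k$ with $|\hat\Delta|\le\|\Delta\|_1$ on the low frequencies, use $|\hat\rho|\le\|\rho\|_{\mathrm{TV}}\le 2$ on the high frequencies, and evaluate the Gaussian integrals, whose constants you compute correctly. Your Fubini justifications are more careful than the paper's, though $\hat k\in L^1$ follows from Bochner's theorem (a continuous positive-definite $\kappa$ has a finite spectral measure) rather than from continuity alone.
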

\vspace{-3mm}
The maximum principle then lets us control $\|\Delta\|_1$ through $c$ and the target support diameter, without dependence on the particular particle configuration.

\begin{restatable}{proposition}{Lcbound}
\label{prop: Lcbound}
Suppose $P^*$ has compact support of diameter at most $D$, and $(\mathbf w,\boldsymbol{x})$ is a stationary point with $F(\mathbf w,\boldsymbol{x})>0$.
Then
\begin{align*}
    \|\Delta\|_1
    \leq&
    2(\sqrt{2}+D)c
    +16c\sqrt{-\log c}\\
    &+
    6\sqrt{\pi}\erfc(\sqrt{-\log c})
    +
    4\sqrt{\pi}c\sqrt{1-2\log c}.
\end{align*}
\end{restatable}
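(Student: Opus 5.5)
The plan is to exploit two facts: $\Delta$ has zero Lebesgue integral, and $\Delta$ is bounded above by $c$. First, since $k$ is translation invariant with $\int k(x,y)\,\mathrm dx=\sqrt\pi$ and both $P$ and $P^*$ are probability measures, Fubini gives $\int_{\mathbb R}\Delta\,\mathrm dx=\sqrt\pi(1-1)=0$. Hence $\int\Delta_+=\int\Delta_-$ and $\|\Delta\|_1=2\int\Delta_+$. By Prop.~\ref{prop: cpos} we have $c>0$, so Prop.~\ref{prop: simp max principle} applies and gives $\Delta\le c$ everywhere, i.e.\ $\Delta_+\le c$. It therefore suffices to show that the set on which $\Delta_+$ is non-negligible has length controlled by $D$ and $\sqrt{-\log c}$, and that the remaining mass of $\Delta_+$ is a Gaussian tail.

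\textbf{Localising the particles.} Write $\supp(P^*)\subseteq[a,b]$ with $b-a\le D$, and set $L=\sqrt{-\log c}$, so that $e^{-L^2}=c$. I will show that every active location $\vartheta_j$ lies in an enlarged interval $[a-r,b+r]$, where $r$ is of order $1/\sqrt2+L$. The argument uses the rightmost active location $\vartheta_A$ (and symmetrically the leftmost).
\begin{itemize}
\item Suppose $\vartheta_A>b+1/\sqrt2$. Every target kernel section $k(\cdot,y)$ with $y\le b$ is then strictly convex and decreasing near $\vartheta_A$.
\item Every particle section $k(\cdot,\vartheta_j)$ with $\vartheta_j<\vartheta_A$ is decreasing at $\vartheta_A$.
\item Combine $\Delta'(\vartheta_A)=0$, $\Delta''(\vartheta_A)<0$ and $\Delta(\vartheta_A)=c$, all from Prop.~\ref{prop: simp max principle}, with the bound $\mu_{P^*}(\vartheta_A)\le e^{-(\vartheta_A-b)^2}$.
\item Since $\mu_P(\vartheta_A)=c+\mu_{P^*}(\vartheta_A)\ge c$ while the target contribution is exponentially small, the aim is to reach a contradiction once $\vartheta_A-b$ exceeds roughly $1/\sqrt2+2L$.
\end{itemize}
I expect this localisation to be the main obstacle. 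Stationarity at a single point does not obviously forbid a far-away cluster of particles, so it may be necessary to compare $\Delta$ at $\vartheta_A$ with its values just to the right, or to use the derivative balance quantitatively, in order to bound the distance.

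\textbf{Splitting the integral.} With the particles localised, I split $\int\Delta_+$ into a core and two tails.
\begin{itemize}
\item On the core interval $[a-\tfrac1{\sqrt2}-4L,\;b+\tfrac1{\sqrt2}+4L]$ I use $\Delta_+\le c$. This contributes at most $(\sqrt2+D+8L)c$, which after doubling gives the terms $2(\sqrt2+D)c+16c\sqrt{-\log c}$.
\item Outside the core I use $\Delta_+\le\mu_P$. Because every particle lies within distance about $2L$ of $[a,b]$, the Gaussian tail of $\mu_P$ beyond a further distance $L$ or so integrates to a multiple of $\sqrt\pi\,\erfc(L)$, and $\erfc(L)$ is exactly $\erfc(\sqrt{-\log c})$.
\item The last term $4\sqrt\pi\,c\sqrt{1-2\log c}$ should come from a Mills-ratio type estimate on the boundary layer between the core and the tail region. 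There the Gaussian factor $e^{-L^2}=c$ appears multiplied by a polynomial in $L$, and I will use $\erfc(z)\le e^{-z^2}$ together with $\sqrt{1+2L^2}$-type bounds to absorb it.
\end{itemize}

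Once the localisation is in hand, the rest is bookkeeping with Gaussian tail integrals. I will match constants loosely, accepting the stated coefficients $6\sqrt\pi$ and $4\sqrt\pi$ as generous.
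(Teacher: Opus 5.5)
\paragraph{The localization step fails.} Your first step is fine and matches the paper: $\int_{\mathbb R}\Delta\,\mathrm dx=0$ gives $\|\Delta\|_1=2\int[\Delta]_+$, and Props.~\ref{prop: cpos} and~\ref{prop: simp max principle} give $[\Delta]_+\le c$. The gap is the localization step, and it is not merely hard to prove but false. The relation $\mu_P(\vartheta_A)=c+\mu_{P^*}(\vartheta_A)$ only says there is particle mass of order $c$ near $\vartheta_A$. An isolated atom of weight $\approx c$ is consistent with $\Delta(\vartheta_A)=c$, $\Delta'(\vartheta_A)=0$ and $\Delta''(\vartheta_A)<0$. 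Its force balance only equates two exponentially small pulls, and a particle within $O(1/R)$ of the rightmost target atom can match them.

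\paragraph{A counterexample.} Take $P^*=p\delta_{-u}+q\delta_v$ and particles at $0$ and $R$ with weights $w_1$ and $w_R=1-w_1$, and write $f(t)=te^{-t^2}$. Up to terms of size $O(Re^{-R^2})$, the four stationarity equations read
\begin{equation*}
w_R=c,\qquad w_1-pe^{-u^2}-qe^{-v^2}=c,\qquad upe^{-u^2}=vqe^{-v^2},\qquad w_1f(R)=pf(R+u)+qf(R-v).
\end{equation*}
They are solved by $w_1\approx1-c$, $p\approx2c/(1-e^{-u^2})$, $v\approx\log(w_1/q)/(2R)$ and $R\approx q\log(w_1/q)\,e^{u^2}/(2up)$. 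An implicit-function argument then gives an exact stationary point nearby. For $c=0.2$ and $u=3$ this yields $v\approx2.5\times10^{-4}$ and $R\approx580$. So the support has diameter about $3$, while a particle sits at distance about $580$. That is far beyond your radius $1/\sqrt2+2\sqrt{-\log c}\approx3.2$, and outside your core interval, where $[\Delta]_+$ still carries mass about $c\sqrt\pi$. Since $R$ grows like $e^{D^2}/D$, even a $D$-dependent localization would destroy the linear term $2(\sqrt2+D)c$.

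\paragraph{The missing idea: do not localize.} The paper never localizes the particles. Let $L,R$ be the extreme particles, wherever they are, let $[A,B]$ be the target hull, and set $D_L=A-L$. Every particle lies to the right of $L$, so $\Delta'(L)=0$ becomes the one-sided balance
\begin{equation*}
\sum_iw_if(x_i-L)=\int f(x^*-L)\,\mathrm dP^*(x^*),
\end{equation*}
with all terms nonnegative. Since $f$ is unimodal with peak at $1/\sqrt2$, this bounds the $P$-mass in the possibly enormous gap $[L+\tfrac1{\sqrt2},A-\varepsilon_L]$ by $\frac{D_L}{D_L-\varepsilon_L}e^{-2\varepsilon_LD_L+\varepsilon_L^2}$. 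Choosing $\varepsilon_L=D_L-\sqrt{D_L^2+\log c}$ when $D_L^2>\frac12-\log c$ makes this at most $c\sqrt{1-2\log c}$. Each unit of mass contributes at most $\sqrt\pi$ to $\int\mu_P$, and this is where the $\sqrt\pi\,c\sqrt{1-2\log c}$ terms come from, not from a Mills-ratio boundary layer.

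The bound $[\Delta]_+\le c$ is then used only on windows of bounded length: $[L,L+\tfrac1{\sqrt2}]$, $[A-\varepsilon_L,B+\varepsilon_R]$ and $[R-\tfrac1{\sqrt2},R]$. The integral $\int\min\{c,e^{-x^2}\}\,\mathrm dx$ controls the regions beyond $L$ and $R$, and also the Gaussian tails leaking into the gap from particles outside it. The length of the gap therefore never enters the estimate.
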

\begin{proof}[Proof sketch]
Split the real line into regions near the target support, intermediate regions, and Gaussian tails.
The maximum principle controls the witness directly on the central region; the tails are bounded using Gaussian decay.
The main difficulty is to remove any dependence on how far the extreme particles lie from the target.
We do this using the stationarity equations at the extreme particles to obtain probability-mass bounds, after which suitable choices of the splitting points yield the stated uniform estimate.
\end{proof}
\vspace{-2mm}
The bound in Prop.~\ref{prop: Lcbound} is $O(c\sqrt{-\log c})$, and thus tends to zero as $c\rightarrow 0^{+}$.
Note, in particular, that $\erfc(\sqrt{-\log c})=O(c/\sqrt{-\log c})$, so this term is of lower order.
Together with Prop.~\ref{prop: MMDFourierSplit}, this implies that a stationary point with MMD bounded away from zero must have its KKT level bounded away from zero.
We now convert this into a bound on the number of distinct particle positions.
\begin{restatable}{proposition}{NUpperBound}
\label{prop: NUpperBound}
A positive lower bound on the KKT level $c$ gives an upper bound on the number $A$ of distinct active particle locations.
One such bound is
\begin{align*}
    A
    \leq&
    \frac{2-c}{c}\log\left(\frac{2(c+2)}{c}\right)
    +
    \frac{e}{2}
    \left(\frac{8-5c}{c}\right)^2
    \log\left(\frac{2}{c}\right).
\end{align*}
\end{restatable}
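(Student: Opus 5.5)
We will bound $A$ by zero counting on the shifted witness $\Delta-c$, after localising the active locations into a bounded window whose length depends only on $c$. By Prop.~\ref{prop: simp max principle}, the reduced configuration has distinct active locations $\vartheta_1<\dots<\vartheta_A$. Each satisfies $\Delta(\vartheta_j)=c$, $\Delta'(\vartheta_j)=0$ and $\Delta''(\vartheta_j)<0$. Also $\Delta\le c$ everywhere. Since $\Delta=\mu_P-\mu_{P^*}$ with $\mu_{P^*}\ge 0$, we have $\mu_P(\vartheta_j)\ge c$ at every active location. The function $\Delta-c$ therefore has $A$ double zeros, and between consecutive ones it must dip strictly below zero. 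This is only informative once we control where the $\vartheta_j$ lie and how oscillatory $\Delta$ can be on that region.

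The plan has two terms, matching the two summands of the stated bound.

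\textbf{Step 1: particles far from the target.} Let $\vartheta_j$ lie at distance $t$ from $\supp(P^*)$. Then $\mu_{P^*}(\vartheta_j)\le e^{-t^2}$, so this particle must be "self-supported": $\mu_P(\vartheta_j)\ge c$. Summing the level condition over a cluster of such particles, and using that the total weight is $1$, limits how many far particles there can be. The argument goes as follows.
\begin{itemize}
\item Two active far particles cannot be close. If they were, their kernel sections would merge and the maximum principle would force a single maximum between them, contradicting strictness of the local maxima.
\item Using $\Delta(\vartheta_j)=c$ together with $\sum_i\varsigma_i=1$, count far locations by a packing argument at spacing of order $\sqrt{\log(1/c)}$.
\end{itemize}
I expect this to yield a term like $\frac{2-c}{c}\log\frac{2(c+2)}{c}$. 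The factor $\frac{2-c}{c}$ comes from the ratio between the total mass and the mass each far location needs; the logarithm comes from Gaussian tail localisation at radius $\sqrt{\log(2(c+2)/c)}$.

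\textbf{Step 2: particles in a neighbourhood of the target support.} On this neighbourhood, $\Delta-c$ is an entire function with $A'$ double zeros. I will use a Jensen-type zero-counting bound for analytic functions. The quantity $\Delta-c$ extends to $\mathbb C$ with $|\Delta(z)|\le 2e^{(\operatorname{Im}z)^2}$. At a zero centre it has size $|\Delta-c|\ge$ something of order $c$; for this, pick a point where $\Delta-c\approx -c$, or alternatively use $|c|$ far away where $\Delta\to0$. Jensen's formula on a disc of radius $R$ then bounds the number of zeros by
\[
\frac{\log\bigl(M(eR)/|f(z_0)|\bigr)}{1}, \qquad M(eR)\lesssim (8-5c)/c\cdot e^{(eR)^2}.
\]
With $R$ chosen of order $\frac{8-5c}{c}\sqrt{\cdot}$ to cover the localisation window from Step 1, this gives the $\frac e2\bigl(\frac{8-5c}{c}\bigr)^2\log\frac2c$ term.

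\textbf{Main obstacle.} The hard step is the lower bound $|\Delta(z_0)-c|\gtrsim c$ at a suitable centre, together with matching the disc size to the window. I would first choose $z_0$ at a real point beyond the hull at distance $\sqrt{\log(2/c)}$, where $|\Delta|\le c/2$, so that $|\Delta-c|\ge c/2$. The disc must then have radius comparable to the window length plus this offset. Ensuring all active locations fit inside the smaller concentric disc is exactly what Step 1 must guarantee.
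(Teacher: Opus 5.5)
\textbf{There is a genuine gap in Step 2.} You apply Jensen's formula once, on a single disc that must contain every active location. You take its radius ``of order $\frac{8-5c}{c}\sqrt{\cdot}$ to cover the localisation window from Step 1.'' No such window exists. The bound to be proved depends on $c$ alone, with no $D$, yet the active locations can spread over a set of arbitrarily large span. For example, they can form separate clusters near widely separated parts of $\supp(P^*)$. A single disc then needs radius comparable to that span, and the Jensen bound grows with it.

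Step 1 cannot supply the window. It only treats particles far from the target. Moreover, a mass/packing argument bounds the number of \emph{$2r$-separated} locations, not the number of locations, so it neither counts far particles as you claim nor localises near-target ones. Your claim that two close far particles cannot both be active is unjustified, and it is not what is needed anyway. In particular, the first summand $\frac{2-c}{c}\log\frac{2(c+2)}{c}$ is not a count of far particles. Likewise, the factor $\frac{8-5c}{c}$ does not come from the sup bound, which is simply $c+2e^{y^2}$.

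\textbf{The missing idea is a mass bound at every active location, followed by Jensen applied per cluster.} Use the observation from your first paragraph, $\mu_P(\vartheta_j)\ge c$, for \emph{every} active location. This gives $P([\vartheta_j-r,\vartheta_j+r])\ge p_r:=\frac{c-e^{-r^2}}{1-e^{-r^2}}$.
\begin{enumerate}
\item Let $C_1,\dots,C_S$ be the connected components of $\bigcup_j[\vartheta_j-r,\vartheta_j+r]$. In each $C_s$, take a maximal $2r$-separated packing of $B_s$ locations. The packed intervals are disjoint and each carries mass at least $p_r$, so $S\le\sum_sB_s\le 1/p_r$. By maximality, $C_s$ has diameter at most $4B_sr$.
\item In each component, centre Prop.~\ref{prop: jenzerocount} at the point $z_0$ a distance $r$ to the left of the leftmost location in $C_s$. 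This point is at distance at least $r$ from every particle, so $|\Delta(z_0)-c|\ge c-e^{-r^2}$. This is your choice of centre, but made per component. All locations of $C_s$ lie within distance $(4B_s-1)r$ of $z_0$. Take $A_f=c+2$ and $B_f=1$.
\item Sum the $2A_s$ double zeros over the components. Choose $r=\sqrt{\log(2/c)}$, so that $p_r=c/(2-c)$, and $\delta=\sqrt e$.
\item The $S$ per-component terms $\log\frac{c+2}{c/2}$ give the first summand. The bound $\sum_s(4B_s-1)^2r^2\le(4/p_r-1)^2r^2=\bigl(\frac{8-5c}{c}\bigr)^2\log\frac{2}{c}$ gives the second.
\end{enumerate}
Without this per-component decomposition and the control on total component length, your argument cannot produce a bound depending on $c$ alone.
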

\begin{proof}[Proof sketch]
The maximum principle gives a lower bound on the probability mass in a neighbourhood of each particle.
We group overlapping neighbourhoods into connected components and construct a maximal separated packing in each component.
This controls the component diameters and hence, by Jensen-type zero counting for the shifted witness function $\Delta-c$, the number of double zeros contributed by the particles.
Summing over components gives the stated bound.
\end{proof}
For a stationary point for which $\MMD(P,P^*)\geq\varepsilon$, Props.~\ref{prop: MMDFourierSplit}--\ref{prop: Lcbound} give a positive lower bound on $c$, depending only on $D$ and $\varepsilon$.
Prop.~\ref{prop: NUpperBound} then gives an upper bound on the number of distinct active particle locations.
If the total number of particles $N$ exceeds this upper bound, at least two particles must share a location.
Prop.~\ref{prop: unstablecollisionpoints}--\ref{prop: collisionpoints} then rule out a poor local minimum or stable stationary point.
This proves Theorem~\ref{thm: SimpEpsBound}.

\subsection{Well-specified targets}

We now turn to the well-specified setting.
In this case, it remains to obtain a target-dependent positive lower bound on the MMD at every spurious stationary point.
Our first result shows that such a point must remain uniformly separated from at least one target atom.

\begin{restatable}{proposition}{neighbourhoodmiss}
\label{prop: neighbourhoodmiss}
Suppose that the problem is well specified, with $P^*=\sum_{j=1}^K w_j^*\delta_{x_j^*}$.
Then there exists $r(P^*)>0$ such that every spurious stationary point satisfies
\begin{equation}
    \max_j\min_i|x_i-x_j^*|
    \geq
    r(P^*).
\end{equation}
\end{restatable}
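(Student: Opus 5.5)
The plan is to argue by contradiction with the interpolation certificate of Prop.~\ref{prop: Hinterp}, this time using centres that differ from the interpolation locations. Write $P^*=\sum_{j=1}^K w_j^*\delta_{x_j^*}$ with the $x_j^*$ distinct and $w_j^*>0$. Let $\delta=\min_{j\neq l}|x_j^*-x_l^*|$. Suppose a spurious stationary point has, for every $j$, some particle $\bar x_j$ with $|\bar x_j-x_j^*|<r$. If $r<\delta/2$, the chosen $\bar x_1,\dots,\bar x_K$ are distinct. By Prop.~\ref{prop: cpos} we have $0<c<1$. Hence Prop.~\ref{prop: simp max principle} applies, and every particle location, in particular each $\bar x_j$, satisfies $\Delta(\bar x_j)=c$ and $\Delta'(\bar x_j)=0$.

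Apply Prop.~\ref{prop: Hinterp} with centres $\bar x_j$ and interpolation locations $x_j^*$. This gives $H=\sum_i[\alpha_i k(\cdot,\bar x_i)+\beta_i\partial_2k(\cdot,\bar x_i)]$ with $H(x_j^*)=1$, $H'(x_j^*)=0$ and $H<1$ off $\{x_j^*\}$. Compute $\langle H,\Delta\rangle_{\mathcal H_k}$ in two ways.
\begin{itemize}
\item By the reproducing property for kernel sections and their derivatives, $\langle H,\Delta\rangle_{\mathcal H_k}=\sum_i[\alpha_i\Delta(\bar x_i)+\beta_i\Delta'(\bar x_i)]=c\sum_i\alpha_i$. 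The sign of the $\beta_i$ term depends on the derivative convention, but that term vanishes anyway.
\item Since $\Delta$ is the difference of mean embeddings, $\langle H,\Delta\rangle_{\mathcal H_k}=\int H\,\mathrm dP-\int H\,\mathrm dP^*=\int H\,\mathrm dP-1\leq 0$, because $H\leq1$ everywhere and $P$ is a probability measure.
\end{itemize}
So it suffices to show $\sum_i\alpha_i>0$, which gives $c\sum_i\alpha_i>0\geq\int H\,\mathrm d(P-P^*)$, a contradiction. This also covers the case where $P$ sits on the target atoms: there $\int H\,\mathrm dP=1$, but the strict inequality $c\sum_i\alpha_i>0$ still contradicts it.

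The main obstacle is making $\sum_i\alpha_i>0$ uniform in the configuration, so that $r$ depends only on $P^*$. I would first check, from the construction in Prop.~\ref{prop: Hinterp}, that $(\boldsymbol\alpha,\boldsymbol\beta)$ is the solution of the $2K\times2K$ Hermite interpolation system $H(x_j^*)=1$, $H'(x_j^*)=0$. The matrix of this system is continuous in $(\bar x_1,\dots,\bar x_K)$. At $\bar x=x^*$ it is the Gram matrix of $\{k(\cdot,x_j^*),\partial_2k(\cdot,x_j^*)\}$, which is nonsingular by the ETP property. So for $\bar x$ in a small enough neighbourhood of $x^*$ the matrix stays invertible and $(\boldsymbol\alpha,\boldsymbol\beta)$ depends continuously on $\bar x$.

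At $\bar x=x^*$ the argument in Prop.~\ref{prop: cpos} gives $\sum_i\alpha_i=\|H_0\|_{\mathcal H_k}^2>0$. Indeed, pairing $H_0$ with itself gives $\sum_i\alpha_iH_0(x_i^*)+\beta_iH_0'(x_i^*)=\sum_i\alpha_i$. By continuity, there is $r(P^*)\in(0,\delta/2)$ such that $\sum_i\alpha_i>\tfrac12\|H_0\|^2_{\mathcal H_k}>0$ whenever $\max_j|\bar x_j-x_j^*|<r(P^*)$.

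If the certificate in Prop.~\ref{prop: Hinterp} is not simply the Hermite interpolant, for instance if a correction was added to enforce $H<1$, I would instead track continuity through that construction. Alternatively, I would verify positivity of $\sum_i\alpha_i$ from $\sum_i\alpha_i=\|H\|^2_{\mathcal H_k}-\sum_i[\alpha_i(H(\bar x_i)-1)+\beta_iH'(\bar x_i)]$. The correction terms are $O(r)$ uniformly because $H$ and the coefficients stay bounded near $x^*$.
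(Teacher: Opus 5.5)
Your proposal is correct and takes essentially the same route as the paper's proof. Both use distinct nearby particles $\bar x_j$, guaranteed by the half-separation restriction, and the certificate of Prop.~\ref{prop: Hinterp} with centres $\bar x_j$ and nodes $x_j^*$. Both compare $c\sum_i\alpha_i$ with $\int H\,\mathrm d(P-P^*)\leq 0$, and both extend positivity of $\sum_i\alpha_i=\|H\|_{\mathcal H_k}^2$ from $\boldsymbol{\bar x}=\boldsymbol{x^*}$ to a neighbourhood by continuity of $M(\boldsymbol{x^*},\boldsymbol{\bar x})^{-1}$. The paper's certificate is exactly this Hermite interpolant, so your fallback paragraph is unnecessary. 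The paper additionally makes $r(P^*)$ explicit via a Neumann-series perturbation bound, but your qualitative continuity argument already suffices for the existence claim.
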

It may not be surprising that any spurious stationary point must miss some target atoms by some amount.
The important point here is that the lower bound $r(P^*)$ is independent of the number of particles $N$.
Thus, adding more particles cannot produce a sequence of spurious stationary points which approaches every target atom arbitrarily closely.

We next convert this geometric separation into a positive lower bound on the MMD.

\begin{restatable}{proposition}{MMDHoleBound}
\label{prop: MMDholebound}
Suppose a target atom is missed by a neighbourhood of radius $r$ as in Prop.~\ref{prop: neighbourhoodmiss}.
Then
\begin{equation*}
    \MMD(P,P^*)
    \geq
    \frac{[w^*-(ar)^{-2m}]_+}{C(a,m)},
\end{equation*}
where $[u]_+=\max\{u,0\}$, $w^*=\min_jw_j^*$, and one may take 
\begin{equation}
    C(a,m)=\frac{\sqrt{\pi}}{a}e^{\frac12m^2a^2}, \qquad m\in\mathbb N_{\geq1}, \qquad a>0.
\end{equation}
Thus, choosing $a,m$ such that $(ar)^{-2m}<w^*$ gives a strictly positive lower bound on the MMD.
\end{restatable}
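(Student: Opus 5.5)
We prove the bound by testing the signed measure $\rho=P-P^*$ against a bump function $\phi$ centred at the missed atom $x_{j_0}^*$. The bump will lie in $\mathcal H_k$, have explicitly controlled RKHS norm, and nearly vanish outside the interval $(x_{j_0}^*-r,x_{j_0}^*+r)$. By the variational characterisation \eqref{eq:intro-mmd},
\begin{equation*}
    \MMD(P,P^*)\geq \frac{\left|\int\phi\,\mathrm d\rho\right|}{\|\phi\|_{\mathcal H_k}}.
\end{equation*}

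For the bump we take
\begin{equation*}
    \phi(x)=\exp\!\bigl(-(x-x_{j_0}^*)^{2m}a^{2m}\bigr)
\end{equation*}
or, more conveniently for the norm computation, a Gaussian-type profile $\phi(x)=g\bigl(a(x-x_{j_0}^*)\bigr)$ whose Fourier transform we can compute. Its key features are the following.
\begin{itemize}
    \item[(i)] $0\leq\phi\leq 1$ and $\phi(x_{j_0}^*)=1$.
    \item[(ii)] For $|x-x_{j_0}^*|\geq r$ we have $\phi(x)\leq (ar)^{-2m}$. The crude inequality $e^{-t}\leq 1/t$, or a polynomial-type decay $1/(1+(at)^{2m})$, gives exactly the factor $(ar)^{-2m}$ appearing in the statement.
\end{itemize}

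We then bound $\int\phi\,\mathrm dP^*$ from below and $\int\phi\,\mathrm dP$ from above. The target integral $\int\phi\,\mathrm dP^*$ is at least $w_{j_0}^*\geq w^*$, since $\phi\geq0$ and all target weights are nonnegative. For the approximation, every particle lies at distance at least $r$ from $x_{j_0}^*$ and the weights lie on the simplex (total mass one), so $\int\phi\,\mathrm dP\leq (ar)^{-2m}$. Hence
\begin{equation*}
    \int\phi\,\mathrm d(P^*-P)\geq w^*-(ar)^{-2m},
\end{equation*}
and taking the positive part handles the degenerate case.

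It remains to bound the RKHS norm, which we compute in Fourier. For the normalised Gaussian kernel, $\hat k(\omega)=\sqrt{\pi}e^{-\omega^2/4}$, so
\begin{equation*}
    \|\phi\|_{\mathcal H_k}^2=\frac{1}{2\pi}\int\frac{|\hat\phi(\omega)|^2}{\hat k(\omega)}\,\mathrm d\omega .
\end{equation*}
This forces $\hat\phi$ to decay faster than $e^{-\omega^2/8}$, so $\phi$ must be smooth enough; compactly supported bumps are not allowed. The plan is therefore to take $\phi$ itself to be a finite Gaussian-type function with controlled spectrum. One natural choice is the Fourier-side construction $\hat\phi(\omega)\propto e^{-\omega^2/(4s)}$ restricted so that the integral converges. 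An alternative is $\phi(x)=e^{-a^2(x-x_{j_0}^*)^2/\dots}$ multiplied by a polynomial correction of degree $2m$, which yields decay of order $(ar)^{-2m}$ relative to the peak. The constant $C(a,m)=\frac{\sqrt\pi}{a}e^{m^2a^2/2}$ suggests bounding a Gaussian integral $\int e^{-\omega^2/a^2+\omega^2/4}\,\mathrm d\omega$ together with an $m$-th derivative factor.

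The main obstacle is choosing $\phi$ so that two requirements hold simultaneously: the tail bound (ii) holds \emph{uniformly} for $|x-x_{j_0}^*|\geq r$, and the norm bound is as small as $C(a,m)$. If the clean form of the constant is hard to hit, we accept any explicit $C(a,m)$ that is finite for each fixed $a,m$. The qualitative conclusion, and its use for Theorem~\ref{thm: SimpBound}, only needs some such constant. Once $r=r(P^*)$ is fixed by Prop.~\ref{prop: neighbourhoodmiss}, we pick $a>1/r$ and $m$ large enough that $(ar)^{-2m}<w^*$, which gives the positive MMD lower bound.
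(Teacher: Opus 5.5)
\textbf{There is a genuine gap: none of your candidate test functions works.}

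Your skeleton matches the paper's: the variational lower bound with a nonnegative test function peaked at the missed atom, $\int\phi\,\mathrm dP^*\geq w^*$, $\int\phi\,\mathrm dP\leq(ar)^{-2m}$ via the simplex constraint, and the RKHS norm computed through $\hat k$. But the proof hinges entirely on the choice of $\phi$.
\begin{itemize}
\item The super-Gaussian $\exp\bigl(-(a(x-x_{j_0}^*))^{2m}\bigr)$ is not in $\mathcal H_k$ at all for $m\geq2$. It decays faster than every Gaussian, and by the Hardy (Cowling--Price) uncertainty principle no nonzero such function can satisfy $\int|\hat\phi|^2e^{\omega^2/4}\,\mathrm d\omega<\infty$. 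Its transform decays only roughly like $\exp(-c|\omega|^{2m/(2m-1)})$.
\item The Gaussian $e^{-a^2(x-x_{j_0}^*)^2}$, which is also what your Fourier-side Gaussian amounts to, lies in $\mathcal H_k$ only for $a^2<2$. Then $\sup_{|x-x_{j_0}^*|\geq r}\phi> e^{-2r^2}$, which exceeds $w^*$ whenever $r=r(P^*)$ is small. Your final step ``pick $a>1/r$'' is therefore unavailable.
\item The polynomial-corrected Gaussian is never specified. It would still need nonnegativity and a uniform tail bound on all of $\{|x-x_{j_0}^*|\geq r\}$.
\end{itemize}
Falling back to ``any explicit $C(a,m)$'' does not rescue the argument. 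The obstruction is membership in $\mathcal H_k$ combined with arbitrarily sharp peaking, not the value of the constant.

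\textbf{The missing idea is to make the test function compactly supported in frequency rather than in space.} Since $1/\hat k$ is bounded on compact sets, every band-limited $L^2$ function lies in $\mathcal H_k$. Arbitrarily sharp peaks are then allowed, at a norm cost of about $e^{\omega_{\max}^2/8}$.

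The paper takes the Jackson-type kernel $q(x)=\bigl(\sin(a(x-x_{j_0}^*))/(a(x-x_{j_0}^*))\bigr)^{2m}$. It is nonnegative with $q(x_{j_0}^*)=1$. The bound $|\sin|\leq1$ gives $q(x)\leq(a|x-x_{j_0}^*|)^{-2m}$, and this, not $e^{-t}\leq1/t$, is where the factor $(ar)^{-2m}$ comes from. Its transform is a $2m$-fold convolution of box functions, supported on $[-2ma,2ma]$, with $\|\hat q\|_\infty\leq\|q\|_1\leq\pi/a$. Hence
\begin{equation*}
\|q\|_{\mathcal H_k}^2\leq\frac{1}{2\pi}\cdot\frac{\pi^2}{a^2}\cdot\frac{1}{\sqrt\pi}\int_{-2ma}^{2ma}e^{\omega^2/4}\,\mathrm d\omega=\frac{\pi}{a^2}\erfi(ma)\leq\frac{\pi}{a^2}e^{m^2a^2},
\end{equation*}
which is exactly $C(a,m)^2$. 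The trade-off is only polynomial decay in space, in exchange for any sharpness $a$ at a norm cost of $e^{m^2a^2/2}$; this is precisely what the statement encodes. With this choice of $\phi$, the rest of your argument goes through unchanged.
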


Prop.~\ref{prop: MMDholebound} therefore gives a positive lower bound on the MMD at every spurious stationary point, depending only on $P^*$.
The preceding quantitative argument can now be reused: Props.~\ref{prop: MMDFourierSplit}--\ref{prop: Lcbound} convert this into a positive lower bound on $c$, while Prop.~\ref{prop: NUpperBound} gives a finite upper bound on the number of distinct active particle locations.
Beyond this bound, a collision is unavoidable, and Prop.~\ref{prop: unstablecollisionpoints}--\ref{prop: collisionpoints} rule out a spurious local minimum or stable stationary point.
This proves Theorem~\ref{thm: SimpBound}.

\section{Conclusion}
\textbf{Summary.}
We have shown that overparameterisation can fundamentally simplify the finite-particle MMD landscape.
For the one-dimensional Gaussian kernel and any compactly supported target of diameter at most $D$, there is a computable threshold $N_\varepsilon(D)$ beyond which every local minimum has MMD below $\varepsilon$.
Meanwhile, every stationary point with MMD at least $\varepsilon$ is unstable under the unweighted particle gradient flow.
These conclusions hold for both simplex-constrained and relaxed, weight-optimised formulations.
In the well-specified setting, beyond $N=N(P^*)$, all local minima are globally optimal and all spurious stationary points are unstable.
For the relaxed problem, this extends to ETP kernels with the sharp threshold $N\geq K$.
Our maximum principles and collision results explain this phenomenon: sufficiently many active, distinct particles impose more value and derivative constraints on the witness function than a poor stationary configuration can support.
Combined with algorithm-specific convergence theory, our results provide approximation guarantees for methods converging to local minima or to stationary points stable under the corresponding particle flow.

\textbf{Limitations and future work.}
The principal limitation of this work is the restriction of our strongest results to one dimension.
Our proofs rely on one-dimensional zero counting and an ordering argument for particle signs, neither of which transfers directly to higher dimensions.
Developing multidimensional analogues is thus an important open problem.
This notwithstanding, it is worth emphasising that our results provide the first finite-particle overparameterisation guarantees of their kind for weighted MMD optimisation. A second direction is to extend our results to broader classes of kernels.
The maximum principles already hold beyond Gaussian kernels, while the simplex analysis additionally requires suitable interpolation and zero-counting arguments.
In particular, Prop.~\ref{prop: Hinterp} relies on a Chebyshev-system property of Gaussian kernels, their derivatives, and the constant function.
%
%
The different dependence on $\varepsilon$ in the simplex and relaxed analyses suggests that the simplex bounds are unlikely to be tight, while in the well-specified simplex case sharper control of the interpolation matrices may improve the threshold.
By contrast, the relaxed well-specified threshold $N\geq K$ is sharp.
Finally, our results concern stationary configurations with finite particle locations.
For asymptotically stationary trajectories, particles may escape to infinity, evading the finite-position argument.
The appendix provides a preliminary analysis; controlling such escape for specific flows would turn our results into stronger global convergence guarantees.

\clearpage
\subsection*{AI use statement}

In this work, we used generative AI tools for assistance in the writing of proofs, for implementing methods and for designing or providing feedback on research methodology or experiments.
We also used generative AI tools for brainstorming, sourcing \& searching for information, identifying relevant literature, formatting references, suggesting experimental parameters and suggesting structure.
We have not used generative AI tools for helping to develop theoretical models or conceptual frameworks, formulating mathematical claims, providing critical ingredients for proving mathematical claims, proposing or refining hypotheses, generating synthetic data sets, assisting with translation, cleaning and reformatting datasets, supporting qualitative and thematic data analysis, or interpreting results.
All LLM assisted work was checked carefully, including any routine or standard results or applications it assisted with.
We take responsibility for the final content of this work, including text, claims, or artifacts produced with the aid of generative AI.

\bibliographystyle{plainnat}
\bibliography{references-mmd}

\clearpage
\appendix
\thispagestyle{empty}

\onecolumn

\aistatstitle{Supplementary Materials}

\section{Background and preliminaries}
\subsection{Kernel basics}
In this appendix we collect some basic definitions relating to kernels, as well as some other background material necessary for the subsequent proofs.
For a more detailed exposition, we refer the interested reader to the included references.
The definitions below are taken from \citet{schrab2025practicalintroductionkerneldiscrepancies,Gretton2024SampleModel,Gretton2025TwoSamples}.

Let $\mathcal{X}$ be a non-empty set.
\begin{definition}[Hilbert space]
    A vector space $\mathcal{H}$ is called a Hilbert space if it has an inner product $\langle \cdot, \cdot \rangle_\mathcal{H}$ and is complete.
\end{definition}

\begin{definition}[Reproducing Kernel Hilbert Space]
    For a Hilbert space $\mathcal{H}$ of real-valued functions on $\mathcal{X}$, a function $k: \mathcal{X} \times \mathcal{X} \to \mathbb{R}$ is a reproducing kernel of $\mathcal{H}$ if
    \begin{enumerate}
        \item $k(x, \cdot) \in \mathcal{H}$ for all $x \in \mathcal{X}$;
        \item $\langle f, k(x, \cdot) \rangle_\mathcal{H} = f(x)$ for all $f \in \mathcal{H}$ and all $x \in \mathcal{X}$.
    \end{enumerate}
    In this case, $\mathcal{H}$ is called a Reproducing Kernel Hilbert Space (RKHS).
\end{definition}

\begin{definition}[Mean Embedding]
Let $k$ be a kernel, and $P$ be a probability distribution.
Then the (kernel) \emph{mean embedding} is defined by
    \begin{equation*}
        \mu_P(t) = \mathbb{E}_{x \sim P} k(x, t).
    \end{equation*}
\end{definition}
The existence of such a mean embedding follows by the Riesz Representation Theorem under reasonable conditions.
\begin{definition}[Maximum Mean Discrepancy]
Let $k$ be a kernel, and $P, Q$ be probability distributions.
Then the \emph{maximum mean discrepancy} (MMD) is defined by
    \begin{equation*}
        \MMD^2(P,Q) = \vert \vert \mu_P  - \mu_Q \vert \vert_\mathcal{H}^2.
    \end{equation*}
\end{definition}
\begin{proposition}[Reproducing property for the derivative]
    For a twice continuously differentiable kernel $k(x, x')$ with associated RKHS $\mathcal{F}$ and feature map $\varphi(x)$, and for $f \in \mathcal{F}$, we have
    \begin{equation*}
        \frac{d}{dx}f(x) = \langle f, \frac{d}{dx}\varphi(x)\rangle_\mathcal{F}, \qquad \langle \frac{d}{dx}\varphi(x), \varphi(x') \rangle_{\mathcal{F}} = \frac{d}{dx}k(x, x').
    \end{equation*}
\end{proposition}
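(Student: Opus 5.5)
The plan is to use the RKHS structure directly. By the reproducing property, $f(x)=\langle f,\varphi(x)\rangle_{\mathcal F}$ for every $x$. So the derivative identity should follow once we know that the difference quotients $h^{-1}(\varphi(x+h)-\varphi(x))$ converge in $\mathcal F$-norm to some element, which we call $\frac{d}{dx}\varphi(x)$. Continuity of the inner product then gives
\begin{equation*}
\frac{d}{dx}f(x)=\lim_{h\to0}\Big\langle f,\tfrac{\varphi(x+h)-\varphi(x)}{h}\Big\rangle_{\mathcal F}=\Big\langle f,\tfrac{d}{dx}\varphi(x)\Big\rangle_{\mathcal F}.
\end{equation*}

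The key step is therefore existence of this norm limit. I will show that the difference quotients $D_h=h^{-1}(\varphi(x+h)-\varphi(x))$ are Cauchy in $\mathcal F$. Expanding with the reproducing property $\langle\varphi(s),\varphi(t)\rangle_{\mathcal F}=k(s,t)$, the quantity $\|D_h-D_{h'}\|^2_{\mathcal F}$ is a combination of values of $k$ at the points $x,x+h,x+h'$. Indeed, $\langle D_h,D_{h'}\rangle_{\mathcal F}$ is a second-order mixed divided difference of $k$,
\begin{equation*}
\frac{k(x+h,x+h')-k(x+h,x)-k(x,x+h')+k(x,x)}{hh'}.
\end{equation*}
Since $k$ is $C^2$, applying the mean value theorem in each variable shows that this tends to $\partial_1\partial_2k(x,x)$ as $h,h'\to0$, uniformly in how the two parameters approach zero. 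Consequently
\begin{equation*}
\|D_h-D_{h'}\|^2=\langle D_h,D_h\rangle-2\langle D_h,D_{h'}\rangle+\langle D_{h'},D_{h'}\rangle\to0,
\end{equation*}
so the family is Cauchy. Completeness of the Hilbert space then gives a limit $\frac{d}{dx}\varphi(x)\in\mathcal F$. This is the main obstacle: one must use the joint (mixed-partial) regularity of $k$, not merely differentiability in each argument separately. Twice continuous differentiability is exactly what makes the double divided difference converge.

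For the second identity, take $f=\varphi(x')=k(x',\cdot)\in\mathcal F$ in the first identity:
\begin{equation*}
\Big\langle \tfrac{d}{dx}\varphi(x),\varphi(x')\Big\rangle_{\mathcal F}=\frac{d}{dx}\varphi(x')(x)=\frac{d}{dx}k(x',x)=\frac{d}{dx}k(x,x'),
\end{equation*}
where the last equality uses symmetry of $k$. Equivalently, one can pass to the limit directly in $\langle D_h,\varphi(x')\rangle_{\mathcal F}=h^{-1}(k(x+h,x')-k(x,x'))$. In both cases only continuity of the inner product and first-order differentiability of $k$ in one argument are needed once the limit element exists.
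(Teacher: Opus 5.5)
The paper gives no proof of this proposition. It is recorded in the appendix as standard background taken from the cited kernel notes, so there is no in-paper argument to compare against.

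Your proof is correct and complete, and it is the standard one. The key computation is right: two applications of the mean value theorem write the mixed divided difference as $\partial_2\partial_1k(x+\xi,x+\eta)$, with $|\xi|\le|h|$ and $|\eta|\le|h'|$. This converges to $\partial_1\partial_2k(x,x)$ jointly as $(h,h')\to(0,0)$, hence $\|D_h-D_{h'}\|_{\mathcal F}\to0$. Completeness and continuity of the inner product then give the first identity, including differentiability of every $f\in\mathcal F$. Symmetry of $k$ turns $\partial_2k(x',x)$ into $\frac{d}{dx}k(x,x')$ for the second identity.

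A common alternative route only bounds $\|D_h\|_{\mathcal F}^2$ by $\sup|\partial_1\partial_2k|$ near $(x,x)$. It then extracts a weakly convergent subsequence and identifies the weak limit as $\partial_1k(x,\cdot)$ by pairing with the $\varphi(x')$. Your norm-Cauchy version is no longer and gives more, namely norm differentiability of the feature map.

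The paper also uses one further identity, $\langle\frac{d}{dx}\varphi(x),\frac{d}{dx'}\varphi(x')\rangle_{\mathcal F}=\partial_1\partial_2k(x,x')$. It appears in the Gram-matrix expressions for $\|H\|_{\mathcal H}^2$ in Props.~\ref{prop: cpos} and~\ref{prop: neighbourhoodmiss}, and in $K^0$ in Prop.~\ref{prop: runstablecollisionpoints}. It follows at once from your first identity applied to $f=\frac{d}{dx'}\varphi(x')=\partial_1k(x',\cdot)$.
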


Throughout, we will typically focus on the Gaussian kernel.
\begin{definition}[Gaussian Kernel]
The Gaussian kernel with bandwidth $\sigma$ is, for $x, y$ real vectors in some common space,
\begin{equation*}
    k_\sigma(x, y) = \exp\biggl(-\frac{\vert \vert x - y \vert \vert^2}{2\sigma^2}\biggr).
\end{equation*}
\end{definition}

Finally, we introduce the notion of a normalisable kernel.
\begin{definition}[Normalisable kernel]
    A translation-invariant kernel $k$ is called normalisable if $0 < \mathcal{Z} := \int_{\mathbb{R}} k(x, y) dx < \infty$.
\end{definition}

\subsection{Total positivity, variation diminishing, Chebyshev systems}
We now introduce further background which will be necessary for the arguments that follow.
These definitions are recalled from \citet{karlin1961moment}.

\begin{definition}[Totally Positive (TP)]
    A function $f(x,y)$ of two real variables, $x$ ranging over $X$ and $y$ ranging over $Y$, is said to be totally positive of order $k$ ($TP_k$) if for all $x_1 < \ldots < x_m, y_1 < \ldots < y_m (x_i \in X, y_j \in Y)$, and all $1 \le m \le k$,
    \begin{equation}
        f \begin{pmatrix} x_1, \ldots, x_m\\y_1, \ldots, y_m \end{pmatrix} = \begin{vmatrix} f(x_1, y_1) & \ldots &f(x_1, y_m)\\ \vdots && \vdots\\ f(x_m, y_1) & \ldots & f(x_m, y_m) \end{vmatrix} \ge 0.
    \end{equation}
    Additionally, if all indicated derivatives exist, and $X, Y$ represent open intervals on the line, then for $1 \le n \le k$,
    \begin{equation}
        \label{TPnonstrict}
        \begin{vmatrix} f(x, y) & \ldots &\frac{\partial^{n-1}}{\partial y^{n-1}}f(x, y)\\ \vdots && \vdots\\ \frac{\partial^{n-1}}{\partial x^{n-1}}f(x, y) & \ldots & \frac{\partial^{2n-2}}{\partial x^{n-1} \partial y^{n-1}}f(x, y) \end{vmatrix} \ge 0.
    \end{equation}
\end{definition}

The Gaussian kernel is $TP_\infty$.
One of the key properties of these functions we will utilise, particularly in our proofs of Props.~\ref{prop: relax max principle} and \ref{prop: simp max principle}, is the variation diminishing property.
We here present a particularly strong form that holds with additional assumptions, lifted directly from \citet{karlin1961moment}.

\begin{definition}[Variation diminishing property]
\label{def: Variation deimishing}
    Let $V(g)$ denote the number of variations of sign of a function $g(x)$ as $x$ traverses the real line from left to right.
    Let $h(x)$ be given by the absolutely convergent integral
    \begin{equation}
        \label{convergenttpintegral}
        h(x) = \int f(x, w)g(w)d\sigma(w),
    \end{equation}
    where $f(x, w)$ is $TP_k$, $\sigma(w)$ is a non-negative $\sigma$-finite measure and $V(g) \le k-1$.
    Suppose also that $h(x)$ is $k$ times continuously differentiable, the interchange of integral and derivative is permissible in \eqref{convergenttpintegral}, and \eqref{TPnonstrict} holds strictly.
    Suppose $h \not\equiv 0$.
    Let $Z(h)$ denote the number of zeros of $h(x)$, counting multiplicities.
    Then $Z(h) \le V(g)$.

\end{definition}

The following definitions are recalled from \citet{doi:10.1137/S1064827596310779}, which itself takes these from \citet{KarlinStudden1966,Krein1959}.

\begin{definition}[Chebyshev system]
A sequence of functions $\phi_1, \ldots, \phi_n$ is referred to as a Chebyshev system on the interval $[a, b]$ if each of them is continuous and the determinant
\begin{equation*}
    \begin{vmatrix} \phi_1(x_1) & \ldots & \phi_1(x_n)\\ \vdots & & \vdots\\ \phi_n(x_1) & \ldots & \phi_n(x_n) \end{vmatrix}
\end{equation*}
is nonzero for any sequence of points $a \le x_1 < \ldots < x_n \le b$.
Equivalently, any nontrivial linear combination of the functions has at most $n-1$ distinct zeros.
\end{definition}

We now define a special case that is particularly relevant when using the Gaussian kernel, as will be seen shortly.

\begin{definition}[Extended Chebyshev system]
    Given a set of functions $\phi_1, \ldots, \phi_n$ which are continuously differentiable as needed on an interval $[a, b]$, and given a sequence of ordered points $a \le x_1 \le  \ldots \le x_n \le b$, define the sequence $m_1, \ldots, m_n$ by
    \begin{equation}
        \label{msequenceforet}
        \begin{aligned} m_1 = 0, \qquad m_j = \begin{cases} 0 & x_j \neq x_{j-1}\\ m_{j-1} + 1 & x_j = x_{j-1} \end{cases}.\\ \end{aligned}
    \end{equation}
    Let the matrix $C(x_1, \ldots, x_n) = [c_{ij}]$ be defined by the formula
    \begin{equation*}
        c_{ij} = \frac{d^{m_j}\phi_i}{dx^{m_j}}(x_j).
    \end{equation*}
    Then $\phi_1, \ldots, \phi_n$ is an extended Chebyshev system on $[a, b]$ if $\det(C(x_1, \ldots, x_n))$ is nonzero for all such sequences $x_i$.
\end{definition}

As remarked in \citet{doi:10.1137/S1064827596310779}, this is essentially just a Chebyshev system where, if a point is repeated, the repeated column is replaced with the derivative of the function, successively taking the next derivative if it is repeated again.
This matrix is then mandated to be invertible.

Finally, we introduce the kernel property underlying our zero-counting arguments.
\begin{definition}[Extended Totally Positive (ETP)]
    Given a function $K: [a,b] \times [c,d] \to \mathbb{R}$ which is continuously differentiable as needed, and given a sequence of points $x_1, \ldots, x_n$ such that $a \le x_1 \le\ldots \le  x_n \le b$, let the sequence $m_1, \ldots, m_n$ be defined in \eqref{msequenceforet}.
    Let the functions $\phi_1, \ldots, \phi_n$ be defined by the formula
    $
    \phi_j(t) = \frac{\partial^{m_j}K}{\partial x^{m_j}}(x_j, t).
    $
    Then $K$ is ETP if $\phi_1, \ldots, \phi_n$ form an extended Chebyshev system on $[c, d]$ for all such sequences of $x_i$.
\end{definition}

Crucially, the Gaussian kernel is ETP  \citep[e.g.,][]{doi:10.1137/S1064827596310779}.
If we have a reproducing kernel on the real line, this is a stronger condition than TP$_\infty$ and the strictness asked for in the variation diminishing property.
Indeed, positive semidefiniteness of the kernel makes the relevant determinants non-negative when $x = y$, and ETP makes them nonzero.
They are thus all positive when $x = y$, and since the real line is connected and the determinants are always nonzero, must be positive for all $x, y$.

\subsection{Zero counting using complex analysis and specialisation to Gaussian-type functions}
We will employ a bounding strategy similar to that used in \citet{gröchenig2018sharpresultssamplingderivatives}, Lemmas 4.1 and 4.3.
In particular, we make use of the following result.
\begin{proposition}
    \label{prop: jenzerocount}
    Let $R > 0$.
    Let $\delta  > 1$.
    Let $\Omega$ be an open set that contains the closure of a disc $D_{\delta R}$ and suppose that $f$ is holomorphic in $\Omega$ with $f(0) \neq 0$.
    Let $\mathbf{n}_f(r)$ denote the number of zeros (counting multiplicity) of $f$ in the disc of radius $r$.
    Assume that $\vert f(x + iy) \vert \le A_fe^{B_fy^2}$.
    Then
    \begin{equation*}
        \mathbf{n}_{f} (R) \le \frac{\log\bigl(\frac{A_f}{\vert f(0) \vert}\bigr) + \frac{1}{2}B_f\delta^2R^2}{\log(\delta)}.
    \end{equation*}
\end{proposition}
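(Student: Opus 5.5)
This is Jensen's formula applied to a disc of radius $\delta R$, together with the monotonicity of the zero-counting function. Write $\mathbf{n}_f(t)$ for the number of zeros of $f$ in $D_t$. Since $f$ is holomorphic on a neighbourhood of $\overline{D_{\delta R}}$ and $f(0)\neq 0$, Jensen's formula gives
\begin{equation*}
    \int_0^{\delta R}\frac{\mathbf{n}_f(t)}{t}\,\mathrm dt
    =
    \frac{1}{2\pi}\int_0^{2\pi}\log\bigl|f(\delta R e^{i\theta})\bigr|\,\mathrm d\theta-\log|f(0)|.
\end{equation*}
The argument then has two steps: bound the left-hand side from below by the zeros in $D_R$, and bound the right-hand side from above using the growth hypothesis.

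\textbf{Lower bound.} The function $\mathbf{n}_f$ is nondecreasing and nonnegative. Restricting the integral to $[R,\delta R]$ therefore gives
\begin{equation*}
    \int_0^{\delta R}\frac{\mathbf{n}_f(t)}{t}\,\mathrm dt
    \;\geq\;
    \mathbf{n}_f(R)\int_R^{\delta R}\frac{\mathrm dt}{t}
    =
    \mathbf{n}_f(R)\log\delta .
\end{equation*}

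\textbf{Upper bound.} On the circle $|z|=\delta R$, write $z=x+iy$. Then $y^2\leq\delta^2R^2$, so the hypothesis gives
\begin{equation*}
    \log|f(z)|\leq \log A_f+B_f\delta^2R^2 .
\end{equation*}
This pointwise bound would yield the estimate with $B_f\delta^2R^2$ in place of $\tfrac12 B_f\delta^2R^2$. To recover the factor $\tfrac12$, integrate the bound in the exact form $B_f y^2=B_f\delta^2R^2\sin^2\theta$ over the circle, using
\begin{equation*}
    \frac{1}{2\pi}\int_0^{2\pi}\sin^2\theta\,\mathrm d\theta=\tfrac12 .
\end{equation*}
This is the one place where the precise form of the growth bound matters, and it is the step to watch. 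The result is
\begin{equation*}
    \frac{1}{2\pi}\int_0^{2\pi}\log\bigl|f(\delta R e^{i\theta})\bigr|\,\mathrm d\theta
    \leq
    \log A_f+\tfrac12 B_f\delta^2R^2 .
\end{equation*}

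\textbf{Conclusion.} Combining the two bounds gives
\begin{equation*}
    \mathbf{n}_f(R)\log\delta\leq \log\frac{A_f}{|f(0)|}+\tfrac12 B_f\delta^2R^2 .
\end{equation*}
Since $\delta>1$, we have $\log\delta>0$, and dividing by it yields the claim.

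A few technical points need to be checked. Jensen's formula requires holomorphy on a neighbourhood of the closed disc, which $\Omega$ supplies. Zeros lying on the boundary circle contribute only integrable logarithmic singularities, so the formula still holds. The integral $\int_0^{\delta R}\mathbf{n}_f(t)\,t^{-1}\,\mathrm dt$ converges at $0$ because $f(0)\neq0$, so $\mathbf{n}_f$ vanishes near the origin. No step is genuinely hard: the only care needed is the averaging of $\sin^2\theta$ that produces the constant $\tfrac12$.
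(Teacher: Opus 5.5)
Your proposal is correct and follows the same route as the paper: Jensen's formula on the disc of radius $\delta R$, the lower bound $\mathbf{n}_f(R)\log\delta$ from restricting the integral to $[R,\delta R]$, and the upper bound obtained by integrating $\log A_f + B_f\delta^2R^2\sin^2\theta$ over the circle, where averaging $\sin^2\theta$ gives the factor $\tfrac12$.
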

\begin{proof}
    By Jensen's formula \citep[e.g., pg. 137][]{stein2010complex}, we have that
    \begin{align*}
        {\frac{1}{2\pi} \int_{0}^{2\pi} \log \vert f(\delta Re^{i\theta}) \vert d\theta - \log \vert f(0) \vert} &= \int_{0}^{\delta R} \frac{{\mathbf{n}_{f}}(r)}{r}dr \ge \int_{R}^{\delta R} \frac{{\mathbf{n}_{f}}(r)}{r} dr \ge  {\mathbf{n}_{f}} (R) \log(\delta).
    \end{align*}
    By the assumed bound, we have $\vert f(\delta Re^{i\theta})\vert \le A_fe^{B_f\delta^2 R^2\sin^2(\theta)}$.
    Thus, integrating and substituting into the previous bound, it follows that
    \begin{align*}
        \mathbf{n}_{f} (R) &\le \frac{\log\bigl(\frac{A_f}{\vert f(0) \vert}\bigr) + \frac{1}{2}B_f\delta^2R^2}{\log(\delta)}. \qedhere
    \end{align*}
\end{proof}

\section{Proofs}
\label{sec:proofs}

\subsection{Relaxed Variant}

\relaxmaxprinciple*
\begin{proof}
     We begin by defining the reduced measure of $P$ by merging particles at the same location, and removing locations with zero total weight.
    Write $\vartheta_1,\ldots,\vartheta_A$ for its distinct locations and $\varsigma_1,\ldots,\varsigma_A$ for their nonzero total weights.
    By the stationarity conditions, we have
    \begin{equation*}
        \Delta(\vartheta_j)=0, \qquad \Delta'(\vartheta_j)=0, \qquad j=1,\ldots,A.
    \end{equation*}
    To apply the variation diminishing property, write
    \begin{equation*}
        \Delta(x)=\int k(t,x)g(t)\,\mathrm d\sigma(t), \qquad g=\frac{\mathrm d(P-P^*)}{\mathrm d\sigma}, \qquad \sigma=|P|+P^*.
    \end{equation*}
    Since the positive part of $g$ is supported on at most $A$ distinct locations, we must have $V(g)\leq 2A$.
    Moreover, since $\bar F(\boldsymbol{x})>0$, the witness $\Delta$ is not identically zero.
    The variation diminishing property (cf. Def.~\ref{def: Variation deimishing}) therefore gives $Z(\Delta)\leq V(g)$, where zeros are counted with multiplicity.
    On the other hand, the stationarity conditions imply that each $\vartheta_j$ is a zero of multiplicity at least two.
    Consequently,
    \begin{equation*}
        2A\leq Z(\Delta)\leq V(g)\leq 2A.
    \end{equation*}
    Thus all inequalities are equalities, and the $\vartheta_j$ are the only zeros of $\Delta$, each with multiplicity exactly two.
    Since the positive part of $g$ is supported on at most $A$ locations, equality $V(g)=2A$ requires the alternating sign pattern negative--positive--negative--$\cdots$--positive--negative.
    Thus every location in the reduced measure must have strictly positive total weight.
    Moreover, since the negative contributions outside the positive locations must arise from $P^*$, these locations lie within the convex hull of $\supp(P^*)$.

    We next establish that $\Delta\leq0$.
    Since all zeros of $\Delta$ have multiplicity exactly two, $\Delta$ cannot change sign.
    Furthermore, the weight optimality conditions give $\int\Delta\,\mathrm dP=0$, and hence
    \begin{equation*}
        \int\Delta\,\mathrm dP^*=-\int\Delta\,\mathrm d(P-P^*)=-\|\Delta\|_{\mathcal H_k}^2<0.
    \end{equation*}
    It follows that $\Delta\leq0$ everywhere.
    Finally, weight optimality gives $\Delta(x_i)=0$ for every particle, including those with zero individual weight.
    Since all zeros of $\Delta$ occur at the active locations $\vartheta_1,\ldots,\vartheta_A$, every particle must coincide with one of these locations.
    Thus every distinct particle location is active, has positive total weight, and lies within the convex hull of $\supp(P^*)$.
    Moreover, each particle location is a strict local maximum of $\Delta$, and its zero has multiplicity exactly two.
    Consequently, for every $i=1,\ldots,N$,
    \begin{equation*}
        \Delta'(x_i)=0, \qquad \Delta''(x_i)<0. \qedhere
    \end{equation*}
\end{proof}

\posweightrelaxed*
\begin{proof}
    Suppose $w_i=0$.
    Since we are considering a local minimum with $\bar F(\boldsymbol{x})>0$, we have $\|\Delta\|_{\mathcal H_k}^2=\bar F(\boldsymbol{x})>0$.
    Thus, $\Delta$ is not identically zero.
    It follows that, for every $\varepsilon>0$, there exists an $x'$ in an $\varepsilon$-neighbourhood of $x_i$ such that $x'\neq x_j$ for all $j=1,\ldots,N$ and $\Delta(x')\neq0$.
    This follows from analyticity when the kernel is analytic; alternatively, when the kernel is ETP and the problem is well specified, $\Delta$ is a nonzero linear combination of finitely many kernel sections and therefore has only finitely many zeros.

    Fix such a point $x'$.
    Since $w_i=0$, moving the $i$th particle to $x'$ leaves the measure unchanged if its weight remains zero.
    Reoptimising the weights at the new configuration can only decrease the objective relative to keeping the other weights fixed and optimising only the weight of the moved particle.
    Thus, using the reproducing property, we have that
    \begin{align*}
        \bar F(x_1,\ldots,x_{i-1},x',x_{i+1},\ldots,x_N) &\leq \min_{w_i'\in\mathbb R}\|\Delta+w_i'k(x',\cdot)\|_{\mathcal H_k}^2\\
        &=\bar F(\boldsymbol{x})+\min_{w_i'\in\mathbb R}\left\{2w_i'\Delta(x')+(w_i')^2k(x',x')\right\}\\
        &=\bar F(\boldsymbol{x})-\frac{\Delta(x')^2}{k(x',x')} <\bar F(\boldsymbol{x}).
    \end{align*}
    Since $x'$ can be chosen arbitrarily close to $x_i$, this contradicts local minimality.
    Finally, suppose two particle locations coincide, say $x_i=x_j$.
    We can then transfer the weight of particle $i$ to particle $j$, setting $w_j\leftarrow w_j+w_i$ and $w_i\leftarrow0$, without changing the measure or the objective.
    This yields an optimal weight vector with a zero entry, contradicting the preceding argument.
\end{proof}

\runstablecollisionpoints*
\begin{proof}
    Define the reduced measure as in the proof of Prop.~\ref{prop: relax max principle}.
    Without loss of generality, suppose the collision occurs at $\vartheta_1$.
    Consider perturbing some, but not all, of the particles positioned at $\vartheta_1$ to $\vartheta_1+d$, for sufficiently small $d>0$.
    Suppose the resulting locations $(\vartheta_1+d,\vartheta_1,\ldots,\vartheta_A)$ have weights $(\varsigma_0^d,\ldots,\varsigma_A^d)$.
    Then
    \begin{equation*}
        \Delta_d(x) = \varsigma^d_0 k(\vartheta_1 + d, x) + \varsigma^d_1 k(\vartheta_1, x) + \sum_{i=2}^A \varsigma^d_i k(\vartheta_i, x) - \mu_{P^*}(x).
    \end{equation*}
    Following \citet{batenkov2013geometrysingularitiespronymapping} and \citet{1f9723a1-a04e-3d21-9c1b-61527608e570}, introduce the basis functions $\phi^d_i(\cdot)=k(\vartheta_i,\cdot)$ for $i=1,\ldots,A$, together with $\phi^d_0=\frac{k(\vartheta_1+d,\cdot)-k(\vartheta_1,\cdot)}{d}$.
    Define $b^d_0=\frac{\mu_{P^*}(\vartheta_1+d)-\mu_{P^*}(\vartheta_1)}{d}$, and set $b^d_i=\mu_{P^*}(\vartheta_i)$ for $i=1,\ldots,A$.
    We can then write
    \begin{equation*}
        \Delta_d(x) = d\varsigma^d_0 \phi^d_0(x) + (\varsigma^d_0 + \varsigma^d_1)\phi_1^d(x) + \sum_{i=2}^A \varsigma^d_i \phi^d_i(x) - \mu_{P^*}(x).
    \end{equation*}
    Now define
    \begin{equation*}
        \mathbf{v}^d = \begin{bmatrix} d\varsigma_0^d\\ \varsigma^d_0 + \varsigma^d_1\\ \varsigma_2^d\\ \vdots\\ \varsigma_A^d \end{bmatrix}.
    \end{equation*}
    This is an equivalent representation of the optimised weights.
    For $d>0$, the map between $\mathbf v^d$ and $\boldsymbol{\varsigma}^d$ is invertible, so optimising either gives the same result.
    Define the matrix $K^d$ by $K^d_{ij}=\langle\phi^d_i,\phi^d_j\rangle_\mathcal{H}$.
    By optimality of the weights, we have $\Delta_d(\vartheta_1+d)=0$ and $\Delta_d(\vartheta_i)=0$ for $i=1,\ldots,A$.
    Consequently,
    \begin{equation*}
        K^d\mathbf{v}^d-\mathbf{b}^d = \begin{bmatrix} \frac{\Delta_d(\vartheta_1+d)-\Delta_d(\vartheta_1)}{d}\\ \Delta_d(\vartheta_1)\\ \vdots\\ \Delta_d(\vartheta_A) \end{bmatrix} = \mathbf{0}.
    \end{equation*}
    Define the limiting basis by $\phi_0^0=\partial_1k(\vartheta_1,\cdot)$, with the remaining basis functions unchanged.
    This also defines $K^0$ and $\mathbf{b}^0$.
    Since $k$ is ETP, it has the smoothness required here, and hence $K^d\to K^0$ and $\mathbf b^d\to\mathbf b^0$ as $d\to0$.
    In this limiting basis, $(0,\boldsymbol{\varsigma})^\top$ represents the original reduced measure, and the reproducing property gives
    \begin{equation*}
        K^0(0,\boldsymbol{\varsigma})^\top-\mathbf{b}^0 = \begin{bmatrix} \Delta'(\vartheta_1)\\ \Delta(\vartheta_1)\\ \vdots\\ \Delta(\vartheta_A) \end{bmatrix} = \mathbf{0},
    \end{equation*}
    where the final equality follows from stationarity at the original configuration.
    Since $K^0$ is invertible by the ETP property, $K^d$ is also invertible for sufficiently small $d$, and hence $\mathbf{v}^d=(K^d)^{-1}\mathbf{b}^d\to(0,\boldsymbol{\varsigma})^\top$ as $d\to0$.

    We next establish continuity of the basis functions and their derivatives as $d\to0$.
    Following \citet{baxter2010functionalsexponentialbrownianmotion}, observe that
    \begin{equation*}
        \phi_0^d(x) = \int_0^1 \partial_1k(\vartheta_1+td,x)\,\mathrm dt.
    \end{equation*}
    By smoothness of the kernel, we can differentiate under the integral to obtain
    \begin{equation*}
        \partial_x^n\phi_0^d(x) = \int_0^1 \partial_x^n\partial_1k(\vartheta_1+td,x)\,\mathrm dt.
    \end{equation*}
    Since the integrand is continuous in the relevant arguments, these derivatives extend continuously to $d=0$.
    The remaining basis functions are independent of $d$.
    It follows that
    \begin{equation*}
        \Delta_d''(x) = \sum_{i=0}^A v_i^d\partial_x^2\phi_i^d(x)-\mu_{P^*}''(x) \to \Delta''(x),
    \end{equation*}
    and the extension defined by $\Delta_0''(x)=\Delta''(x)$ is jointly continuous in $d$ and $x$.
    This provides the continuity needed for the instability argument.
    Finally, by the mean value theorem,
    \begin{equation*}
        \dot d = -\bigl(\Delta_d'(\vartheta_1+d)-\Delta_d'(\vartheta_1)\bigr) = -\Delta_d''(\varrho)d,
    \end{equation*}
    for some $\varrho\in(\vartheta_1,\vartheta_1+d)$.
    By continuity, $\Delta_d''(\varrho)<0$ in a sufficiently small neighbourhood of the stationary configuration along the perturbation considered above.
    Thus $\dot d$ has the same sign as $d$, establishing instability of the collision.
    Note, in particular, that any collided particles remain collided under the flow.
    Thus,  along the perturbation we retain the negativity when the other particles change positions too, and for a sufficiently small neighbourhood no other collisions occur.
\end{proof}

\relaxepsbound*
\begin{proof}
    Fix a target distribution $P^*$ with compact support in $[A,B]$, where $B-A\le D$.
    By Prop.~\ref{prop: relax max principle}, all particle locations lie within this interval and have positive total weight.
    Moreover, if $\MMD\ge\varepsilon$, then there exists $x_0\in[A,B]$ such that $|\Delta(x_0)|\ge\varepsilon^2$.
    Indeed, stationarity gives $\int\Delta(x)\,\mathrm dP(x)=0$, and hence $\varepsilon^2\le\MMD^2=-\int\Delta(x)\,\mathrm dP^*(x)$.
    The claim then follows since $P^*$ is a probability measure.

    We next bound the analytic continuation of $\Delta$ to the complex plane, centred at $x_0$, following the approach of \citet[Lemma 4.3]{gröchenig2018sharpresultssamplingderivatives}.
    By Prop.~\ref{prop: relax max principle}, $P=|P|$, and hence $|P-P^*|\le P+P^*$.
    Moreover, by the same proposition we have $\Delta=\mu_P-\mu_{P^*}\le0$, so $\mu_P\le\mu_{P^*}\le1$, since $P^*$ is a probability measure and the kernel is bounded above by one.
    It follows that,
    \begin{align*}
        |\Delta(x_0+x+iy)| &= \left|\int e^{-(x_0+x+iy-t)^2}\,\mathrm d(P-P^*)(t)\right|\\
        &\le e^{y^2}\int e^{-(x_0+x-t)^2}\,\mathrm d(P+P^*)(t)
        \le e^{y^2}\bigl(\mu_P(x+x_0)+\mu_{P^*}(x+x_0)\bigr)\le 2e^{y^2}.
    \end{align*}
    We can therefore apply Prop.~\ref{prop: jenzerocount} with $f=\Delta(\cdot+x_0)$, $A_f=2$, and $B_f=1$.
    In particular, since each distinct particle location contributes at least two zeros, counted with multiplicity, we obtain
    \begin{equation*}
        N' \le \frac{\log(\frac{2}{\varepsilon^2}) + \frac{1}{2}\delta^2 (B-A)^2}{2\log(\delta)} \le \frac{\log(\frac{2}{\varepsilon^2}) + \frac{1}{2}\delta^2 D^2}{2\log(\delta)},
    \end{equation*}
    where $N'$ denotes the number of distinct particle locations.
    If $N$ exceeds this bound, then at least two particles must share a location.
    By Props.~\ref{prop: runstablecollisionpoints} and~\ref{prop: posweightrelaxed}, such configurations cannot be local minima and are unstable under the unweighted particle gradient flow.
\end{proof}

Optimising the bound above, we obtain an asymptotic rate as $\varepsilon \to  0$ of $O(\frac{\log(1/\varepsilon)}{\log \log(1/\varepsilon)})$.
We can see this by setting, for sufficiently small $\varepsilon$, $\delta = \sqrt{-\log(\varepsilon)}$, which gives
\begin{equation*}
    N' \le \frac{\log 2 + 2\log(1/\varepsilon) + 0.5D^2\log(1/\varepsilon)}{\log \log (1/\varepsilon)} = O(\frac{\log(1/\varepsilon)}{\log \log(1/\varepsilon)}).
\end{equation*}

\subsection{Simplex Variant}
\Hinterp*

\begin{proof}
    First, the interpolation conditions follow by considering the matrix
    \begin{equation*}
        M(\boldsymbol{x^*},\boldsymbol{\bar{x}}) = \begin{bmatrix} k(\boldsymbol{x^*},\boldsymbol{\bar{x}}) & \partial_2k(\boldsymbol{x^*},\boldsymbol{\bar{x}})\\ \partial_1k(\boldsymbol{x^*},\boldsymbol{\bar{x}}) & \partial_1\partial_2k(\boldsymbol{x^*},\boldsymbol{\bar{x}}) \end{bmatrix}.
    \end{equation*}
    In particular, it suffices to find coefficients $\boldsymbol{\alpha},\boldsymbol{\beta}$ satisfying
    \begin{equation*}
        M(\boldsymbol{x^*},\boldsymbol{\bar{x}})\begin{bmatrix} \boldsymbol{\alpha}\\ \boldsymbol{\beta} \end{bmatrix} = \begin{bmatrix} \mathbf{1}\\ \mathbf{0} \end{bmatrix}.
    \end{equation*}
    Since the Gaussian kernel is ETP, the matrix $M(\boldsymbol{x^*},\boldsymbol{\bar{x}})$ is invertible, and so such coefficients exist.
    It remains now to establish that $H(x)<1$ away from the target locations.
    By the interpolation conditions, $H-1$ has at least $2K$ zeros, counted with multiplicity.
    Multiplication by $\smash{-e^{x^2}}$ preserves these zeros and their multiplicities.
    Define
    \begin{equation*}
        \tilde\alpha_i=(-\alpha_i+2\beta_i\bar{x}_i)e^{-\bar{x}_i^2}, \qquad \tilde\beta_i=-2\beta_i e^{-\bar{x}_i^2}, \qquad s_i=2\bar{x}_i,
    \end{equation*}
    so that
    \begin{equation*}
        -e^{x^2}(H(x)-1)=\sum_{i=1}^K(\tilde\alpha_i+\tilde\beta_i x)e^{s_i x}+e^{x^2}.
    \end{equation*}
    By Lemma 2.6 of \citet{schiebinger2015superresolutionseparation}, this function, and hence also $H-1$, has at most $2K$ zeros, counted with multiplicity.
    Thus, the prescribed target locations are the only zeros of $H-1$, each with multiplicity exactly two.
    Since $H(x)\to0$ as $x\to\pm\infty$, and $H-1$ cannot change sign at these zeros, it follows that $H(x)<1$ everywhere else.

    We remark that an alternative interpretation of Lemma 2.6 of \citet{schiebinger2015superresolutionseparation} is that the Gaussian kernel, together with its derivatives and the constant function, forms a Chebyshev system.
\end{proof}

\cpos*
\begin{proof}
    By Prop.~\ref{prop: Hinterp}, we can construct coefficients $\boldsymbol{\alpha}',\boldsymbol{\beta}'$ such that the function
    $
    H(\cdot) = \sum_{i=1}^A (\alpha'_i k(\cdot,\vartheta_i)+\beta'_i\partial_2k(\cdot,\vartheta_i))
    $
    satisfies $H(\vartheta_i) = 1$, $H'(\vartheta_i) = 0$ for all $i=1,\ldots,A$, and $H(x) < 1$ for all $x\notin\{\vartheta_1,\ldots,\vartheta_A\}$.

    At a stationary point with $F(\mathbf w,\boldsymbol{x})>0$, we must have $\supp(P^*)\not\subseteq\supp(P)$.
    Indeed, if this inclusion held, the target could be represented exactly at the current particle locations, and convexity of the weight optimisation problem would imply that the stationary weights achieve $F(\mathbf w,\boldsymbol{x})=0$.
    Since $H$ equals one at every active particle location and is strictly less than one elsewhere, it follows that
    \begin{align*}
        \int H(x)\,\mathrm dP(x) = \sum_{j=1}^A \varsigma_jH(\vartheta_j)=1, \qquad
        \int H(x)\,\mathrm dP^*(x) <1.
    \end{align*}
    Hence, in particular, 
    \begin{equation}
        \label{Positive test}
        \int H(x)\,\mathrm d(P-P^*)(x)>0.
    \end{equation}
    Meanwhile, the stationarity conditions \eqref{Stationary0redsimp} give
    \begin{align*}
        \int H(x)\,\mathrm d(P-P^*)(x) = \sum_{i=1}^A \left(\alpha'_i\Delta(\vartheta_i)+\beta'_i\Delta'(\vartheta_i)\right)
        = c\sum_{i=1}^A\alpha'_i.
    \end{align*}
    Moreover, by the reproducing property and the interpolation conditions,
    \begin{align*}
        \|H\|_{\mathcal H}^2 &= \begin{bmatrix} (\boldsymbol{\alpha}')^\top & (\boldsymbol{\beta}')^\top \end{bmatrix}M(\boldsymbol{\vartheta},\boldsymbol{\vartheta})\begin{bmatrix} \boldsymbol{\alpha}'\\ \boldsymbol{\beta}' \end{bmatrix}
        = \sum_{i=1}^A\alpha'_i>0,
    \end{align*}
    where $M(\cdot,\cdot)$ is as defined in the proof of Prop.~\ref{prop: Hinterp}.
    Combining these identities with \eqref{Positive test} gives $c>0$.
    Finally, since the Gaussian kernel is strictly positive and bounded above by one, we have $\mu_{P^*}(\vartheta_i)>0$ and $\mu_P(\vartheta_i)\le1$.
    Thus $c=\mu_P(\vartheta_i)-\mu_{P^*}(\vartheta_i)<\mu_P(\vartheta_i)\le1$, establishing $0<c<1$.
\end{proof}

\simpmaxprinciple*
\begin{proof}
    As before, we begin by defining the reduced measure of $P$ by merging particles at the same location and removing locations with zero total weight.
    Write $\vartheta_1,\ldots,\vartheta_A$ for its distinct locations and $\varsigma_1,\ldots,\varsigma_A$ for their nonzero total weights.
    By the stationarity conditions, we have that
    \begin{align}
        \Delta(\vartheta_j)=c, \qquad \Delta'(\vartheta_j)=0, \qquad j=1,\ldots,A.
    \end{align}
    Let $h=\Delta-c$. 
    To apply the variation diminishing property to this function, we first represent the constant $c$ as a kernel integral using translation invariance and normalisability.
    Let $\lambda$ denote Lebesgue measure and $\mathcal Z=\int_{\mathbb R}k(t,x)\,\mathrm dt$, which is finite, positive, and independent of $x$.
    If $c=0$, write
    \begin{equation*}
        h(x)=\int k(t,x)g(t)\,\mathrm d\sigma(t), \qquad g=\frac{\mathrm d(P-P^*)}{\mathrm d\sigma}, \qquad \sigma=|P|+P^*.
    \end{equation*}
    If $c>0$, instead write
    \begin{equation*}
        h(x)=\int k(t,x)g(t)\,\mathrm d\sigma(t), \qquad g=\frac{\mathrm d(P-P^*-\frac{c}{\mathcal Z}\lambda)}{\mathrm d\sigma}, \qquad \sigma=|P|+P^*+\lambda.
    \end{equation*}
    Since the positive part of $g$ is supported on at most $A$ distinct locations, we must have $V(g)\leq 2A$.
    Moreover, since $F(\mathbf w,\boldsymbol{x})>0$ and $P,P^*$ have equal mass, we have that $\int h\,\mathrm d(P-P^*)=\|\Delta\|_{\mathcal H_k}^2>0$, so $h$ is not identically zero.
    The variation diminishing property (cf. Def.~\ref{def: Variation deimishing}) therefore gives $Z(h)\leq V(g)$, where zeros are counted with multiplicity.
    On the other hand, the stationarity conditions imply that each $\vartheta_j$ is a zero of multiplicity at least two.
    Consequently,
    \begin{equation*}
        2A\leq Z(h)\leq V(g)\leq 2A.
    \end{equation*}
    Thus, all inequalities are in fact equalities, and the $\vartheta_j$ are the only zeros of $h$, each with multiplicity exactly two.
    Since the positive part of $g$ is supported on at most $A$ locations, equality $V(g)=2A$ requires the alternating sign pattern negative--positive--negative--$\cdots$--positive--negative.
    Thus, every location in the reduced measure must have strictly positive total weight.

    We next establish that $h\leq0$.
    Since all zeros of $h$ have multiplicity exactly two, $h$ cannot change sign.
    Furthermore, the stationarity conditions give $\int h\,\mathrm dP=0$, and therefore
    \begin{equation*}
        \int h\,\mathrm dP^*=-\int h\,\mathrm d(P-P^*)=-\|\Delta\|_{\mathcal H_k}^2<0.
    \end{equation*}
    It follows that $h\leq0$ everywhere.
    Finally, the KKT conditions give $h(x_i)=\Delta(x_i)-c\geq0$ for every particle, including those with zero individual weight.
    Since $h\leq0$, we must thus have $h(x_i)=0$.
    Since all zeros of $h$ occur at the active locations $\vartheta_1,\ldots,\vartheta_A$, every particle must coincide with one of these locations.
    Hence every distinct particle location is active and has positive total weight.
    Moreover, each particle location is a strict local maximum of $h$, and its zero has multiplicity exactly two.
    Thus, since $h=\Delta-c$, it follows that, for every $i=1,\ldots,N$,
    \begin{equation*}
        \Delta(x_i)=c, \qquad \Delta'(x_i)=0, \qquad \Delta''(x_i)<0. \qedhere
    \end{equation*}
\end{proof}

\neighbourhoodmiss*

\begin{proof}
    First, if $K>1$, it suffices to consider configurations satisfying $\max_j\min_i|x_i-x_j^*|<\frac12\min_{i\neq j}|x_i^*-x_j^*|$, since otherwise the desired separation bound already holds.
    For each target location $x_i^*$, let $\bar x_i$ denote a nearest particle location.
    The separation condition ensures that the $\bar x_i$ are distinct.
    By Prop.~\ref{prop: Hinterp}, we can construct coefficients $\boldsymbol{\alpha},\boldsymbol{\beta}$ such that the function
    $H(\cdot)=\sum_{i=1}^K(\alpha_i k(\cdot,\bar x_i)+\beta_i\partial_2k(\cdot,\bar x_i))$
    satisfies $H(x_i^*)=1$, $H'(x_i^*)=0$ for all $i=1,\ldots,K$, and $H(x)<1$ for all $x\notin\{x_1^*,\ldots,x_K^*\}$.
    Then, as in the proof of Prop.~\ref{prop: cpos}, we have
    \begin{equation*}
        \int H(x)\,\mathrm dP(x)=\sum_{j=1}^N w_jH(x_j)\leq1, \qquad \int H(x)\,\mathrm dP^*(x)=\sum_{i=1}^K w_i^*H(x_i^*)=1.
    \end{equation*}
    Hence, at a spurious stationary point,
    \begin{equation}
        \label{Negative test}
        \int H(x)\,\mathrm d(P-P^*)(x)\leq0.
    \end{equation}
    Meanwhile, Prop.~\ref{prop: simp max principle} gives $\Delta(\bar x_i)=c$ and $\Delta'(\bar x_i)=0$, and therefore
    \begin{equation*}
        \int H(x)\,\mathrm d(P-P^*)(x)=\sum_{i=1}^K\left(\alpha_i\Delta(\bar x_i)+\beta_i\Delta'(\bar x_i)\right)=c\sum_{i=1}^K\alpha_i.
    \end{equation*}
    Since $c>0$ by Prop.~\ref{prop: cpos}, it suffices to establish that $\sum_{i=1}^K\alpha_i>0$ whenever $\boldsymbol{\bar x}$ is sufficiently close to $\boldsymbol{x^*}$.
    At $\boldsymbol{\bar x}=\boldsymbol{x^*}$, the reproducing property and interpolation conditions give
    \begin{equation*}
        \|H\|_{\mathcal H}^2=\begin{bmatrix}\boldsymbol{\alpha}^\top & \boldsymbol{\beta}^\top\end{bmatrix}M(\boldsymbol{x^*},\boldsymbol{x^*})\begin{bmatrix}\boldsymbol{\alpha}\\\boldsymbol{\beta}\end{bmatrix}=\sum_{i=1}^K\alpha_i>0,
    \end{equation*}
    where $M(\cdot,\cdot)$ is as defined in the proof of Prop.~\ref{prop: Hinterp}.
    By continuity of the interpolation coefficients, this positivity persists in a sufficiently small target-dependent neighbourhood of $\boldsymbol{x^*}$.
    Within this neighbourhood, we therefore obtain a contradiction to \eqref{Negative test}.
    Consequently, no spurious stationary point can lie in this neighbourhood.

    We now make the neighbourhood explicit using standard matrix perturbation bounds.
    By the interpolation conditions,
    \begin{equation*}
        \begin{bmatrix}\boldsymbol{\alpha}\\\boldsymbol{\beta}\end{bmatrix}=M(\boldsymbol{x^*},\boldsymbol{\bar x})^{-1}\begin{bmatrix}\mathbf 1\\\mathbf 0\end{bmatrix},
    \end{equation*}
    where
    \begin{equation*}
        M(\boldsymbol{x^*},\boldsymbol{\bar x})=\begin{bmatrix}k(\boldsymbol{x^*},\boldsymbol{\bar x}) & \partial_2k(\boldsymbol{x^*},\boldsymbol{\bar x})\\ \partial_1k(\boldsymbol{x^*},\boldsymbol{\bar x}) & \partial_1\partial_2k(\boldsymbol{x^*},\boldsymbol{\bar x})\end{bmatrix}.
    \end{equation*}
    Thus, writing $e_1=\begin{bmatrix}\mathbf 1\\\mathbf 0\end{bmatrix}$, we have that
    \begin{equation}
        T(\boldsymbol{\bar x})=\sum_{i=1}^K\alpha_i=e_1^\top M(\boldsymbol{x^*},\boldsymbol{\bar x})^{-1}e_1.
    \end{equation}
    We seek an explicit radius $r(P^*)>0$ such that $T(\boldsymbol{\bar x})>0$ whenever $\delta(\boldsymbol{\bar x})=\|\boldsymbol{\bar x}-\boldsymbol{x^*}\|_\infty<r(P^*)$.
    For brevity, write $M(\boldsymbol{\bar x})=M(\boldsymbol{x^*},\boldsymbol{\bar x})$, $M^*=M(\boldsymbol{x^*},\boldsymbol{x^*})$, and $T^*=T(\boldsymbol{x^*})$.
    Notice that $T^*\geq1$.
    Define $E(\boldsymbol{\bar x})=M(\boldsymbol{\bar x})-M^*$ and $q(\boldsymbol{\bar x})=\|(M^*)^{-1}E(\boldsymbol{\bar x})\|_\infty$.
    By the Neumann series, whenever $q(\boldsymbol{\bar x})<1$,
    \begin{equation}
        \|(M^*)^{-1}-M(\boldsymbol{\bar x})^{-1}\|_\infty\leq\frac{q(\boldsymbol{\bar x})}{1-q(\boldsymbol{\bar x})}\|(M^*)^{-1}\|_\infty.
    \end{equation}
    Let $m^{-1}=\|(M^*)^{-1}\|_\infty$.
    Then $q(\boldsymbol{\bar x})\leq m^{-1}\|E(\boldsymbol{\bar x})\|_\infty$.
    By standard derivative bounds for the Gaussian kernel and the mean value theorem,
    \begin{equation*}
        \|E(\boldsymbol{\bar x})\|_\infty\leq6K\delta(\boldsymbol{\bar x}).
    \end{equation*}
    Consequently, Hölder's inequality gives
    \begin{align*}
        |T^*-T(\boldsymbol{\bar x})| &\leq\|e_1\|_1\|(M^*)^{-1}-M(\boldsymbol{\bar x})^{-1}\|_\infty\|e_1\|_\infty
        \leq\frac{6m^{-2}K^2\delta(\boldsymbol{\bar x})}{1-6m^{-1}K\delta(\boldsymbol{\bar x})}.
    \end{align*}
    Since $T^*\geq1$, it suffices to ensure that the final bound is strictly less than one.
    Rearranging the above, we have that
    \begin{equation}
        \delta(\boldsymbol{\bar x})<\frac{1}{6m^{-1}K(m^{-1}K+1)},
    \end{equation}
    which also guarantees $q(\boldsymbol{\bar x})<1$.
    Thus, we can take any $r(P^*)>0$ satisfying
    \begin{equation*}
        r(P^*)<\min\left\{\frac12\min_{i\neq j}|x_i^*-x_j^*|,\frac{1}{6m^{-1}K(m^{-1}K+1)}\right\},
    \end{equation*}
    where the first term is interpreted as $+\infty$ whenever $K=1$.
\end{proof}

\MMDFourierSplit*

\begin{proof}
    Let $\rho=P-P^*$ and $\Delta=\mu_P-\mu_{P^*}$.
    By the definition of the MMD, we have that
    \begin{equation*}
        \MMD^2(P,P^*)=\iint k(x,y)\,\mathrm d\rho(x)\,\mathrm d\rho(y)=\int\Delta(x)\,\mathrm d\rho(x).
    \end{equation*}
    Meanwhile, by Bochner's theorem, it follows that
    \begin{equation*}
        \MMD^2(P,P^*)=\frac{1}{(2\pi)^d}\int\hat k(\omega)|\hat\rho(\omega)|^2\,\mathrm d\omega, \qquad \frac{|\hat\Delta(\omega)|^2}{\hat k(\omega)}=\hat k(\omega)|\hat\rho(\omega)|^2.
    \end{equation*}
    Splitting the integral at frequency $\nu$, and using $|\hat\Delta(\omega)|\leq\|\Delta\|_1$ and $|\hat\rho(\omega)|\leq\|\rho\|_{\mathrm{TV}}$, gives
    \begin{align*}
        \MMD^2(P,P^*) &= \frac{1}{(2\pi)^d}\int_{|\omega|\leq\nu}\frac{|\hat\Delta(\omega)|^2}{\hat k(\omega)}\,\mathrm d\omega+\frac{1}{(2\pi)^d}\int_{|\omega|>\nu}\hat k(\omega)|\hat\rho(\omega)|^2\,\mathrm d\omega\\
        &\leq\frac{\|\Delta\|_1^2}{(2\pi)^d}\int_{|\omega|\leq\nu}\frac{1}{\hat k(\omega)}\,\mathrm d\omega+\frac{\|\rho\|_{\mathrm{TV}}^2}{(2\pi)^d}\int_{|\omega|>\nu}\hat k(\omega)\,\mathrm d\omega.
    \end{align*}
    For the one-dimensional Gaussian kernel with $2\sigma^2=1$, we have $\hat k(\omega)=\sqrt{\pi}e^{-\omega^2/4}$.
    Moreover, since $P$ and $P^*$ are probability measures, $\|\rho\|_{\mathrm{TV}}^2\leq4$.
    Substituting into the preceding bound gives
    \begin{align*}
        \MMD^2(P,P^*) &\leq\frac{\|\Delta\|_1^2}{2\pi\sqrt{\pi}}\int_{|\omega|\leq\nu}e^{\omega^2/4}\,\mathrm d\omega+\frac{2}{\sqrt{\pi}}\int_{|\omega|>\nu}e^{-\omega^2/4}\,\mathrm d\omega
        =\frac{\|\Delta\|_1^2}{\pi}\erfi\left(\frac{\nu}{2}\right)+4\erfc\left(\frac{\nu}{2}\right). \qedhere
    \end{align*}
\end{proof}

\Lcbound*
\begin{proof}
    \label{proof of c L1 bound}
    Since $\int_{\mathbb R}\Delta(x)\,\mathrm dx=0$, the positive and negative parts of $\Delta$ have equal integrals, and so
    \begin{equation}
        \label{eq:posdouble}
        \|\Delta\|_1=2\int_{-\infty}^\infty[\Delta]_+.
    \end{equation}
    Write $\mu_P(\cdot)=\sum_i w_i e^{-(x_i-\cdot)^2}$ and $\mu_{P^*}(\cdot)=\int e^{-(x^*-\cdot)^2}\,\mathrm dP^*(x^*)$.
    Let $L,R$ denote the leftmost and rightmost particle locations, and $A,B$ the endpoints of the target support.
    We split the integral above into seven regions:
    \begin{equation}
        (-\infty,L], \qquad [L, L+\delta_L], \qquad [L+\delta_L, A-\varepsilon_L], \qquad [A-\varepsilon_L, B+\varepsilon_R], \qquad [B+\varepsilon_R, R-\delta_R], \qquad [R-\delta_R, R], \qquad [R, \infty),
    \end{equation}
    treating any reversed intervals as empty.
    We impose $\delta_L,\delta_R\geq1/\sqrt{2}$ and define $D_L=A-L$ and $D_R=R-B$.
    If $D_L\leq\delta_L$, we set $\varepsilon_L=0$ and omit the contribution from $[L+\delta_L,A]$, and similarly on the right if $D_R\leq\delta_R$.
    We proceed as follows
    \begin{enumerate}[label={},leftmargin=2em]
        \item $(-\infty,L]$:
        By the maximum principle and the Gaussian tail bound,
        \begin{align*}
            \int_{-\infty}^L[\Delta]_+ &\leq\int_{-\infty}^L\min\{c,\mu_P(x)\}\,\mathrm dx
            \leq\int_{-\infty}^0\min\{c,e^{-x^2}\}\,\mathrm dx
            =c\sqrt{-\log c}+\frac{\sqrt{\pi}}{2}\erfc(\sqrt{-\log c}).
        \end{align*}

        \item $[L,L+\delta_L]$:
        Applying the maximum principle (cf. Prop.~\ref{prop: simp max principle}) gives
        \begin{equation*}
            \int_L^{L+\delta_L}[\Delta]_+\leq c\delta_L.
        \end{equation*}

        \item $[L+\delta_L,A-\varepsilon_L]$:
        Let $X\sim P$ and define $f(x)=xe^{-x^2}$.
        Since $f$ has a single maximum, we have
        \begin{align*}
            \mathbb P(L+\delta_L\leq X\leq A-\varepsilon_L) &\leq\frac{\mathbb E[f(X-L)]}{\min\{f(\delta_L),f(A-\varepsilon_L-L)\}}\\
            &=\frac{\sum_iw_if(x_i-L)}{\min\{f(\delta_L),f(A-\varepsilon_L-L)\}}=\frac{\int f(x^*-L)\,\mathrm dP^*(x^*)}{\min\{f(\delta_L),f(A-\varepsilon_L-L)\}},
        \end{align*}
        where the final equality follows from stationarity at $L$.
        Since $D_L\geq\delta_L\geq1/\sqrt{2}$, the function $f$ is decreasing on $[D_L,\infty)$, and hence $f(x^*-L)\leq f(D_L)$ for every $x^*\in[A,B]$.
        Moreover, since $D_L-\varepsilon_L\geq\delta_L$, we have $f(D_L-\varepsilon_L)\leq f(\delta_L)$.
        Consequently,
        \begin{equation*}
            \mathbb P(L+\delta_L\leq X\leq A-\varepsilon_L)\leq\frac{D_L}{D_L-\varepsilon_L}\exp(-2\varepsilon_LD_L+\varepsilon_L^2).
        \end{equation*}
        It follows that
        \begin{align*}
            \int_{L+\delta_L}^{A-\varepsilon_L}[\Delta]_+ \leq \frac{D_L}{D_L-\varepsilon_L}\exp(-2\varepsilon_LD_L+\varepsilon_L^2)\sqrt{\pi}+2c\sqrt{-\log c}+\sqrt{\pi}\erfc(\sqrt{-\log c}),
        \end{align*}
        where the final two terms account for contributions from particles outside the interval.
        These are obtained by separating the contributions to $\mu_P$ from particles inside and outside the interval, and applying the preceding Gaussian tail estimate to the latter.

        \item $[A-\varepsilon_L,B+\varepsilon_R]$:
        Applying the maximum principle (cf. Prop.~\ref{prop: simp max principle}) again gives
        \begin{equation*}
            \int_{A-\varepsilon_L}^{B+\varepsilon_R}[\Delta]_+\leq c(\varepsilon_L+\varepsilon_R+B-A).
        \end{equation*}
    \end{enumerate}
    The remaining intervals $[B+\varepsilon_R,R-\delta_R]$, $[R-\delta_R,R]$, and $[R,\infty)$ are bounded symmetrically.
    Combining these estimates yields
    \begin{equation}
        \begin{aligned}
            \int_{-\infty}^\infty[\Delta]_+ \leq{}&c(\delta_L+\delta_R+B-A)
            +6\left(c\sqrt{-\log c}+\frac{\sqrt{\pi}}{2}\erfc(\sqrt{-\log c})\right)\\
            &+c\varepsilon_L+\frac{D_L}{D_L-\varepsilon_L}\exp(-2\varepsilon_LD_L+\varepsilon_L^2)\sqrt{\pi}\\
            &+c\varepsilon_R+\frac{D_R}{D_R-\varepsilon_R}\exp(-2\varepsilon_RD_R+\varepsilon_R^2)\sqrt{\pi}.
        \end{aligned}
    \end{equation}
    We now choose $\delta_L=\delta_R=1/\sqrt{2}$ and bound the terms involving $\varepsilon_L$ and $\varepsilon_R$.
    Consider first
    \begin{equation*}
        c\varepsilon_L+\frac{D_L}{D_L-\varepsilon_L}\exp(-2\varepsilon_LD_L+\varepsilon_L^2)\sqrt{\pi}.
    \end{equation*}
    If $D_L^2\leq\frac12-\log c$, we can bound the contribution from $[L+\delta_L,A]$ directly using the maximum principle.
    Since this interval has length $D_L-1/\sqrt{2}$ when nonempty, its contribution is bounded by
    \begin{equation*}
        c\left(D_L-\frac{1}{\sqrt{2}}\right)\leq c\sqrt{-\log c}.
    \end{equation*}
    Otherwise, choose $\varepsilon_L=D_L-\sqrt{D_L^2+\log c}$.
    Then $-2\varepsilon_LD_L+\varepsilon_L^2=\log c$, and $\varepsilon_L\leq\sqrt{-\log c}$.
    Moreover, the ratio
    \begin{equation*}
        \frac{D_L}{D_L-\varepsilon_L}=\frac{D_L}{\sqrt{D_L^2+\log c}}
    \end{equation*}
    is decreasing in $D_L$, and is therefore bounded by its value at $D_L^2=\frac12-\log c$, namely $\sqrt{1-2\log c}$.
    Thus, in either case, the corresponding contribution is bounded by
    \begin{equation*}
        c\sqrt{-\log c}+\sqrt{\pi}c\sqrt{1-2\log c}.
    \end{equation*}
    Applying the same argument to the $\varepsilon_R$ term, and using $B-A\leq D$, gives
    \begin{equation*}
        \int_{-\infty}^\infty[\Delta]_+\leq(\sqrt{2}+D)c+8c\sqrt{-\log c}+3\sqrt{\pi}\erfc(\sqrt{-\log c})+2\sqrt{\pi}c\sqrt{1-2\log c}.
    \end{equation*}
    Combining this with \eqref{eq:posdouble} gives the stated bound.
\end{proof}

\MMDHoleBound*
\begin{proof}
    We begin with the variational characterisation of the MMD, which gives, for any nonzero $q\in\mathcal H$,
    \begin{equation*}
        \MMD(P,P^*)\geq\frac{\left|\sum_iw_iq(x_i)-\sum_jw_j^*q(x_j^*)\right|}{\|q\|_{\mathcal H}}.
    \end{equation*}
    Motivated by related constructions in \citet{decastro2020supermixsparseregularizationmixtures} and \citet{Costarelli2021}, we consider a Jackson-type kernel centred at a target location $x_j^*$,
    \begin{equation*}
        q_j(x)=\left(\frac{\sin(a(x-x_j^*))}{a(x-x_j^*)}\right)^{2m}.
    \end{equation*}
    We first bound its RKHS norm.
    Writing $\hat q_j$ and $\hat k$ for the Fourier transforms of $q_j$ and the Gaussian kernel, respectively, we have that
    \begin{equation*}
        \|q_j\|_{\mathcal H}^2=\frac{1}{2\pi}\int_{\mathbb R}\frac{|\hat q_j(\omega)|^2}{\hat k(\omega)}\,\mathrm d\omega.
    \end{equation*}
    Since $\hat k(\omega)=\sqrt{\pi}e^{-\omega^2/4}$, $\|\hat q_j\|_\infty\leq\pi/a$, and $\hat q_j$ is supported on $[-2ma,2ma]$ as the $2m$-fold convolution of a box function, we then obtain
    \begin{equation*}
        \|q_j\|_{\mathcal H}^2\leq\frac{\pi}{a^2}\erfi(ma)\leq\frac{\pi}{a^2}e^{m^2a^2}.
    \end{equation*}
    Thus $\|q_j\|_{\mathcal H}\leq C(a,m)$, with the bound independent of $j$ by translation invariance of the Gaussian kernel.
    We next use the fact that $q_j(x_j^*)=1$ and $q_j\geq0$ to obtain
    \begin{equation*}
        \sum_k w_k^*q_j(x_k^*)\geq w_j^*.
    \end{equation*}
    Suppose the target atom $x_j^*$ is missed by a neighbourhood of radius $r$.
    Then $q_j(x_i)\leq(ar)^{-2m}$ for every particle location $x_i$, and hence, since the weights sum to one,
    \begin{equation*}
        \sum_iw_iq_j(x_i)\leq(ar)^{-2m}.
    \end{equation*}
    Consequently, letting $w^*=\min_jw_j^*$, due to the variational bound we arrive at
    \begin{equation*}
        \MMD(P,P^*)\geq\frac{[w_j^*-(ar)^{-2m}]_+}{\|q_j\|_{\mathcal H}}\geq\frac{[w^*-(ar)^{-2m}]_+}{C(a,m)}. \qedhere
    \end{equation*}
\end{proof}

\NUpperBound*
\begin{proof}
    Consider an interval of radius $r$ around a particle location $x_i$, and let $p_{i,r}=P([x_i-r,x_i+r])$.
    Since $\mu_P(x_i)=\mu_{P^*}(x_i)+c\geq c$, we have that
    \begin{equation*}
        \mu_P(x_i)=\int e^{-(x_i-y)^2}\,\mathrm dP(y)\leq p_{i,r}+(1-p_{i,r})e^{-r^2}.
    \end{equation*}
    Thus, in particular, 
    \begin{equation*}
        p_{i,r}\geq\frac{c-e^{-r^2}}{1-e^{-r^2}}=:p_r.
    \end{equation*}
    Now place an interval of radius $r$ around each particle location, and let $C_1,\ldots,C_S$ denote the connected components of their union.
    Within each component, construct a maximal $2r$-separated packing by selecting particle locations from left to right, choosing each successive location as soon as it lies more than $2r$ from the preceding one.
    Let $B_s$ denote the number of selected locations in component $C_s$.
    Since the corresponding intervals are disjoint and each has probability mass at least $p_r$, we have
    \begin{equation*}
        1\leq S\leq\sum_{s=1}^S B_s\leq\frac{1}{p_r}.
    \end{equation*}
    Moreover, by maximality of the packing, each component $C_s$ has diameter at most $4B_sr$.
    Let $d\leq r$ satisfy $c>e^{-d^2}$.
    For each component $C_s$, choose $z_0$ at distance $r$ to the left of its leftmost particle location.
    By construction, $z_0$ lies at distance at least $d$ from every particle location.
    Hence $\mu_P(z_0)\leq e^{-d^2}$, and therefore, recalling that $h=\Delta-c$,
    \begin{equation*}
        |h(z_0)|=|c-\Delta(z_0)|\geq c-e^{-d^2}.
    \end{equation*}
    Let $\bar h_{z_0}$ denote the analytic continuation of $h$ to the complex plane, recentred at $z_0$.
    As in the proof of Theorem~\ref{thm: relaxepsbound}, following the approach of \citet{gröchenig2018sharpresultssamplingderivatives}, we have
    \begin{equation*}
        |\bar h_{z_0}(x+iy)|\leq c+2e^{y^2}\leq(c+2)e^{y^2}.
    \end{equation*}
    Now let $\delta>1$ and $R_s\geq(4B_s-1)r$, so that the disc of radius $R_s$ centred at $z_0$ contains all particle locations in $C_s$.
    Applying Prop.~\ref{prop: jenzerocount} with $A_f=c+2$ and $B_f=1$ gives
    \begin{equation*}
        \mathbf n_{\bar h_{z_0}}(R_s)\leq\frac{\log\left(\frac{c+2}{c-e^{-d^2}}\right)+\frac12\delta^2R_s^2}{\log\delta}.
    \end{equation*}
    Since every distinct particle location contributes a zero of multiplicity two, summing over the components gives at least $2A$ zeros, counted with multiplicity.
    Consequently, using the same $\delta$ for every component, we obtain
    \begin{equation*}
        A\leq\frac{S\log\left(\frac{c+2}{c-e^{-d^2}}\right)+\frac12\delta^2\sum_{s=1}^S R_s^2}{2\log\delta}.
    \end{equation*}
    We now choose $\delta=\sqrt e$, $d=r$, $r=\sqrt{\log(2)-\log c}$, and $R_s=(4B_s-1)r$.
    We then have that
    $
        \sum_{s=1}^S R_s^2=r^2\sum_{s=1}^S(4B_s-1)^2\leq r^2(4\sum_{s=1}^S B_s-S)^2\leq r^2(\frac4{p_r}-1)^2.
    $
    With these choices, $p_r=c/(2-c)$, so $S\leq 1/p_r=(2-c)/c$.
    Substituting these bounds into the preceding estimate gives
    \begin{equation*}
        A\leq\frac{2-c}{c}\log\left(\frac{2(c+2)}{c}\right)+\frac e2\left(\frac{8-5c}{c}\right)^2\log\left(\frac2c\right). \qedhere
    \end{equation*}
\end{proof}

\MMDSimplexBound*
\begin{proof}
    By Props.~\ref{prop: neighbourhoodmiss} and \ref{prop: MMDholebound}, there exists a constant $L>0$, depending only on $P^*$, such that every spurious stationary point satisfies $\MMD(P,P^*)\geq L$.
    In Prop.~\ref{prop: MMDFourierSplit}, choose $\nu>0$ such that $4\erfc(\frac{\nu}{2})<\frac{L^2}{2}$.
    Since the bound in Prop.~\ref{prop: Lcbound} tends to zero as $c\to0$, we can choose $\underline c>0$ such that, whenever $c\leq\underline c$, it holds that
    \begin{equation}
        \|\Delta\|_1\leq L\sqrt{\frac{\pi}{2\erfi(\frac{\nu}{2})}}.
    \end{equation}
    By Prop.~\ref{prop: MMDFourierSplit}, this would imply $\MMD(P,P^*)<L$, contradicting the lower bound.
    Thus, every spurious stationary point must satisfy $c>\underline c$.
    Applying Prop.~\ref{prop: NUpperBound} then gives a finite upper bound on the number of distinct active particle locations, depending only on $P^*$.
    If $N$ exceeds this bound, at least two particles must share a location.
    By Props.~\ref{prop: unstablecollisionpoints} and \ref{prop: collisionpoints}, such configurations cannot be local minima and are unstable under the unweighted particle gradient flow.
\end{proof}

\MMDEpsSimplexBound*
\begin{proof}
    Fix $D>0$ and $\varepsilon>0$, and consider any stationary point with $\MMD(P,P^*)\geq\varepsilon$, where $P^*$ has compact support of diameter at most $D$.
    In Prop.~\ref{prop: MMDFourierSplit}, choose $\nu>0$ such that $4\erfc(\frac{\nu}{2})<\frac{\varepsilon^2}{2}$.
    Since the bound in Prop.~\ref{prop: Lcbound} tends to zero as $c\to0$, uniformly over such targets, we can choose $\underline c>0$, depending only on $D$ and $\varepsilon$, such that, whenever $c\leq\underline c$, it holds that
    \begin{equation}
        \|\Delta\|_1\leq\varepsilon\sqrt{\frac{\pi}{2\erfi(\frac{\nu}{2})}}.
    \end{equation}
    By Prop.~\ref{prop: MMDFourierSplit}, this would imply $\MMD(P,P^*)<\varepsilon$, contradicting our assumption.
    Thus every stationary point with $\MMD(P,P^*)\geq\varepsilon$ must satisfy $c>\underline c$.
    Applying Prop.~\ref{prop: NUpperBound} then gives a finite upper bound on the number of distinct active particle locations, depending only on $D$ and $\varepsilon$.
    If $N$ exceeds this bound, at least two particles must share a location.
    By Props.~\ref{prop: unstablecollisionpoints} and \ref{prop: collisionpoints}, such configurations cannot be local minima and are unstable under the unweighted particle gradient flow.
\end{proof}

Optimising the bounds yields an asymptotic particle threshold of $N_\varepsilon(D)=O(\varepsilon^{-4}\log(1/\varepsilon))$ as $\varepsilon\to0$, for fixed $D$.
This is substantially worse than the relaxed-case rate, reflecting the larger number of particles required by our current simplex guarantees.
We briefly justify this rate.
As $x\to\infty$, standard asymptotic estimates \citep[Eq. 7.12.1]{NIST:DLMF} give
\begin{equation*}
    \erfc(x)=O\Big(\frac{e^{-x^2}}{x}\Big), \qquad \erfi(x)=O\Big(\frac{e^{x^2}}{x}\Big).
\end{equation*}
By Prop.~\ref{prop: Lcbound}, we have $\|\Delta\|_1\leq L_1(c)$, where $L_1(c)=O(c\sqrt{-\log c})$ as $c\to0$, for fixed $D$.
Choosing $\nu=2\sqrt{-\log L_1(c)}$ for sufficiently small $c$ in Prop.~\ref{prop: MMDFourierSplit} gives $\MMD^2(P,P^*)=O(c)$.
Thus, whenever $\MMD(P,P^*)\geq\varepsilon$, we must have $c=\Omega(\varepsilon^2)$.
Finally, Prop.~\ref{prop: NUpperBound} bounds the number of distinct active particle locations by $O(c^{-2}\log(1/c))$, yielding the stated threshold $N_\varepsilon(D)=O(\varepsilon^{-4}\log(1/\varepsilon))$.

\section{Preliminary analysis of asymptotic stationarity}
Our main results concern stationary configurations with finite particle locations.
A natural question is whether they extend to asymptotically stationary sequences, in which particle locations may escape to $\pm\infty$.
Such an extension would remove the need for compactness assumptions on the particle locations.
Many of our arguments carry over essentially unchanged, but some require modification.
Below, we outline these extensions and identify an obstruction to extending the finite-position analysis without studying the underlying particle dynamics.
We focus on the simplex-constrained setting.

Throughout, consider a sequence $\{(\mathbf w^n,\boldsymbol{x}^n)\}_{n=1}^\infty$ and constants $c_n$ satisfying, for $j=1,\dots,N$, 
\begin{align}
    [c_n-\Delta_n(x_j^n)]_+\to0, \qquad w_j^n(\Delta_n(x_j^n)-c_n)\to0, \qquad w_j^n\Delta_n'(x_j^n)\to0,
\end{align}
where $\Delta_n$ denotes the witness function associated with the $n$th configuration.
This is a strong asymptotic KKT (s-AKKT) sequence in the sense of \citet{börgens2020newconstraintqualificationsoptimization}; note that the distinction between strong and weak conditions is immaterial in our finite-dimensional setting.
Importantly, these conditions describe asymptotic stationarity of the joint weight and location optimisation problem; the weights need not be fully optimised at each step.
We may assume that at least one particle location escapes to $\pm\infty$ with nonzero limiting weight, since otherwise the finite-position analysis applies.

To study such sequences, we must track not only the particle locations, but also their pairwise distances.
Let $\bar{\mathbb R}=\mathbb R\cup\{-\infty,+\infty\}$ denote the two-point compactification of the real line.
Embed the sequence $\boldsymbol{x}^n\in\mathbb R^N$ into $\bar{\mathbb R}^N\times\bar{\mathbb R}^{N(N-1)/2}$ via
\begin{align*}
    \boldsymbol{x}^n \longmapsto (\boldsymbol{x}^n,\mathbf d^n), \qquad
    d_{ij}^n = |x_i^n-x_j^n|.
\end{align*}
This extends the usual compactification by retaining information about the distances between particles escaping to infinity.
By Tychonoff's theorem, the resulting space is compact, so we may pass to a convergent subsequence.
Since the simplex is also compact, we may assume that $(\mathbf w^n,\boldsymbol{x}^n,\mathbf d^n)\to(\mathbf w^\infty,\boldsymbol{x}^\infty,\mathbf d^\infty)$.
Moreover, $c_n$ is bounded by the simplex constraints and the asymptotic stationarity conditions, so, passing to a further subsequence if necessary, we may assume $c_n\to c$.
The limiting stationarity conditions are then given by
\begin{align}
\label{cAStationary0}
    w_j^\infty(\Delta_\infty(x_j^\infty)-c)=0, \qquad \Delta_\infty(x_j^\infty)\geq c, \qquad w_j^\infty\Delta_\infty'(x_j^\infty)=0, \qquad j=1,\ldots,N,
\end{align}
where evaluations at infinite locations are understood as limits along the original sequence.
We first show that particles with nonzero limiting weights cannot remain at a finite, positive distance from one another if at least one escapes to infinity.
\begin{proposition}
\label{prop: infsep}
Suppose $w_i^\infty,w_j^\infty$ are nonzero.
If $x_i^\infty\in\{\pm\infty\}$, then $d_{ij}^\infty\in\{0,\infty\}$.
\end{proposition}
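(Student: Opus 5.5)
The plan is to argue by contradiction, using only the limiting derivative condition $w_j^\infty\Delta_\infty'(x_j^\infty)=0$ from \eqref{cAStationary0}. The key point is that far from the target, the witness near an escaping cluster is a \emph{positive} Gaussian mixture. For such a mixture, the leftmost active atom cannot be a critical point if another active atom sits at a finite positive distance to its right.

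\textbf{Step 1: set up the cluster.} Suppose $x_i^\infty=+\infty$ (the case $-\infty$ is symmetric) and $0<d_{ij}^\infty<\infty$. Then $x_j^n-x_i^n$ stays bounded, so $x_j^\infty=+\infty$ as well. Let $\mathcal C$ be the set of indices $k$ with $d_{ik}^\infty<\infty$. For $k\in\mathcal C$ put $\tau_k=\lim_n(x_k^n-x_i^n)\in\mathbb R$; this limit exists because $d_{ik}^n$ converges and, after passing to a subsequence, the signs of $x_k^n-x_i^n$ are fixed. Every $k\notin\mathcal C$ satisfies $|x_k^n-x_i^n|\to\infty$.

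\textbf{Step 2: local limit of the witness.} For each fixed $t$, I will show
\begin{equation*}
    \Delta_n'(x_i^n+t)\;\longrightarrow\; G'(t),\qquad G(t)=\sum_{k\in\mathcal C}w_k^\infty e^{-(t-\tau_k)^2}.
\end{equation*}
This uses three facts.
\begin{itemize}
    \item For the target term, $\mu_{P^*}'(x)=\int 2(y-x)e^{-(x-y)^2}\,\mathrm dP^*(y)\to0$ as $x\to+\infty$ by dominated convergence, since $|2(y-x)e^{-(x-y)^2}|$ is bounded and tends to zero pointwise.
    \item Particles $k\notin\mathcal C$ contribute at most $w_k^n\sup_{|s|\geq|x_k^n-x_i^n|-|t|}|2se^{-s^2}|\to0$.
    \item Particles in $\mathcal C$ converge because $w_k^n\to w_k^\infty$ and $x_k^n-x_i^n\to\tau_k$.
\end{itemize}
In particular, for any $\ell\in\mathcal C$, the value $\Delta_\infty'(x_\ell^\infty)$, understood as the limit along the sequence, equals $G'(\tau_\ell)$.

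\textbf{Step 3: the leftmost active atom.} Let $\mathcal C^+=\{k\in\mathcal C: w_k^\infty>0\}$. It contains $i$ and $j$, which have distinct offsets $0$ and $\tau_j\neq0$. Choose $\ell\in\mathcal C^+$ with minimal offset $\tau_\ell$. Then
\begin{equation*}
    G'(\tau_\ell)=\sum_{k\in\mathcal C^+}2w_k^\infty(\tau_k-\tau_\ell)e^{-(\tau_\ell-\tau_k)^2}.
\end{equation*}
Every term is nonnegative. At least one term is strictly positive, because some active atom ($i$ or $j$) has offset strictly greater than $\tau_\ell$. Hence $G'(\tau_\ell)>0$. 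On the other hand, $w_\ell^\infty>0$ and the limiting condition $w_\ell^\infty\Delta_\infty'(x_\ell^\infty)=0$ force $G'(\tau_\ell)=0$. This contradiction gives $d_{ij}^\infty\in\{0,\infty\}$.

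\textbf{Main obstacle.} The delicate step is Step 2. It requires justifying that evaluations "at infinity" in \eqref{cAStationary0} coincide with evaluations of the recentred limit $G$. It also requires that the uniform Gaussian derivative decay kills both the target and the far-away particles, which needs only tightness of $P^*$. The sign argument in Step 3 is then elementary. Notably, unlike the finite-position analysis, it needs neither the sign of $c$ nor the variation diminishing property, because the target mass has disappeared from the picture.
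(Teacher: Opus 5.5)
Your proof is correct and follows the same route as the paper's. You isolate the group of escaping particles at bounded mutual distance, note that the target and all far-away particles drop out of $\Delta_n'$ in the limit, and use positivity of the weights at an extreme active atom of the limiting Gaussian mixture; you take the leftmost atom rather than the paper's outermost one, which works equally well since $G'$ is nonzero at either end. The one point to tighten is that Step~2 should be stated as locally uniform convergence in $t$, which your bullet bounds already give, so that you can evaluate at the moving offsets $t_n=x_\ell^n-x_i^n\to\tau_\ell$ rather than only at a fixed $t$.
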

\begin{proof}
    The conclusion is immediate if $x_j^\infty$ is finite or if the particles escape to opposite infinities.
    Otherwise, suppose that some particles escape to the same infinity while maintaining finite pairwise distances, with at least two distinct limiting locations.
    Consider an outermost particle with positive limiting weight within such a group.
    In the limit, interactions with particles outside the group vanish, while the particles at distinct locations within the group exert an outward force on the outermost particle.
    Since the weights are nonnegative, these contributions cannot cancel, and hence its limiting derivative cannot vanish.
    This contradicts the stationarity condition \eqref{cAStationary0}.
\end{proof}
We can therefore define a reduced asymptotic stationary configuration by removing zero-weight particles and merging particles for which $d_{ij}^\infty=0$.
Write $\vartheta_1^\infty,\ldots,\vartheta_A^\infty$ for its distinct locations and $\varsigma_1^\infty,\ldots,\varsigma_A^\infty$ for their positive total weights.
By Prop.~\ref{prop: infsep}, distinct locations at infinity are infinitely separated, so the distance coordinates are no longer needed.
For the reduced configuration, the stationarity conditions give
\begin{equation}
    \label{aStationarycred}
    \Delta_\infty(\vartheta_j^\infty)=c, \qquad \Delta_\infty'(\vartheta_j^\infty)=0, \qquad j=1,\ldots,A.
\end{equation}
We first revisit Prop.~\ref{prop: cpos}.
Its original interpolation argument does not extend directly to infinite locations, since kernel sections and their derivatives centred at escaping particles vanish at fixed finite points.

\begin{proposition}
\label{prop: asymptoticposweight}
Suppose we are at a spurious asymptotically stationary configuration of $F(\mathbf w,\boldsymbol{x})$.
Then $c>0$.
Moreover, if $\vartheta_i^\infty\in\{\pm\infty\}$, then $\varsigma_i^\infty=c$.
\end{proposition}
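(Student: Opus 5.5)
I would prove the second claim first, since it is purely local, and then use it to obtain $c>0$.

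For the second claim, fix a reduced location $\vartheta_i^\infty=+\infty$ (the case $-\infty$ is symmetric). Let $I$ be the set of original indices merged into it, so that $\sum_{j\in I}w_j^n\to\varsigma_i^\infty$. By Prop.~\ref{prop: infsep}, every other particle with nonzero limiting weight satisfies $d_{jl}^n\to\infty$ for $j\in I$. The target $P^*$ is fixed, so $\mu_{P^*}(x_j^n)\to0$. Particles with vanishing limiting weight contribute $o(1)$ to $\mu_{P^n}(x_j^n)$, because the kernel is bounded by one. Particles in the group $I$ itself are at mutual distance $d^n\to0$, so $k(x_j^n,x_l^n)\to1$ for $j,l\in I$. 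Hence
\begin{equation*}
    \Delta_n(x_j^n)=\sum_{l\in I}w_l^n k(x_j^n,x_l^n)+o(1)-\mu_{P^*}(x_j^n)\to\varsigma_i^\infty .
\end{equation*}
Choose $j\in I$ with $w_j^\infty>0$. The limiting KKT condition $w_j^\infty(\Delta_\infty(x_j^\infty)-c)=0$ from \eqref{cAStationary0} then gives $\varsigma_i^\infty=c$. The only care needed is uniformity: the $o(1)$ terms must vanish along the subsequence, and the compactified distance coordinates supply exactly this.

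For $c>0$, I split into cases. If some reduced location is infinite, the previous paragraph gives $c=\varsigma_i^\infty>0$, since reduced weights are positive by construction. Otherwise all active locations are finite, but mass may still leak: particles with $w_j^n\to0$ may escape, and the limiting measure is then the finite reduced configuration, a probability measure. In this case I would mimic Prop.~\ref{prop: cpos}, applying Prop.~\ref{prop: Hinterp} to the finite active locations $\vartheta_1^\infty,\ldots,\vartheta_A^\infty$. Two facts are needed.
\begin{itemize}
    \item The limiting configuration is spurious. If $\Delta_\infty\equiv0$, then $F(\mathbf w^n,\boldsymbol x^n)\to0$; I would interpret ``spurious'' as excluding this (that is, $\liminf F>0$), which gives $\supp(P^*)\not\subseteq\supp(P^\infty)$ as in the proof of Prop.~\ref{prop: cpos}.
    \item The stationarity equations pass to the limit, which \eqref{aStationarycred} provides.
\end{itemize}
With these, $\int H\,\mathrm d(P^\infty-P^*)=c\sum\alpha_i'=c\|H\|^2_{\mathcal H}$ while the left side is positive, so $c>0$.

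The main obstacle is the mixed case, in which finite and infinite active locations coexist. The pure-finite argument is not needed there, because the infinite group alone already yields $c=\varsigma_i^\infty>0$. So the delicate point is ensuring that the case split is exhaustive, together with justifying the limit $\Delta_n(x_j^n)\to\varsigma_i^\infty$ when several escaping subgroups interact. I expect Prop.~\ref{prop: infsep} to dispose of the latter, since distinct escaping subgroups separate infinitely.
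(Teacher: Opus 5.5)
Your proof is correct and follows the paper's argument. For an escaping active group, Prop.~\ref{prop: infsep} isolates it from all other active particles, so the witness at that group tends to its total weight $\varsigma_i^\infty$, and the KKT equality forces $\varsigma_i^\infty=c>0$. Your pure-finite case is the only addition: the paper excludes it by the standing assumption that some particle escapes with nonzero limiting weight (``otherwise the finite-position analysis applies''), and your certificate argument via Prop.~\ref{prop: cpos} soundly makes that reduction explicit.
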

\begin{proof}
    By assumption, at least one particle location escapes to infinity with positive limiting weight.
    Let $\vartheta_i^\infty\in\{\pm\infty\}$ denote such a location.
    By Prop.~\ref{prop: infsep}, all other distinct locations with nonzero limiting weight are infinitely separated from it.
    Consequently, for the Gaussian kernel,
    \begin{equation*}
        \Delta_\infty(\vartheta_i^\infty)=\varsigma_i^\infty k(\vartheta_i^\infty,\vartheta_i^\infty)=\varsigma_i^\infty.
    \end{equation*}
    The stationarity condition \eqref{aStationarycred} therefore gives $\varsigma_i^\infty=c$.
    Since $\varsigma_i^\infty>0$, it follows that $c>0$.
\end{proof}
\vspace{-2mm}
Applying the argument of Prop.~\ref{prop: simp max principle} to the finite-position particles recovers the corresponding maximum-principle conclusions.
The collision result in Prop.~\ref{prop: collisionpoints} also extends essentially unchanged, while Prop.~\ref{prop: neighbourhoodmiss} holds in a stronger form.

We can also separate the contributions to the MMD from particles at finite and infinite locations.
Suppose there are $I$ escaping locations with nonzero total weight.
By Props.~\ref{prop: infsep} and \ref{prop: asymptoticposweight}, these locations are infinitely separated and each has weight $c$.
Their contribution to the limiting squared MMD is therefore $Ic^2\leq c$, since $Ic\leq1$ by the simplex constraint.
Thus, these contributions are not themselves problematic.

The main obstruction arises when extending Prop.~\ref{prop: Lcbound}, whose proof relies on the weights summing to one.
A corresponding argument in the asymptotic setting introduces an additional term $Ic$, which need not vanish as $c\to0$.
For example, consider $N$ particles with equal weights $1/N$ and locations $x_j^n=jn$.
As $n\to\infty$, this sequence satisfies the asymptotic stationarity conditions with $c=1/N$, while all $I=N$ particles escape, giving $Ic=1$.
Although such a configuration need not arise along the trajectories of a particular particle flow, it demonstrates that limiting stationarity alone is insufficient to control particle escape.

\section{Numerical illustration}
\label{app:numerics}
In this appendix we provide a numerical illustration of the results established in the main text.
We emphasise that these experiments are illustrative, since a finite numerical search ultimately cannot establish the absence of additional local minima.
Nonetheless, the observed landscapes are qualitatively consistent with our theory.
This is particularly true in the well-specified relaxed setting, where the disappearance of positive-error local minima once $N\geq K$ closely mirrors the sharp guarantee of Theorem~\ref{thm: relaxbound}.
The experiments also illustrate the broader tendency for the local minima encountered numerically to improve as the number of particles increases.

\textbf{Well-specified targets.} For the well-specified setting, we sample 30 three-atom targets with locations $(0,U,24)$, where $U\sim\operatorname{Unif}(0,24)$, and independent weights drawn from $\operatorname{Dirichlet}(1,1,1)$. We use this set of 30 throughout. 
This gives targets with varying atom separations and masses.
For each target, we use $N=1,\dots,9$ particles in both formulations.
Since these targets have $K=3$ atoms, Theorem~\ref{thm: relaxbound} guarantees that the relaxed problem has no spurious minima for $N\geq 3$.

\textbf{Compactly supported targets}. For the general compactly supported setting, we additionally consider the continuous target $P^* =\operatorname{Unif}[0,12]$, with $N=1,\ldots,16$ in both formulations.
This provides a numerical illustration of the approximation regime of Theorems~\ref{thm: relaxepsbound} and~\ref{thm: SimpEpsBound}.
All kernel integrals involving the uniform target are evaluated analytically, without discretising $P^*$.

\textbf{Initialisation.} For each problem, we use 96 initialisations, with the same position initialisations used for the simplex and relaxed formulations.
The weights are initialised from $\operatorname{Dirichlet}(3,\ldots,3)$ to reduce the frequency of boundary starts.
For the positions, let $l$, $m$, and $r$ denote left, middle, and right reference locations. 
For the atomic targets we take $(l,m,r)=(0,U,24)$, corresponding to the three target atoms, while for the uniform target we take $(l,m,r)=(0,6,12)$, corresponding to the two support endpoints and their midpoint.
We then use 16 starts from each of six position schemes: independent uniform draws on $[l-1,r+1]$, $[l-0.5,m+0.5]$, or $[m-0.5,r+0.5]$; equally spaced grids on $[l,m]$ or $[m,r]$ with independent $\mathcal N(0,0.25^2)$ perturbations; and independent uniform draws from ${l,m,r}$ with independent $\mathcal N(0,0.5^2)$ perturbations.
For the atomic targets, the two subinterval schemes are included specifically to search for local minima which fail to represent one of the target atoms, as in the geometric separation underlying Proposition~\ref{prop: neighbourhoodmiss}.

\textbf{Optimisation and certification.} 
We optimise using Sequential Least Squares Programming (SLSQP) with analytic derivatives, tolerance $10^{-13}$, and at most 1400 iterations.
For the atomic targets, the candidate solutions returned by SLSQP are subsequently refined using Newton's method at 100 decimal digits.
For the uniform target and each $N\geq2$, we first screen the SLSQP solutions using preliminary gradient and Hessian checks.
We then refine the remaining candidate with the largest MMD.
We retain positive-error points only when the gradient infinity norm is below $10^{-65}$, the relative Newton correction is below $10^{-45}$, and the joint Hessian is positive definite, restricted to mass-preserving variations in the simplex case.
For the simplex problem, we additionally require all weights to be strictly positive.
Every retained point is checked again at 130 decimal digits.
All computations are carried out in Python 3.13.1 \citep{python}, NumPy \citep{numpy}, SciPy \citep{scipy}, gmpy2 \citep{gmpy2} and mpmath \citep{mpmath} on an arm64 macOS CPU. 
Plots were produced with Matplotlib \citep{matplotlib}. NumPy, SciPy and mpmath use BSD-3-Clause licences; gmpy2 uses an LGPL-3.0-or-later licence; Matplotlib uses its PSF-based licence. Python uses the Python Software Foundation License Version 2.

\textbf{Results.}
Our numerical results are qualitatively consistent with our theory. 
This is particularly clear in the well-specified relaxed problem, where positive-error local minima are found for $N<3$, whilst for every $N\geq K=3$ the numerical search retains only exact, or numerically indistinguishable from exact, minima.
This is consistent with Theorem~\ref{thm: relaxbound}, which gives the sharp threshold $N\geq K=3$ in this setting.
In the simplex case, positive-error minima persist for some targets throughout the range of particle numbers considered.
However, as $N$ increases, no positive-error minima are found for an increasing proportion of targets.
This is consistent with Theorem~\ref{thm: SimpBound}, which gives a larger, target-dependent sufficient threshold than in the relaxed case.
For the continuous uniform target (Figure~\ref{fig}b), the MMD of the retained minima decreases steadily with $N$ in both formulations.
This is qualitatively consistent with Theorems~\ref{thm: relaxepsbound} and~\ref{thm: SimpEpsBound}, which show that sufficiently many particles rule out local minima whose MMD remains bounded away from zero.
\begin{figure}[htbp]
\centering
\includegraphics[width=\textwidth, trim={0 4mm 0 2mm}, clip]{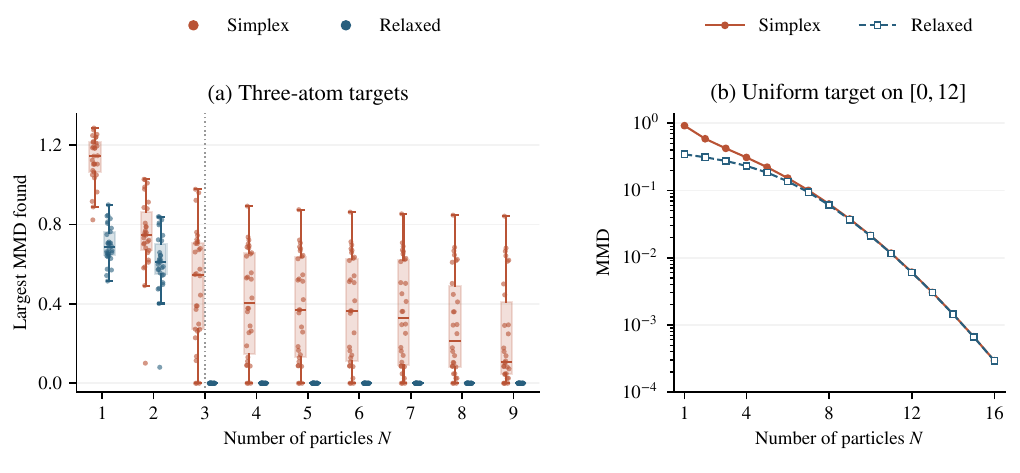}
\vspace{-6mm}
\caption{A numerical illustration of the finite-particle landscapes.
(a) Largest MMD among the retained local minima, for each of the 30 well-specified three-atom targets.
Dots correspond to individual targets, while boxes show medians and quartiles with $1.5$-IQR whiskers.
For $N\geq3$, the target can be represented exactly; hence a plotted value of zero means that the numerical search retained no positive-error local minimum.
(b) MMD of the retained local minima for the continuous target $P^*=\operatorname{Unif}[0,12]$, shown on a logarithmic scale.
The $N=1$ solutions are obtained analytically.}
\label{fig}
\end{figure}

\end{document}